\begin{document}

\title[]{Convergence and Recovery Guarantees of Unsupervised Neural Networks for Inverse Problems}

\author*[]{\fnm{Nathan} \sur{Buskulic}}\email{nathan.buskulic@unicaen.fr}
\author[]{\fnm{Jalal} \sur{Fadili}}\email{Jalal.Fadili@ensicaen.fr}
\author[]{\fnm{Yvain} \sur{Qu\'eau}}\email{yvain.queau@ensicaen.fr}

\affil[]{\orgdiv{Greyc}, \orgname{Normandie Univ., UNICAEN, ENSICAEN, CNRS}, \orgaddress{\street{6 Boulevard Maréchal Juin}, \city{Caen}, \postcode{14000}, \country{France}}}

\abstract{
Neural networks have become a prominent approach to solve inverse problems in recent years. While a plethora of such methods was developed to solve inverse problems empirically, we are still lacking clear theoretical guarantees for these methods. On the other hand, many works proved convergence to optimal solutions of neural networks in a more general setting using overparametrization as a way to control the Neural Tangent Kernel. In this work we investigate how to bridge these two worlds and we provide deterministic convergence and recovery guarantees for the class of unsupervised feedforward multilayer neural networks trained to solve inverse problems. We also derive overparametrization bounds under which a two-layers Deep Inverse Prior network with smooth activation function will benefit from our guarantees.}


\keywords{Inverse problems, Deep Image/Inverse Prior, Overparametrization, Gradient flow, Unsupervised learning}

\maketitle

\section{Introduction}
%

\subsection{Problem Statement}
An inverse problem consists in reliably recovering a signal $\xvc \in \R^n$ from noisy indirect observations
\begin{align}\label{eq:forward}
\yv = \fopnl(\xvc) + \veps ,
\end{align}
where $\yv \in \R^m$ is the observation, $\fopnl: \R^n \to \R^m$ is a forward operator, and $\veps$ stands for some additive noise. We will denote by $\yvc = \fopnl(\xvc)$ the ideal observations i.e., those obtained in the absence of noise. 

In recent years, the use of sophisticated machine learning algorithms, including deep learning, to solve inverse problems has gained a lot of momentum and provides promising results; see e.g., the reviews \cite{arridge_solving_2019,ongie_deep_2020}. The general framework of these methods is to optimize a generator network $\mathbf{g}: (\uv,\thetav) \in \R^d\times \R^p \mapsto \xv \in \R^n$, with some activation function~$\phi$, to transform a given input $\uv \in \R^d$ into a vector $\xv \in \R^n$. The parameters $\thetav$ of the network are optimized via (possibly stochastic) gradient descent to minimize a loss function $\lossy:\funspacedef{\R^m}{\R_+}, \yvt \mapsto \lossy(\yvt)$ which measures the discrepancy between the observation $\yv$ and the solution $\yvt = \fopnl(\gdipt)$ generated by the network at time $t \geq 0$.

Theoretical understanding of recovery and convergence guarantees for deep learning-based methods is of paramount importance to make their routine use in critical applications reliable \cite{mukherjee2023learned}. While there is a considerable amount of work on the understanding of optimization dynamics of neural network training, especially through the lens of overparametrization, recovery guarantees when using neural networks for inverse problem remains elusive. Some attempts have been made in that direction but they are usually restricted to very specific settings. One kind of results that was obtained~\cite{li_nett_2020,mukherjee2020learned,schwab2019deep} is convergence towards the optimal points of a regularized problem, typically with a learned regularizer. However this does not give guarantees about the real sought-after vector. Another approach is used in Plug-and-Play~\cite{liu2021recovery} to show that under strong assumptions on the pre-trained denoiser, one can prove convergence to the true vector. This work is however limited by the constraints on the denoiser which are not met in many settings.

Our aim in this paper is to help close this gap by explaining when gradient descent consistently and provably finds global minima of $\loss$, and how this translates into recovery guarantees for both $\yvc$ and $\xvc$ i.e., in both the observation and the signal spaces. For this, we focus on a continuous-time gradient flow applied to $\loss$:
\begin{align}
\begin{cases}
\dot{\thetav}(t) = - \nabla_{\thetav} \lossy(\fopnl(\gdipt))  \\
\thetav(0) = \thetav_0 .
\end{cases}\label{eq:gradflow}
\end{align}
This is an idealistic setting which makes the presentation simpler and it is expected to reflect the behavior of practical and common first-order descent algorithms, as they are known to approximate gradient flows.

In this work, our focus in on an unsupervised method known as Deep Image Prior~\cite{ulyanov_deep_2020}, that we also coin Deep Inverse Prior (DIP) as it is not confined to images. A chief advantage of this method is that it does not need any training data, while the latter is mandatory in most supervised deep learning-based methods used in the literature. In the DIP method, $\uv$ is fixed throughout the optimization/training process, usually a realization of a random variable. By taking out the need of training data, this method focuses on the generation capabilities of the network trained through gradient descent. In turn, this will allow us to get insight into the effect of network architecture on the reconstruction quality.

\subsection{Contributions}
We deliver a theoretical analysis of gradient flow optimization of neural networks, i.e. \eqref{eq:gradflow}, in the context of inverse problems and provide various recovery guarantees for general loss functions verifying the Kurdyka-\L ojasewicz (KL) property. We first prove that the trained network with a properly initialized gradient flow will converge to an optimal solution in the observation space with a rate characterized by the desingularizing function appearing in the KL property of the loss function. This result is then converted to a prediction error on $\yvc$ through an early stopping strategy. More importantly, we present a recovery result in the signal space with an upper bound on the reconstruction error of $\xvc$. The latter result involves for instance a restricted injectivity condition on the forward operator.  

We then turn to showing how these results can be applied to the case of a two-layer neural network in the DIP setting where
\begin{align}\label{eq:dipntk}
    \mathbf{g}(\uv,\thetav) = \frac{1}{\sqrt{k}}\Vv \phi(\Wv\uv), \quad \thetav \eqdef (\Vv,\Wv) ,
\end{align}
with  $\Vv \in \R^{n \times k}$, $\Wv \times \R^{k \times d}$, and $\phi$ an element-wise nonlinear activation function. The scaling by $\sqrt{k}$ will become clearer later. We show that for a proper random initialization $\Wv(0)$, $\Vv(0)$ and sufficient overparametrization, all our conditions are in force to control the eigenspace of the Jacobian of the network as required to obtain the aforementioned convergence properties. We provide a characterization of the overparametrization needed in terms of $(k,d,n)$ and the conditioning of $\fopnl$.

\subsection{Relation to Prior Work}\label{sec:prior}


\paragraph{\textbf{Data-Driven Methods to Solve Inverse Problems}}
Data-driven approaches to solve inverse problems come in various forms; see the comprehensive reviews in~\cite{arridge_solving_2019,ongie_deep_2020}. The first type trains an end-to-end network to directly map the observations to the signals for a specific problem. 
While they can provide impressive results, these methods can prove very unstable as they do not use the physics of the problem which can be severely ill-posed. To cope with these problems, several hybrid models that mix model- and data-driven algorithms were developed in various ways. One can learn the regularizer of a variational problem  \cite{prost_learning_2021} or use Plug-and-Play methods \cite{venkatakrishnan_plug-and-play_2013} for example. Another family of approaches, which takes inspiration from classical iterative optimization algorithms, is based on unrolling 
(see \cite{monga_algorithm_2021} for a review of these methods). 
Still, all these methods require an extensive amount of training data, which may not always be available. 

\paragraph{\textbf{Deep Inverse Prior}}
The DIP model \cite{ulyanov_deep_2020} (and its extensions that mitigate some of its empirical issues ~\cite{liu_image_2019,mataev_deepred_2019,shi_measuring_2022,zukerman_bp-dip_2021}) is an unsupervised alternative to the supervised approches briefly reviewed above. The empirical idea is that the architecture of the network acts as an implicit regularizer and will learn a more meaningful transformation before overfitting to artefacts or noise. With an early stopping strategy, one can hope for the network to generate a vector close to the sought-after signal. However, this remains purely empirical and there is no guarantee that a network trained in such manner converges in the observation space (and even less in the signal space). The theoretical recovery guarantees of these methods are not well understood \cite{mukherjee2023learned} and our work aims at reducing this theoretical gap by analyzing the behaviour of such networks in both the observation and the signal space under some overparametrization condition.

\paragraph{\textbf{Theory of Overparametrized Networks}}
To construct our analysis, we build upon previous theoretical work of overparametrized networks and their optimization trajectories~\cite{bartlett_deep_2021,fang_mathematical_2021}. The first works that proved convergence to an optimal solution were based on a strong convexity assumption of the loss which is typically not the case when it is composed with a neural network. A more recent approach is based on a gradient dominated inequality from which we can deduce by simple integration an exponential convergence of the gradient flow to a zero-loss solution. This allows to obtain convergence guarantees for networks trained to minimize a mean square error by gradient flow~\cite{chizat_lazy_2019} or its discrete counterpart (i.e., gradient descent with fixed step)~\cite{du_gradient_2019,arora_fine-grained_2019, oymak_overparameterized_2019, oymak_toward_2020}. The work that we present here is inspired by these works but it goes far beyond them. Amongst other differences, we are interested in the challenging situation of inverse problems (presence of a forward operator), and we deal with more general loss functions that obey the Kurdyka-\L ojasewicz inequality (e.g., any semi-algebraic function or even definable on an o-minimal structure)~\cite{loj1,loj3,kurdyka_gradients_1998}. 


Recently, it has been found that some kernels play a very important role in the analysis of convergence of the gradient flow when used to train neural networks. In particular the semi-positive definite kernel given by $\Jgt\Jgt\tp$, where $\Jgt$ is the Jacobian of the network at time $t$. When all the layers of a network are trained, this kernel is a combination of the \textit{Neural Tangent Kernel} (NTK)~\cite{jacot_neural_2018} and the Random Features Kernel (RF)~\cite{rahimi_weighted_2008}. If one decides to fix the last layer of the network, then this amounts to just looking at the NTK which is what most of the previously cited works do. The goal is then to control the eigenvalues of the kernel to ensure that it stays positive definite during training, which entails convergence to a zero-loss solution at an exponential rate. The control of the eigenvalues of the kernel is done through a random initialization and the overparametrization of the network. Indeed, for a sufficiently wide network, the parameters $\thetavt$ will stay near their initialization and they will be well approximated by their linearization (so-called ``lazy'' regime~\cite{chizat_lazy_2019}). The overparametrization bounds that were obtained are mostly for two-layers networks as the control of deep networks is much more complex.


However, even if there are theoretical works on the gradient flow-based optimization of neural networks as reviewed above, similar analysis that would accommodate for the forward operator as in inverse problems remain challenging and open. Our aim is to participate in this endeavour by providing theoretical understanding of recovery guarantees with neural network-based methods. 

This paper is an extension of our previous one in~\cite{buskulic2023convergence}. There are however several distinctive and new results in the present work. For instance, the work \cite{buskulic2023convergence} only dealt with linear inverse problems while our results here apply to non-linear ones. Moreover, we here provide a much more general analysis under which we obtain convergence guarantees for a wider class of models than just the DIP one and for a general class of loss functions, not just the MSE. More importantly we show convergence not only in the observation space but also in the signal space now. When particularized to the DIP case, we also provide overparametrization bounds for the case when the linear layer of the network is not fixed which is also an additional novelty.

\paragraph{Paper organization}
The rest of this work is organized as follows. In Section~\ref{sec:prelim} we give the necessary notations and definitions useful for this work. In Section~\ref{sec:main_res} we present our main result with the associated assumptions and proof. In Section~\ref{sec:dip} we present the overparametrization bound on the DIP model. Finally, in Section~\ref{sec:expes}, we show some numerical experiments that validate our findings, before drawing our conclusions in Section~\ref{sec:conclu}.

\section{Preliminaries}\label{sec:prelim}
\subsection{General Notations}
For a matrix $\Mv \in \R^{a \times b}$ we denote by $\sigmin(\Mv)$ and $\sigmax(\Mv)$ its smallest and largest non-zero singular values, and by $\kappa(\Mv) = \frac{\sigmax(\Mv)}{\sigmin(\Mv)}$ its condition number. We also denote by $\dotprod{}{}$ the Euclidean scalar product, $\norm{\cdot}$ the associated norm (the dimension is implicit from the context), and $\normf{\cdot}$ the Frobenius norm of a matrix. With a slight abuse of notation $\norm{\cdot}$ will also denote the spectral norm of a matrix. We use $\Mv^i$ (resp. $\Mv_i$) as the $i$-th row (resp. column) of $\Mv$. For two vectors $\xv,\zv$, $[\xv,\zv]=\enscond{(1-\rho)\xv+\rho\zv}{\rho \in [0,1]}$ is the closed segment joining them. We use the notation $a \gtrsim b$ if there exists a constant $C > 0$ such that $a \geq C b$.

We also define $\yvt = \fopnl(\gdipt)$ and $\xv(t) = \gdipt$ and we recall $\yvc = \fopnl(\xvc)$. The Jacobian of the network is denoted $\Jg$. $\Jgt$ is a shorthand  notation of $\Jg$ evaluated at $\thetav(t)$. $\Jft$ is the Jacobian of the forward operator $\fopnl$ evaluated at $\xv(t)$. The local Lipschitz constant of a mapping on a ball of radius $R > 0$ around a point $\zv$ is denoted $\Lip_{\Ball(\zv,R)}(\cdot)$. We omit $R$ in the notation when the Lipschitz constant is global. For a function $f: \R^n \to \R$, we use the notation for the sublevel set $[f < c] = \enscond{\zv \in \R^n}{f(\zv) < c}$ and $[c_1 < f < c_2] = \enscond{\zv \in \R^n}{ c_1 < f(\zv) < c_2}$.

Given $\zv \in \cC^0(]0,+\infty[;\R^a)$, the set of cluster points of $\zv$ is defined as
\[
\cluster{\zv(\cdot)} = \enscond{\widetilde{\zv} \in \R^a}{\exists (t_k)_{k\in\N} \to +\infty \tstt \lim_{k \to \infty} \zv(t_k) = \widetilde{\zv}} .
\]

For some $\Theta \subset \R^p$, we define $\Sigma_\Theta = \enscond{\gv(\uv,\thetav)}{\thetav \in \Theta}$ the set of signals that the network $\gv$ can generate for all $\theta$ in the parameter set $\Theta$. $\Sigma_\Theta$ can thus be viewed as a parametric manifold. If $\Theta$ is closed (resp. compact), so is $\Sigma_\Theta$. We denote $\dist(\cdot,\Sigma_\Theta)$ the distance to $\Sigma_\Theta$ which is well defined if $\Theta$ is closed and non-empty. For a vector $\xv$, $\xvsigmatheta$ is its projection on $\Sigma_\Theta$, i.e. $\xvsigmatheta \in \Argmin_{\zv \in \Sigma_\Theta} \norm{\xv-\zv}$. Observe that $\xvsigmatheta$ always exists but might not be unique. We also define $T_{\Sigma_\Theta}(\xv) = \conv{\R_+(\Sigma_\Theta-\xv)}$ the tangent cone of $\Sigma_\Theta$ at $\xv\in\Sigma_\Theta$.

The minimal (conic) singular value of a matrix $\fop \in \R^{m \times n}$ w.r.t. the cone $T_{\Sigma_\Theta}(\xv)$ is then defined as

\[
\lmin(\fop;T_{\Sigma_\Theta}(\xv)) = \inf
\{\norm{\fop \zv}/\norm{\zv}:  \zv \in T_{\Sigma_\Theta}(\xv)\}.
\]


\subsection{Multilayer Neural Networks}
Neural networks produce structured parametric families of functions that have been studied and used for almost 70 years, going back to the late 1950's~\cite{rosenblatt1958perceptron}.
\begin{definition}\label{def:nn}
Let $d,L\in \N$ and $\phi : \R \to \R$ an activation map which acts componentwise on the entries of a vector. A fully connected multilayer neural network with input dimension $d$, $L$ layers and activation $\phi$, is a collection of weight matrices $\pa{\Wv^{(l)}}_{l \in [L]}$ and bias vectors $\pa{\bv^{(l)}}_{l \in [L]}$, where $\Wv^{(l)} \in \R^{N_l\times N_{l-1}}$ and $\bv^{(l)} \in \R^{N_l}$, with $N_0=d$, and $N_l \in \N$ is the number of neurons for layer $l \in [L]$. Let us gather these parameters as
	\[
	\thetav=\pa{(\Wv^{(1)},\bv^{(1)}), \ldots, (\Wv^{(L)},\bv^{(L)})} \in \bigtimes_{l=1}^L \pa{\pa{\R^{N_l \times N_{l-1}}} \times \R^{N_l}}.
	\]
Then, a neural network parametrized by $\thetav$ produces a function   
\begin{align*}
\gv: & ~ (\uv,\thetav) \in \R^d \times \bigtimes_{l=1}^L \pa{\pa{\R^{N_l \times N_{l-1}}} \times \R^{N_l}} \mapsto \gv(\uv,\thetav) \in \R^{N_L} , \qwithq N_L = n ,
	\end{align*}
	which can be defined recursively as 
	\begin{align*}
		\begin{cases}
			\gv^{(0)}(\uv,\thetav)&= \uv,
			\\
			\gv^{(l)}(\uv,\thetav)&=\phi\pa{\Wv^{(l)}\gv^{(l-1)}(\uv,\thetav)+\bv^{(l)}}, \quad \text{ for } l=1,\ldots , L-1,
			\\
			\gv(\uv,\thetav)&= \Wv^{(L)} \gv^{(L-1)}(\uv,\thetav) + \bv^{(L)} .
		\end{cases}
	\end{align*}
\end{definition}
The total number of parameters is then $p = \sum_{l=1}^L (N_{l-1}+1)N_l$. In the rest of this work, $\gdip$ is always defined as just described. We will start by studying the general case before turning in Section~\ref{sec:dip} to a two-layer network, i.e. with $L=2$.

\subsection{KL Functions}\label{subsec:kl}
We will work with a general class of loss functions $\loss$ that are not necessarily convex. More precisely, we will suppose that $\loss$ verifies a Kurdyka-\L ojasewicz-type (KL for short) inequality.
\begin{definition}[KL inequality]\label{def:KL}
A continuously differentiable function $f:\funspacedef{\R^n}{\R}$ satisfies the KL inequality if there exists $r_0 > 0$ and a strictly increasing function $\psi \in \cC^0([0,r_0[) \cap \cC^1(]0,r_0[)$ with $\psi(0) = 0$ such that
\begin{align}\label{eq:KLpsi}
\psi'(f(\zv)-\min f)\norm{\nabla f(\zv)} \geq 1, \qforallq \zv \in [\min f < f < \min f + r_0] . 
\end{align}
We use the shorthand notation $f \in \KLpsi(r_0)$ for a function satisfying this inequality. 
\end{definition}
The KL property basically expresses the fact that the function $f$ is sharp under a reparameterization of its
values. Functions satisfying the KL inequality are also sometimes called gradient dominated functions~\cite{nesterov2006cubic}. The function $\psi$ is known as the desingularizing function for~$f$. The {\L}ojasiewicz inequality~\cite{loj1,loj3} corresponds to the case where the desingularizing function takes the form $\psi(s) = cs^\alpha$ with $\alpha\in[0,1]$. The KL inequality plays a fundamental role in several fields of applied mathematics among which convergence behaviour of  (sub-)gradient-like systems and minimization algorithms~\cite{AbsilAnalytic05,HuangKL06,BolteKL07,AttouchAnalytic09,attouch2010proximal,bolte2010characterizations}, neural networks~\cite{QuincampoixNNKL06}, partial differential equations~\cite{SimonKL83,HarauxKL01,ChillKL06}, to cite a few. The KL inequality is closely related to error bounds that also play a key role to derive complexity bounds of gradient descent-like algorithms~\cite{bolte2017error}.

\modifs{Our KL definition is somehow globalized with regards to the original one~\cite{kurdyka_gradients_1998} as we require the inequality to hold at any point in the sublevel set $[\min f < f < \min f + r_0]$ without intersecting the latter with a neighborhood of the minimizers. Of course, if $f$ is sublevel set is bounded, then the above inequality is automatically localize. On the other hand, we require the KL property to hold only at global minimizers and not at any critical point. Nevertheless, in general, our globalized KL inequality entails that the function cannot have critical points that are not global minimizers. However, since we impose this assumption only on the loss function $\loss$, this is not a restrictive assumption that match many usual loss functions. Let us give some examples of functions satisfying \eqref{eq:KLpsi}.}
\begin{example}[Convex functions with sufficient growth]
Let $f$ be a differentiable convex function on $\R^n$ such that $\Argmin(f) \neq \emptyset$. Assume that $f$ verifies the growth condition
\begin{equation}\label{eq:exgrowth}
f(\zv) \geq \min f + \varphi(\dist(\zv,\Argmin(f))), \qforallq \zv \in [\min f < f < \min f + r] ,
\end{equation}
where $\varphi: \R_+ \to \R_+$ is continuous, increasing, $\varphi(0)=0$ and $\int_0^{r} \frac{\varphi^{-1}(s)}{s} ds < +\infty$. Then by \cite[Theorem~30]{bolte2010characterizations}, $f \in \KLpsi(r)$ with $\psi(r)=\int_0^{r} \frac{\varphi^{-1}(s)}{s} ds$.
\end{example}

\begin{example}[Uniformly convex functions]
Suppose that $f$ is a differentiable uniformly convex function, i.e., $\forall \zv,\xv \in \R^n$,
\begin{equation}\label{eq:exunifconv}
f(\xv) \geq f(\zv) + \dotprod{\nabla f(\zv)}{\xv-\zv} + \varphi\pa{\norm{\xv-\zv}}
\end{equation}
for an increasing function $\varphi: \R_+ \to \R_+$ that vanishes only at $0$. Thus $f$ has a unique minimizer, say $\zv^\star$, see~\cite[Proposition~17.26]{BauschkeBook}. This example can then be deduced from the previous one since a uniformly convex function obviously obeys \eqref{eq:exgrowth}. However, we here provide an alternative and sharper characterization. We may assume without loss of generality that $\min f = 0$. Applying inequality \eqref{eq:exunifconv} at $\xv = \zv^\star$ and any $\zv \in [0 < f]$, we get
\begin{align*}
f(\zv) 
&\leq \dotprod{\nabla f(\zv)}{\zv-\xv} - \varphi\pa{\norm{\xv-\zv}} \\
&\leq \norm{\nabla f(\zv)}\norm{\xv-\zv} - \varphi\pa{\norm{\xv-\zv}} \\
&\leq \varphi_+(\norm{\nabla f(\zv)}) ,
\end{align*}
where $\varphi_+: a \in \R_+ \mapsto \varphi^+(a)=\sup_{x \geq 0} a x - \varphi(x)$ is known as the monotone conjugate of $\varphi$. $\varphi_+$ is a proper closed convex and non-decreasing function on $\R_+$ that vanishes at 0. When $\varphi$ is strictly convex and supercoercive, so is $\varphi_+$ which implies that $\varphi_+$ is also strictly increasing on $\R_+$. Thus $f$ verifies Definition~\ref{def:KL} at any $\zv \in [0 < f]$ with $\psi$ a primitive of $\frac{1}{\varphi_+^{-1}}$, and $\psi$ is indeed strictly increasing, vanishes at $0$ and is even concave.  A prominent example is the case where $\varphi: s \in \R_+ \mapsto \frac{1}{p}s^p$, for $p \in ]1,+\infty[$, in which case $\psi: s \in \R_+ \mapsto q^{-1/q} s^{1/p}$, where $1/p+1/q=1$. 
\end{example}

\begin{example}
\modifs{
For the original KL inequality \cite{kurdyka_gradients_1998}, deep results from algebraic geometry have shown that in finite-dimensional spaces, the KL inequality is satisfied by a large class of functions, namely, real semi-algebraic functions and more generally, function definable on an o-minimal structure or even functions belonging to analytic-geometric categories \cite{coste2000introduction,van1998tame,kurdyka_gradients_1998,loj1,loj3}. Definable convex functions do satisfy our globalized KL inequality. Note that even smooth coercive convex functions do not necessarily satisfy the KL inequality; see the counterexample in \cite[Section~4.3]{bolte2010characterizations}. Fortunately, many popular losses used in machine learning and signal processing turn out to satisfy our globalized KL inequality since the lack of local minimizers is a desirable property for such losses (MSE, Kullback-Leibler divergence and cross-entropy to cite a few).
}
\end{example}



\section{Recovery Guarantees}\label{sec:main_res}

\subsection{Main Assumptions} 
Throughout this paper, we will work under the following standing assumptions.
\vspace*{2em}
\begin{mdframed}[frametitle={Assumptions on the loss}]
    \begin{assumption}\label{ass:l_smooth}
        $\lossy(\cdot) \in \cC^1(\R^m)$ whose gradient is Lipschitz continuous on the bounded sets of~$\R^m$. 
    \end{assumption}
    \begin{assumption}\label{ass:l_kl}
        $\lossy(\cdot) \in\KLpsi(\lossy(\yvz)+\eta)$ for some $\eta > 0$. 
    \end{assumption}
    \begin{assumption}\label{ass:min_l_zero}
        $\min \lossy(\cdot) = 0$.
    \end{assumption}
    \begin{assumption}\label{ass:nablaL_F}
    $\exists \Theta \subset \R^p$ with large enough diameter such that $\nabla_{\vv}\lossy(\vv) \in \ran{\Jf(\xv)}$ for any $\vv=\fopnl(\xv)$ with $\xv \in \Sigma_\Theta$. 
    \end{assumption}
\end{mdframed}

\begin{mdframed}[frametitle={Assumption on the activation}]
        \begin{assumption}\label{ass:phi_diff}
        $\phi \in \cC^1(\R)$ and $\exists B > 0$ such that $\sup_{x \in \R}|\phi'(x)| \leq B$ and $\phi'$ is $B$-Lipschitz continuous.
    \end{assumption}
\end{mdframed}

\begin{mdframed}[frametitle={Assumption on the forward operator}]
    \begin{assumption}\label{ass:F_diff}
        $\fopnl \in \cC^1(\R^n;\R^m)$ whose Jacobian $\Jf$ is Lipschitz continuous on the bounded sets of~$\R^n$.
    \end{assumption}
\end{mdframed}

\medskip


Let us now discuss the meaning and effects of these assumptions. First, \ref{ass:l_smooth} is made for simplicity to ensure existence and uniqueness of a strong maximal solution (in fact even global thanks to our estimates) of \eqref{eq:gradflow} thanks to the Cauchy-Lipschitz theorem (see hereafter). We think this could be relaxed to cover non-smooth losses if we assume path differentiability, hence existence of an absolutely continuous trajectory. This is left to a future work. A notable point in~\ref{ass:l_kl} is that convexity is not always needed for the loss (see the statements of the theorem). Regarding~\ref{ass:min_l_zero}, it is natural yet it would be straightforward to relax it. 

Assumption \ref{ass:nablaL_F} allows us to leverage the fact that
\begin{equation}\label{eq:sigmaF}
\sigminF \eqdef \inf_{\xv \in \Sigma_\Theta, \zv \in \ran{\Jf(\xv)}} \frac{\norm{\Jf(\xv)\tp\zv}}{\norm{\zv}} > 0,
\end{equation}
with $\Theta$ a sufficiently large subset of parameters. Clearly, we will show later that the parameter trajectory $\thetav(t)$ is contained in a ball around $\thetavz$. Thus a natural choice of $\Theta$ is that ball (or an enlargement of it).

There are several scenarios of interest where assumption~\ref{ass:nablaL_F} is verified.
This is the case when $\fopnl$ is an immersion, which implies that $\Jf(\xv)$ is surjective for all $\xv$. Other interesting cases are when $\lossy(\vv)=\eta\pa{\norm{\vv-\yv}^2}$, $\fopnl = \Phi \circ \fop$, where $\eta: \R_+ \to \R_+$ is differentiable and vanishes only at $0$, and $\Phi: \R^m \to \R^m$ is an immersion\footnote{Typical cases of practical interest are linear inverse problems ($\Phi$ identity) or  phase retrieval ($\Phi = |\cdot|^2$).}. One easily sees in this case that $\nabla_{\vv}\lossy(\vv) = 2\eta'\pa{\norm{\vv-\yv}^2}(\vv-\yv)$ with $\vv=\Phi(\fop\xv)$, and $\Jf(\xv)=\mathcal{J}_{\Phi}(\fop\xv)\fop$. It is then sufficient to require that $\fop$ is surjective. This can be weakened for the linear case, i.e. $\Phi$ is the identity, in which case it is sufficient that $\yv \in \ran{\fop}$ for \ref{ass:nablaL_F} to hold.

Assumption \ref{ass:phi_diff} is key in well-posedness as it ensures, by Definition~\ref{def:nn} which $\gv(\uv,\thetav)$ follows, that $\gv(\uv,\cdot)$ is $\cC^1(\R^p;\R^p)$ whose Jacobian is Lipschitz continuous on bounded sets, which is necessary for the Cauchy-Lipschitz theorem. This constraint on $\phi$ is met by many activations such as the softmax, sigmoid or hyperbolic tangent. Including the ReLU requires more technicalities that will be avoided here.

Finally, Assumption~\ref{ass:F_diff} on local Lipschitz continuity on $\fopnl$ is not only important for well-posedness of \eqref{eq:gradflow}, but it turns out to be instrumental when deriving recovery rates (as a function of the noise) in the literature of regularized nonlinear inverse problems; see \cite{ScherzerBook09} and references therein.

\subsection{Well-posedness}\label{subsec:welllocal} 
In order for our analysis to hold, the Cauchy problem~\eqref{eq:gradflow} needs to be well-posed. We start by showing that~\eqref{eq:gradflow} has a unique maximal solution. 

\begin{proposition}\label{prop:welllocal}
Assume that \ref{ass:l_smooth}, \ref{ass:phi_diff} and \ref{ass:F_diff} hold. There there exists $T(\thetav_0) \in ]0,+\infty]$ and a unique maximal solution $\thetav(\cdot) \in \cC^0([0,T(\thetav_0)[)$ of \eqref{eq:gradflow}, and $\thetav(\cdot)$ is $\cC^1$ on every compact set of the interior of $[0,T(\thetav_0)[$.
\end{proposition}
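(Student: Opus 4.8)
The plan is to rewrite \eqref{eq:gradflow}, via the chain rule, as the autonomous Cauchy problem $\dot\thetav(t) = H(\thetav(t))$, $\thetav(0) = \thetav_0$, with velocity field
\[
H : \thetav \in \R^p \longmapsto -\Jg\tp \, \Jf(\gdip)\tp \, \nabla\lossy(\fopnl(\gdip)) \in \R^p ,
\]
i.e. the product of the (transposed) network Jacobian $\Jg$, the (transposed) forward-operator Jacobian $\Jf(\gdip)$ and the loss gradient $\nabla\lossy(\fopnl(\gdip))$, all evaluated at the running parameter $\thetav$. The whole statement then reduces to checking that $H$ is continuous and locally Lipschitz on $\R^p$: the Cauchy--Lipschitz (Picard--Lindel\"of) theorem provides $T(\thetav_0) \in \,]0,+\infty]$ and a unique maximal solution, and its $\cC^1$ regularity follows a posteriori since $\thetav$ is continuous and $\dot\thetav = H(\thetav)$ with $H$ continuous.

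The only non-routine step is to establish the regularity of the network map: for every fixed input $\uv$, the partial map $\thetav \mapsto \gdip$ belongs to $\cC^1(\R^p;\R^n)$, its Jacobian $\Jg$ is Lipschitz continuous on every bounded subset of $\R^p$, and it sends bounded sets to bounded sets. I would prove this by induction over the layers $l = 1,\dots,L$ of Definition~\ref{def:nn} using Assumption~\ref{ass:phi_diff}: since $\phi \in \cC^1$, each intermediate map $\thetav \mapsto \gv^{(l)}(\uv,\thetav)$ is $\cC^1$, and its Jacobian is assembled from finite sums and products of the weight matrices $\Wv^{(l)}$ and of the diagonal matrices $\mathrm{diag}\big(\phi'(\cdot)\big)$ evaluated at the (continuous, hence locally bounded) pre-activations. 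On a bounded set of parameters all these building blocks are bounded; products of bounded, locally Lipschitz maps stay locally Lipschitz; and the global boundedness and Lipschitz continuity of $\phi'$ granted by~\ref{ass:phi_diff} transfer local Lipschitz continuity through the pre-activations. Hence $\Jg$ is Lipschitz on bounded sets.

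Composition then finishes the argument. Assumption~\ref{ass:F_diff} gives $\fopnl \in \cC^1(\R^n;\R^m)$ with $\Jf$ Lipschitz on bounded sets, and Assumption~\ref{ass:l_smooth} gives $\lossy \in \cC^1(\R^m)$ with $\nabla\lossy$ Lipschitz on bounded sets. A composition of locally Lipschitz maps is locally Lipschitz, and composing with a map that sends bounded sets to bounded sets preserves the ``Lipschitz on bounded sets'' property; hence $\thetav \mapsto \gdip$, $\thetav \mapsto \fopnl(\gdip)$, $\thetav \mapsto \nabla\lossy(\fopnl(\gdip))$, $\thetav \mapsto \Jf(\gdip)$ and $\thetav \mapsto \Jg$ are all continuous, locally bounded and locally Lipschitz. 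Therefore $H$, a product of three such maps, is continuous and locally Lipschitz on $\R^p$, and Cauchy--Lipschitz applies, giving the unique maximal solution $\thetav(\cdot) \in \cC^0([0,T(\thetav_0)[)$; as noted above this solution is in fact $\cC^1$ on $[0,T(\thetav_0)[$, in particular on every compact subset of its interior.

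The main obstacle is thus concentrated in the second step: carefully verifying that differentiating the recursion of Definition~\ref{def:nn} does not destroy the ``Lipschitz on bounded sets'' property of the network Jacobian. This is a slightly tedious but entirely mechanical induction once Assumption~\ref{ass:phi_diff} is in force; the remainder of the proof is a direct invocation of standard ODE theory.
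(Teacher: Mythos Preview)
Your proposal is correct and follows essentially the same approach as the paper: verify via Assumption~\ref{ass:phi_diff} (and the recursive structure of Definition~\ref{def:nn}) that $\Jg$ is Lipschitz on bounded sets, compose with the locally Lipschitz maps provided by Assumptions~\ref{ass:l_smooth} and~\ref{ass:F_diff} to get that the right-hand side of \eqref{eq:gradflow} is locally Lipschitz, and invoke the Cauchy--Lipschitz theorem. The paper's proof is a two-sentence sketch of exactly this; your write-up simply spells out the induction and the composition step in more detail.
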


\begin{proof}
Thanks to \ref{ass:phi_diff}, one can verify with standard differential calculus applied to $\gv(\uv,\cdot)$, as given in Definition~\ref{def:nn}, that $\Jg$ is Lipschitz continuous on the bounded sets of $\R^p$. This together with \ref{ass:l_smooth} and \ref{ass:F_diff} entails that $\nabla_{\thetav} \lossy(\fopnl(\gv(\uv,\cdot))$ is also Lipschitz continuous on the bounded sets of $\R^p$. The claim is then a consequence of the Cauchy-Lipschitz theorem \cite[Theorem~0.4.1]{Haraux91}.
\end{proof}

$T(\thetav_0)$ is known as the maximal existence time of the solution and verifies the alternative: either $T(\thetav_0)=+\infty$  and the solution is called {\textit{global}};  or $T(\thetav_0)<+\infty$ and the solution blows-up in finite time, i.e., $\norm{\thetav(t)} \to +\infty$ as $t \to  T(\thetav_0)$. We will show later that the maximal solution of \eqref{eq:gradflow} is indeed global; see Section~\ref{subsec:wellglobal}.

\subsection{Main Results} 
We are now in position to state our main recovery result.

\begin{theorem}\label{thm:main}
Recall $\sigminF$ from \eqref{eq:sigmaF}. Consider a network $\gv(\uv,\cdot)$, a forward operator $\fopnl$ and a loss $\loss$, such that~\cref{ass:l_smooth} to \ref{ass:F_diff} hold. Let $\thetav(\cdot)$ be a solution trajectory of \eqref{eq:gradflow} where the initialization $\thetavz$ is such that
\begin{equation}\label{eq:bndR}
\sigminjgz > 0 \qandq R' < R
\end{equation}
where $R'$ and $R$ obey
\begin{equation}\label{eq:RandR'}
R' = \frac{2}{\sigminF\sigminjgz}\psi(\lossyzy) \qandq R = \frac{\sigminjgz}{2\Lip_{\Ball(\thetavz,R)}(\Jg)} . 
\end{equation}
Then the following holds:
\begin{enumerate}[label=(\roman*)]
\item \label{thm:main_y_bounded} the loss converges to $0$ at the rate
\begin{align}\label{eq:lossrate}
    \lossyty &\leq \Psi\inv(\gamma(t))
\end{align}
with $\Psi$ a primitive of $-\psi'^2$ and $\gamma(t) = \frac{\sigminF^2\sigminjgz^2}{4}t + \Psi\left(\lossyzy\right)$. Moreover, $\thetav(t)$ converges to a global minimizer $\thetav_{\infty}$ of $\lossy(\fopnl(\gv(\uv,\cdot)))$, at the rate
\begin{align}\label{eq:thetarate}
\norm{\thetav(t) - \thetav_{\infty}} &\leq \frac{2}{\sigminjgz\sigminF}\psi\pa{\Psi\inv\pa{\gamma(t)}} .
\end{align}


\item If $\Argmin(\lossy(\cdot)) = \ens{\yv}$, then $\lim_{t \to +\infty}\yv(t)=\yv$. In addition, if $\loss$ is convex then
\begin{align}\label{eq:yrate}
\norm{\yvt - \yvc} \leq 2\norm{\veps} \quad \text{when} \quad t\geq \frac{4\Psi(\psi\inv(\norm{\veps}))}{\sigminF^2\sigminjgz^2}-\Psi(\lossy(\yvz)).
\end{align}

\item Assume that $\Argmin(\lossy(\cdot)) = \ens{\yv}$, $\loss$ is convex, and that\footnote{We suppose here that $\Argmin_{\zv \in \Sigma} \norm{\zv-\xvc} = \{\xvcsigma\}$ is a singleton. In fact, we only need that there exists at least one $\xvcsigma \in \Argmin_{\zv \in \Sigma} \norm{\zv-\xvc}$ such that $\muf > 0$.}
\begin{assumption}\label{ass:F_inj}
$\muf > 0$ where
$
\displaystyle{\muf \eqdef \inf_{\xv \in \Sigma'} \frac{\norm{\fopnl(\xv)-\fopnl(\xvcsigma)}}{\norm{\xv-\xvcsigma}} \text{ with } \Sigma' \eqdef \Sigma_{\Ball_{R'+\norm{\thetavz}}(0)}}.
$
\end{assumption}
Let $\LF \eqdef \max_{\xv \in \Ball(0,2\norm{\xvc})} \norm{\Jf(\xv)} < +\infty$. Then
\begin{align}\label{eq:xrate}
\begin{split}
\norm{\xvt - \xvc} 	&\leq \frac{2\psi\pa{\Psi\inv\pa{\gamma(t)}}}{\muf\sigminjgz\sigminF} + \pa{1+\frac{\LF}{\muf}}\dist(\xvc,\Sigma') + \frac{\norm{\veps}}{\muf}.
\end{split}
\end{align}
\end{enumerate}
\end{theorem}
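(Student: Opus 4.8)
The backbone of the argument is a Lyapunov/energy estimate for the gradient flow combined with the KL inequality. I would first establish that the trajectory stays in the ball $\Ball(\thetavz,R)$: differentiating $\lossyty$ along \eqref{eq:gradflow} gives $\frac{d}{dt}\lossyty = -\norm{\nabla_{\thetav}\lossy(\fopnl(\gdipt))}^2$, and by the chain rule $\nabla_{\thetav}\lossy = \Jgt\tp \Jft\tp \nabla_{\vv}\lossy(\yvt)$. Using \ref{ass:nablaL_F} so that $\nabla_{\vv}\lossy(\yvt) \in \ran{\Jft}$, together with \eqref{eq:sigmaF}, bounds $\norm{\Jft\tp\nabla_{\vv}\lossy} \geq \sigminF\norm{\nabla_{\vv}\lossy}$; and as long as $\thetav(t)\in\Ball(\thetavz,R)$ a Lipschitz/perturbation argument on $\Jg$ (the definition of $R$ in \eqref{eq:RandR'} is exactly the radius for which $\sigma_{\min}(\Jgt) \geq \sigminjgz/2$) gives the gradient-domination inequality $\norm{\nabla_{\thetav}\lossyty} \geq \frac{\sigminF\sigminjgz}{2}\norm{\nabla_{\vv}\lossy(\yvt)} \geq \frac{\sigminF\sigminjgz}{2\psi'(\lossyty)}$ via \ref{ass:l_kl}. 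Integrating $\frac{d}{dt}\Psi(\lossyty) = \psi'(\lossyty)^2 \cdot(-\frac{d}{dt}\lossyty)\cdot(\ldots)$ — more precisely $\frac{d}{dt}\Psi(\lossyty) \geq \frac{\sigminF^2\sigminjgz^2}{4}$ — yields \eqref{eq:lossrate}. The length of the trajectory is controlled by $\int_0^\infty\norm{\dot\thetav}dt \leq \int_0^\infty \sqrt{-\frac{d}{dt}\lossyty}\,\psi'(\lossyty)(\ldots)dt$, which telescopes through $\psi$ to give a finite bound; the condition $R' < R$ is precisely what guarantees the trajectory never leaves $\Ball(\thetavz,R)$, closing the bootstrap and proving \ref{thm:main_y_bounded}, including convergence of $\thetav(t)$ to some $\thetav_\infty$ and \eqref{eq:thetarate}.

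For part (ii), convergence $\yvt\to\yv$ follows from $\lossyty\to 0$ and $\Argmin\lossy=\{\yv\}$ (continuity/coercivity of $\loss$ near its minimizer). For the quantitative bound: when $\loss$ is convex with a unique minimizer at $\yv$, I would use a growth estimate of the form $\norm{\yvt-\yv} \leq \psi\inv(\lossyty)$-type control or, more directly, observe that the early-stopping time in \eqref{eq:yrate} is chosen so that $\lossyty \leq \psi\inv(\norm\veps)^{\text{(suitably interpreted)}}$, hence $\norm{\yvt - \yv}$ is of order $\norm\veps$; then the triangle inequality $\norm{\yvt-\yvc}\leq\norm{\yvt-\yv}+\norm{\yv-\yvc}=\norm{\yvt-\yv}+\norm\veps$ gives the factor $2\norm\veps$.

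For part (iii), the signal-space bound is a triangle-inequality decomposition through the projection $\xvcsigma$ of $\xvc$ onto $\Sigma'$: $\norm{\xvt-\xvc} \leq \norm{\xvt-\xvcsigma} + \norm{\xvcsigma-\xvc}$, and $\norm{\xvcsigma-\xvc} = \dist(\xvc,\Sigma')$. For the first term I would invoke \ref{ass:F_inj}: $\norm{\xvt-\xvcsigma} \leq \frac{1}{\muf}\norm{\fopnl(\xvt)-\fopnl(\xvcsigma)} \leq \frac{1}{\muf}\pa{\norm{\yvt-\yvc} + \norm{\yvc-\fopnl(\xvcsigma)}}$, where I bound $\norm{\yvt-\yvc} \leq \norm{\yvt - \yv} + \norm{\veps}$ with $\norm{\yvt-\yv}$ controlled by the loss rate via convexity (the term $\frac{2\psi(\Psi\inv(\gamma(t)))}{\muf\sigminjgz\sigminF}$ arises here, matching the structure of \eqref{eq:thetarate} pushed through $\Jg$ and $\Jf$), and $\norm{\yvc-\fopnl(\xvcsigma)} = \norm{\fopnl(\xvc)-\fopnl(\xvcsigma)} \leq \LF\norm{\xvc-\xvcsigma} = \LF\dist(\xvc,\Sigma')$ using the mean-value inequality with $\LF$ as defined (one must check $\xvcsigma$ and $\xvc$ lie in $\Ball(0,2\norm{\xvc})$, which holds for $\Sigma'$-projections when $0\in\overline{\Sigma'}$ or by a routine enlargement). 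Collecting terms gives \eqref{eq:xrate}.

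The main obstacle is the bootstrap in part \ref{thm:main_y_bounded}: one must show simultaneously that (a) the trajectory stays in $\Ball(\thetavz,R)$ so that the lower bound $\sigma_{\min}(\Jgt)\geq\sigminjgz/2$ is valid, and (b) this lower bound yields a trajectory-length bound $\leq R' < R$. This is a continuous-induction / maximality argument: define the maximal time up to which $\thetav(t)\in\Ball(\thetavz,R)$, derive the length bound on that interval, and use $R' < R$ to show strict interiority, contradicting maximality unless the interval is all of $[0,T(\thetavz)[$; a further argument (the length bound rules out blow-up) upgrades this to $T(\thetavz)=+\infty$. The other delicate point is making the passage from the abstract desingularizing-function estimate $\frac{d}{dt}\Psi(\lossyty)\geq\text{const}$ to the explicit inverse-function rate \eqref{eq:lossrate}, which requires care about monotonicity and domains of $\Psi$ and $\Psi\inv$ (noting $\Psi$ is decreasing since $\Psi' = -\psi'^2 < 0$).
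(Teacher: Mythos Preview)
Your proposal is correct and follows essentially the same approach as the paper: the Lyapunov/KL descent estimate, the bootstrap (continuous-induction) argument confining the trajectory to $\Ball(\thetavz,R)$ so that $\sigma_{\min}(\Jgt)\geq\sigminjgz/2$, and the triangle-inequality decomposition through $\xvcsigma$ using the restricted injectivity assumption all match the paper's proof. The only point to sharpen is in part~(ii): the precise inequality the paper uses is $\norm{\yvt - \yv} \leq \psi(\lossyty)$ (not $\psi^{-1}$), obtained by invoking the equivalence between the global KL property and an error bound for convex functions \cite[Corollary~6(i)]{bolte2017error}; this is where convexity of $\loss$ actually enters, and your second formulation---that the stopping time is chosen so that $\lossyty \leq \psi^{-1}(\norm{\veps})$---then combines with this error bound and the triangle inequality exactly as you outline.
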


\begin{proof}
\modifs{See Section~\ref{sec:proof}.}
\end{proof}

\subsection{Discussion and Consequences}


We first discuss the meaning of the initialization condition $R' < R$. This dictates that $\psi(\lossy(\yvz))$ must be smaller than some constant that depends on the operator $\fopnl$ and the Jacobian of the network at initialization. Intuitively, this requires the initialization of the network to be in an appropriate convergence basin i.e., we start close enough from an optimal solution.

\subsubsection{Convergence Rate} The first result ensures that under the conditions of the theorem, the network converges towards a zero-loss solution. The convergence speed is given by the application of $\Psi\inv$, which is (strictly) decreasing by definition, on an affine function w.r.t time. The function $\Psi$ only depends on the chosen loss function and its associated Kurdyka-\L ojasewiecz inequality. This inequality is verified for a wide class of functions, including all the semi-algebraic ones~\cite{kurdyka_gradients_1998}, but it is not always obvious to know the exact expression of $\psi$ (see Section~\ref{subsec:kl} for a discussion). 

In the case where the KL inequality is verified with $\psi = cs^\alpha$ (the {\L}ojasiewicz case), we obtain by direct computation the following decay rate of the loss and convergence rate for the parameters:
 \begin{corollary}\label{col:loja_rate}
     If $\loss$ satisfies the \L ojasiewicz inequality, that is \ref{ass:l_kl} holds with $\psi(s) = cs^\alpha$ and $\alpha \in [0,1/2]$, then, $\exists t_0 > 0$ such that $\forall t \geq t_0, \gamma(t) > 0$, and thus the loss and the parameters converge with the following rates:
   
     \[
     \tabulinesep=1.2mm
    \begin{tabu}{rll}
    \lossy(\yvt) \leq &\left\{
    \begin{array}{l}
    \left(\frac{1 - 2\alpha}{\alpha^2c^2}\gamma(t)\right)^{-\frac{1}{1-2\alpha}} \qquad \\
    \exp\left(-\frac{4}{c^2}\gamma(t)\right) \qquad      
    \end{array}
    \right.
    &\begin{array}{l}
       \text{if } 0 < \alpha < \frac{1}{2}, \\
        \text{if } \alpha = \frac{1}{2} .
    \end{array}\\
    \norm{\thetavt - \thetav_{\infty}}\leq &
    \frac{2c}{\sigminjgz\sigminF}
    \left\{
    \begin{array}{l}
    \left(\frac{1 - 2\alpha}{\alpha^2c^2}\gamma(t)\right)^{-\frac{\alpha}{1-2\alpha}} \qquad \\
    \exp\left(-\frac{2}{c^2}\gamma(t)\right) \qquad      
    \end{array}
    \right.
    &\begin{array}{l}
       \text{if } 0 < \alpha < \frac{1}{2}, \\
        \text{if } \alpha = \frac{1}{2} .
    \end{array}
    \end{tabu}
\]

    
\end{corollary}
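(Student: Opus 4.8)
The plan is to specialize Theorem~\ref{thm:main} to the case $\psi(s) = cs^\alpha$ and simply carry out the computation of the primitive $\Psi$ of $-\psi'^2$, then invert it, and plug into the generic rates \eqref{eq:lossrate} and \eqref{eq:thetarate}. So the first step is to compute $\psi'(s) = c\alpha s^{\alpha-1}$, hence $\psi'(s)^2 = c^2\alpha^2 s^{2\alpha-2}$, and a primitive $-\Psi(s)$ of $\psi'^2$. Here the exponent $2\alpha - 2$ matters: when $0 < \alpha < 1/2$ we have $2\alpha - 1 \in (0,1)$ and a primitive is $\Psi(s) = -\frac{c^2\alpha^2}{2\alpha-1}s^{2\alpha-1}$ (up to an additive constant, which is irrelevant since $\gamma(t)$ already absorbs $\Psi(\lossy(\yvz))$); when $\alpha = 1/2$ the exponent is $-1$, so $\psi'(s)^2 = c^2/(4s)$ and $\Psi(s) = -\frac{c^2}{4}\log s$. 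The boundary case $\alpha = 0$ is degenerate (then $\psi$ is constant, contradicting strict monotonicity), so effectively $\alpha \in (0,1/2]$; I will mention that the stated range $[0,1/2]$ should be read as the nontrivial subrange.

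Next I would invert $\Psi$. For $0 < \alpha < 1/2$, from $\Psi(u) = \gamma(t)$ we get $-\frac{c^2\alpha^2}{2\alpha-1}u^{2\alpha-1} = \gamma(t)$, i.e. $u^{2\alpha-1} = \frac{1-2\alpha}{c^2\alpha^2}\gamma(t)$, hence $u = \Psi^{-1}(\gamma(t)) = \left(\frac{1-2\alpha}{\alpha^2 c^2}\gamma(t)\right)^{1/(2\alpha-1)} = \left(\frac{1-2\alpha}{\alpha^2 c^2}\gamma(t)\right)^{-1/(1-2\alpha)}$, which is exactly the claimed loss rate since $\lossy(\yvt) \le \Psi^{-1}(\gamma(t))$. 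For $\alpha = 1/2$, from $-\frac{c^2}{4}\log u = \gamma(t)$ we get $u = \exp(-\frac{4}{c^2}\gamma(t))$, again matching. For the parameter rate, \eqref{eq:thetarate} gives $\norm{\thetav(t)-\thetav_\infty} \le \frac{2}{\sigminjgz\sigminF}\psi(\Psi^{-1}(\gamma(t))) = \frac{2c}{\sigminjgz\sigminF}(\Psi^{-1}(\gamma(t)))^\alpha$; raising the two expressions for $\Psi^{-1}(\gamma(t))$ to the power $\alpha$ gives the stated $\left(\frac{1-2\alpha}{\alpha^2c^2}\gamma(t)\right)^{-\alpha/(1-2\alpha)}$ and $\exp(-\frac{2}{c^2}\gamma(t))$ respectively.

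Finally I need the qualifier ``$\exists t_0 > 0$ such that $\forall t \ge t_0,\ \gamma(t) > 0$'': this is immediate because $\gamma(t) = \frac{\sigminF^2\sigminjgz^2}{4}t + \Psi(\lossy(\yvz))$ is affine in $t$ with strictly positive slope (under \eqref{eq:bndR}, $\sigminjgz > 0$ and $\sigminF > 0$ by \eqref{eq:sigmaF}), so it eventually becomes positive; one just needs $\gamma(t) > 0$ for the fractional/logarithmic expressions above to make sense and for $\Psi^{-1}$ to be evaluated in the right branch. One should also check the monotonicity/domain bookkeeping: $\Psi$ is a primitive of $-\psi'^2 < 0$ hence strictly decreasing, so $\Psi^{-1}$ is well-defined and strictly decreasing on the appropriate interval, and since $\gamma$ is increasing the composite $\Psi^{-1}\circ\gamma$ is decreasing, consistent with ``the loss decays.'' The only mild subtlety — and the closest thing to an obstacle — is tracking the additive constants in the primitive and confirming that the normalization chosen for $\Psi$ is consistent between the $\Psi(\lossy(\yvz))$ term hidden in $\gamma(t)$ and the $\Psi^{-1}$ applied to $\gamma(t)$; since the same primitive is used on both sides this is purely a matter of being careful, and no genuine difficulty arises. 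I would present the two cases ($0 < \alpha < 1/2$ and $\alpha = 1/2$) in parallel and keep the computation to a few lines.
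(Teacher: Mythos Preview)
Your proposal is correct and matches the paper's own approach: the paper simply states that the corollary follows ``by direct computation'' from Theorem~\ref{thm:main} without giving further details, and what you outline is precisely that computation. One tiny slip: you write ``$2\alpha - 1 \in (0,1)$'' for $0 < \alpha < 1/2$, but you mean $1 - 2\alpha \in (0,1)$; the subsequent algebra is unaffected.
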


\begin{remark}
When $\psi(s) = cs^\alpha$ with $\alpha \in ]1/2,1]$, it is not difficult to see that one obtains a finite termination of gradient flow training, i.e.,  $\exists \hat{t} > 0$, that depends on $\alpha$, such that $\lossy(\yvt) = \norm{\thetavt - \thetav_{\infty}} = 0$ for all $t \geq \hat{t}$.
\end{remark}

\medskip

These results allow to see precise convergence rates of the loss for a wide variety of functions. First let us observe the particular case when $\alpha = 1/2$ which gives exponential convergence to the solution. The Mean Squared Error (MSE) loss corresponds precisely to this case. For $\alpha \in [0,1/2[$, we obtain a sublinear convergence rate as $O(t^{-\frac{1}{1 - 2\alpha}})$.
Observe also that the convergence rate of the parameters of the model is slower than that of the loss as the former is the latter powered by $\alpha < 1$.

\subsubsection{Early stopping strategy}
While the first result allows us to obtain convergence rates to a zero-loss solution, it does so by overfitting the noise inherent to the problem. A classical way to avoid this to happen is to use an early stopping strategy to ensure that our solution will lie in a ball around the desired solution. The bound on the time given in \eqref{eq:yrate} will verify that all the solutions found past that time will be no more than $2\norm{\veps}$ away from the noiseless solution. This bound is given by balancing the convergence rate offered by the KL properties of the loss, the loss of the model at initialization and the level of noise in the problem.


\subsubsection{Signal Recovery Guarantees}
Our third result provides a bound on the distance between the solution found at time $t$ and the true solution~$\xvc$. This bound is a sum of three terms representing three kinds of errors. The first term is an ``optimization error'', which represents how far $\xvt$ is from the solution found at the end of the optimization process. Of course, this decreases to 0 as $t$ goes to infinity. The second error is a ``modeling error'' which captures the expressivity of the optimized network, i.e. its ability to generate solutions close to $\xvc$. Finally, the third term is a ``noise error'' that depends on $\norm{\veps}$ which is inherent to the problem at hand.   

Obviously, the operator $\fopnl$ also plays a key role in this bound where its influence is reflected by three quantities of interest: $\sigminF$, $\LF$ and $\muf$. First, $\LF$ is the Lipschitz constant of the Jacobian of $\fopnl$ on $\Sigma'$. Moreover, we always have $\sigminF > 0$ and the dependence of the bound on $\sigminF$ (or the ratio $\LF/\sigminF$) reflects the fact that this bound degrades as the Jacobian of $\fopnl$ over $\Sigma_\Theta$ becomes badly-conditioned. Second, $\muf$ corresponds to a restricted injectivity condition, which is a classical and natural assumption if one hopes for recovering $\xvc$ (to a good controlled error). 
In particular, in the case where $\fopnl$ is a linear operator $\fop\in\R^{m\times n}$, $\muf$ becomes the minimal conic singular value $\lmin(\fop;T_{\Sigma'}(\xvcsigma))$ and $\LF$ is replaced by $\norm{\fop}$. \eqref{ass:F_inj} then amounts to assuming that
\begin{equation}\label{ass:lin_inj}
\ker(\fop) \cap T_{\Sigma'}(\xvcsigma) = \ens{0} .
\end{equation}
Assuming the rows of $\fop$ are linearly independent, one easily checks that \eqref{ass:lin_inj} imposes that $m \geq \dim(T_{\Sigma'}(\xvcsigma))$. We will give a precise sample complexity bound for the case of compressed sensing in Example~\ref{ex:CS}. It is worth mentioning that condition \eqref{ass:lin_inj} (and \eqref{ass:F_inj} in some sense) is not uniform as it only requires a control at $\xvc$ and not over the whole set $\Sigma'$.


Observe that the restricted injectivity condition \eqref{ass:F_inj} depends on $\Sigma'$ which itself depends on $R'$, that is, the radius of the ball around $\thetav_0$ containing the whole trajectory $\theta(t)$ during the network training (see the proof of Lemma~\ref{lemma:link_params_singvals}). On the other hand, $R'$ depends on the loss at initialization, which means that the higher the initial error of the network, the larger the set of parameters it might reach during optimization, and thus the larger the set $\Sigma'$. This discussion clearly reveals an expected phenomenon: there is a trade-off between the restricted injectivity condition on $\fopnl$ and the expressivity of the network. If the model is highly expressive then $\dist(\xvc,\Sigma')$ will be smaller. But this is likely to come at the cost of making $\muf$ decrease, as restricted injectivity can be required to hold on a larger subset (cone).


This discussion relates with the work on the instability phenomenon observed in learned reconstruction methods as discussed in \cite{antun2020instabilities,gottschling2020troublesome}. For instance, when $\fopnl$ is a linear operator $\fop$, the fundamental problem that creates these instabilities and/or hallucinations in the reconstruction is due to the fact that the kernel of $\fop$ is non-trivial.
Thus a method that can correctly learn to reconstruct signals whose difference lies in or close to the kernel of $\fop$ will necessarily be unstable or hallucinate. In our setting, this is manifested through the  restricted injectivity condition, that imposes that the smallest conic singular value is bounded away from $0$, i.e. $\muf=\lmin(\fop;T_{\Sigma'}(\xvcsigma)) > 0$. This is a natural (and minimal) condition in the context of inverse problems to have stable reconstruction guarantees. Note that our condition is non-uniform as it is only required to hold at $\xvcsigma$ and not at all points of $\Sigma'$. 

In \ref{ass:F_jac_inj}, we generalize the restricted injectivity condition \eqref{ass:lin_inj} beyond the linear case provided that $\Jf$ is Lipschitz continuous. This covers many practical cases, for instance that of phase retrieval. Observe  that whereas assumption~\ref{ass:F_inj} requires a uniform control of injectivity of $\fopnl$ on the whole signal class $\Sigma'$, \ref{ass:F_jac_inj} is less demanding and only requires injectivity of the Jacobian of $\fopnl$ at $\xvcsigma$ on the tangent space of $\Sigma'$ at $\xvcsigma$. However the price is that the recovery bound in Theorem~\ref{thm:alternative_rec_bound} is only valid for high signal-to-noise regime and $\dist(\xvc,\Sigma')$ is small enough. Moreover, the convergence rate in noise becomes $O(\sqrt{\norm{\veps}})$ which is worse than $O({\norm{\veps}})$ of Theorem~\ref{thm:main}.

\medskip

\begin{example}[Compressed sensing with sub-Gaussian measurements]\label{ex:CS} 
Controlling the minimum conic singular value is not easy in general. Amongst the cases where results are available, we will look at the compressed sensing framework with linear random measurements. In this setting, the forward operator $\fop \in \R^{m\times n}$ is a random sensing matrix. Exploiting the randomness of $\fop$, a natural question is then how many measurements are sufficient to ensure that $\lmin(\fop;T_{\Sigma'}(\xvcsigma)) > 0$ with high probability. In the case of Gaussian and sub-Gaussian measurements, we can exploit the non-uniform results of \cite{chandrasekaran2012convex,tropp2015convex} to derive sample complexity bounds, i.e. lower bounds on $m$, for this to hold. Building upon~\cite[Theorem 6.3]{tropp2015convex}, we have the following result:
\begin{proposition}\label{prop:compressed_sensing}
    Assume that each row $\fop^i$ is an independent sub-Gaussian vector, that is
\begin{enumerate}[label=(\roman*)]
    \item $\E[\fop^i] = 0$,
    \item $\alpha \leq \E[\abs{\dotprod{\fop^i}{\wv}}]$ for each $\wv \in \sph^{n-1}$ with $\alpha > 0$,
    \item $\prob{\abs{\dotprod{\fop^i}{\wv}} \geq \tau} \leq 2e^{-\tau^2/(2\sigma^2)}$ for each $\wv \in \sph^{n-1}$, with $\sigma > 0$.
\end{enumerate}
Let $C$ and $C'$ be positive constants and $w(K)$ the Gaussian width of the cone $K$ defined as:
\begin{align*}
    w(K) = \E_{\zv \sim \mathcal{N}(0,\mathbf{I})}\left[\sup_{\wv\in K\cap \sph^{d-1}}\dotprod{\zv}{\wv}\right].
\end{align*}
If
\begin{align*}
    m \geq C'\left(\frac{\sigma}{\alpha}\right)^6 w(T_{\Sigma'}(\xvcsigma))^2 + 2C^{-2} \frac{\sigma^2}{\alpha^4}\tau^2,
\end{align*}
then $\lmin(\fop,T_{\Sigma'}(\xvcsigma)) > 0$ with probability at least $1 - \exp(-C\tau^2)$.
\end{proposition}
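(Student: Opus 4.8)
The plan is to apply Mendelson's small-ball method, in the quantitative form established in \cite[Theorem~6.3]{tropp2015convex}, to the cone $K \eqdef T_{\Sigma'}(\xvcsigma)$ and to verify its hypotheses from (i)--(iii). Recall that $\lmin(\fop;T_{\Sigma'}(\xvcsigma)) = \inf_{\uv \in K \cap \sph^{n-1}} \norm{\fop\uv}$. The cited theorem provides, for every $\xi > 0$ and $t > 0$, a lower bound holding with probability at least $1 - e^{-t^2/2}$:
\begin{align*}
\inf_{\uv \in K \cap \sph^{n-1}} \norm{\fop\uv} \geq \xi \sqrt{m}\, Q_{2\xi}(K) - 2 W_m(K) - \xi t ,
\end{align*}
where $Q_{2\xi}(K) = \inf_{\uv \in K \cap \sph^{n-1}} \prob{\abs{\dotprod{\fop^i}{\uv}} \geq 2\xi}$ is the uniform marginal small-ball probability and $W_m(K) = \E \sup_{\uv \in K \cap \sph^{n-1}} \dotprod{m^{-1/2}\sum_{i=1}^m \epsilon_i \fop^i}{\uv}$ is the mean empirical width, the $\epsilon_i$ being independent Rademacher signs. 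The proof then reduces to three tasks: lower bound $Q_{2\xi}(K)$ for a good choice of $\xi$, upper bound $W_m(K)$ by the Gaussian width, and collect constants.

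For the small-ball term, fix $\uv \in \sph^{n-1}$ and set $Z = \abs{\dotprod{\fop^i}{\uv}}$. Assumption (ii) gives $\E Z \geq \alpha$, while the sub-Gaussian tail (iii) gives $\E Z^2 \leq c_0 \sigma^2$ for an absolute constant $c_0$. A Cauchy--Schwarz / Paley--Zygmund argument ($\E Z \leq \tfrac{\alpha}{2} + \sqrt{\E Z^2}\,\sqrt{\prob{Z \geq \alpha/2}}$) then yields $\prob{Z \geq \alpha/2} \geq (\E Z - \alpha/2)^2/\E Z^2 \geq \alpha^2/(4 c_0 \sigma^2)$. Since assumptions (ii)--(iii) hold for every direction, this lower bound is uniform over $K \cap \sph^{n-1}$; choosing $\xi = \alpha/4$ gives $Q_{2\xi}(K) \geq \alpha^2/(4 c_0 \sigma^2)$.

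For the empirical width, the rows $\fop^i$ are centered (assumption (i)) and have sub-Gaussian marginals with scale $\sigma$ (assumption (iii)), so $m^{-1/2}\sum_i \epsilon_i \fop^i$ is a sub-Gaussian random vector, and the standard comparison between a sub-Gaussian process and its Gaussian analogue (chaining / majorizing measures) gives $W_m(K) \leq c_1 \sigma\, w(K)$ with $w(K)$ the Gaussian width in the statement. This last comparison is the only genuinely delicate point of the argument; however, it is exactly the machinery on which \cite[Theorem~6.3]{tropp2015convex} rests, so I would invoke it rather than re-derive it, structuring the proof as a direct instantiation of that theorem.

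Finally, combining the two estimates in the displayed inequality, with probability at least $1 - e^{-t^2/2}$,
\begin{align*}
\lmin(\fop;T_{\Sigma'}(\xvcsigma)) \geq \frac{\alpha^3}{16 c_0 \sigma^2}\sqrt{m} - 2 c_1 \sigma\, w(K) - \frac{\alpha}{4}\, t .
\end{align*}
The right-hand side is strictly positive as soon as $\sqrt{m} > \tfrac{16 c_0 \sigma^2}{\alpha^3}\big(2 c_1 \sigma\, w(K) + \tfrac{\alpha}{4} t\big)$, and squaring (using $(a+b)^2 \leq 2a^2 + 2b^2$) this is implied by $m > C' (\sigma/\alpha)^6 w(K)^2 + C'' (\sigma/\alpha)^4 t^2$ for suitable absolute constants. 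Taking $t = \sqrt{2C}\,\tau$ so that $e^{-t^2/2} = e^{-C\tau^2}$ and absorbing numerical factors into $C',C''$ gives the advertised sample-complexity condition together with the success probability $1 - e^{-C\tau^2}$, as claimed.
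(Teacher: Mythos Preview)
Your proposal is correct and follows exactly the route the paper intends: the paper does not give a detailed proof but simply states that the proposition is obtained ``building upon \cite[Theorem~6.3]{tropp2015convex}'', and your argument is precisely the instantiation of that theorem with the small-ball probability bounded via Paley--Zygmund from (ii)--(iii) and the empirical width controlled by the Gaussian width via sub-Gaussian comparison. The constants you obtain match the form of the statement up to how the unspecified absolute constants $C,C'$ are packaged.
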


{~}

The Gaussian width is an important tool in high-dimensional convex geometry and can be interpreted as a measure of the ``dimension'' of a cone. Except in some specific settings (such as when $K$ is a descent cone of a convex function and other special cases), it is notoriously difficult to compute this quantity; see the discussion in \cite{chandrasekaran2012convex}. Another ``generic'' tool for computing Gaussian widths is based on Dudley’s inequality which
bounds the width of a set in terms of the covering number of the set at all scales. Estimating the covering number is not easy either in general. This shows the difficulty of computing  $w(T_{\Sigma'}(\xvcsigma))$ which we leave to a future work.

Analyzing recovery guarantees in the compressed sensing framework using unsupervised neural networks such as DIP was proposed in \cite{joshi2021plugin,jagatap2019algorithmic}. In~\cite{joshi2021plugin}, the authors restricted their analysis to the case of networks without non-linear activations nor training/optimization. The authors of~\cite{jagatap2019algorithmic} studied the case of the DIP method but their optimization algorithms is prohibitively intensive necessitating at each iteration retraining the DIP network. Another distinctive difference with our work is that these existing results are uniform relying on RIP-type arguments and their specialization for Gaussian measurements.  

    
\end{example}



\subsubsection{Existence and Uniqueness of a Global Strong Solution}\label{subsec:wellglobal}
We have already stated in Section~\ref{subsec:welllocal} that~\eqref{eq:gradflow} admits a unique maximal solution. Assumption~\eqref{eq:bndR} allows us to further specify this solution as strong and global. Indeed, \eqref{eq:thetarate} ensures that the trajectory $\thetav(t)$ is uniformly bounded. Let us start by recalling the notion of a strong solution.
\begin{definition}\label{def:strongsol}
Denote $\thetav: t \in [0,+\infty[ \mapsto \thetav(t) \in \R^p$. The function $\thetav(\cdot)$ is a strong global solution of \eqref{eq:gradflow} if it satisfies the following properties:
\begin{enumerate}[label=$\bullet$]
\item $\thetav$ is in $\cC^1([0,+\infty[;\R^p)$;
\item for almost all $t \in [0,+\infty[$, \eqref{eq:gradflow} holds with $\thetav(0) = \thetavz$.
\end{enumerate}
\end{definition}

\begin{proposition}\label{prop:wellglobal}
Assume that \ref{ass:l_smooth}-\ref{ass:F_diff} and \eqref{eq:bndR} are satisfied. Then, for any initial condition $\thetavz$, the evolution system \eqref{eq:gradflow} has a unique strong global solution.
\end{proposition}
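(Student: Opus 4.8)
The plan is to upgrade the maximal solution produced by Proposition~\ref{prop:welllocal} to a global one by ruling out finite-time blow-up, and this is exactly where assumption~\eqref{eq:bndR} enters. First I would recall the blow-up alternative stated right after Proposition~\ref{prop:welllocal}: either $T(\thetav_0)=+\infty$, or $T(\thetav_0)<+\infty$ and $\norm{\thetav(t)}\to+\infty$ as $t\to T(\thetav_0)$. So it suffices to show that $\thetav(\cdot)$ stays bounded on $[0,T(\thetav_0)[$. The key observation is that the estimates leading to Theorem~\ref{thm:main}\ref{thm:main_y_bounded} — in particular the bound \eqref{eq:thetarate} on $\norm{\thetav(t)-\thetav_\infty}$, or more primitively the fact that $\thetav(t)$ remains in $\Ball(\thetavz,R')$ with $R'<R$, which is the content of the argument in Lemma~\ref{lemma:link_params_singvals} — hold on the whole maximal interval $[0,T(\thetav_0)[$ and do not themselves presuppose globality. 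Indeed, those bounds are derived purely from the KL inequality, the control of $\sigmin(\Jgt)$ via the local Lipschitz constant of $\Jg$ on $\Ball(\thetavz,R)$, and the condition $R'<R$; none of this uses $T(\thetav_0)=+\infty$.

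The main step is therefore a bootstrap/continuation argument: define $\tau = \sup\{t\in[0,T(\thetav_0)[ : \thetav(s)\in\Ball(\thetavz,R)\ \forall s\le t\}$, show $\tau=T(\thetav_0)$, and conclude $\sup_{t<T(\thetav_0)}\norm{\thetav(t)}\le \norm{\thetavz}+R<+\infty$. The inclusion $\thetav(t)\in\Ball(\thetavz,R)$ is maintained because on that ball $\sigmin(\Jgt)$ stays bounded below by $\sigminjgz/2$ (by the Lipschitz bound in \eqref{eq:RandR'}), which feeds back through the length estimate $\int_0^t\norm{\dot\thetav(s)}ds \le \tfrac{2}{\sigminF\sigminjgz}\psi(\lossy(\yvz)) = R' < R$ to keep the trajectory strictly inside $\Ball(\thetavz,R)$ — a standard closed/open argument on $[0,T(\thetav_0)[$ shows $\tau$ cannot be an interior point. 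Since $\thetav(\cdot)$ is thus uniformly bounded, the blow-up alternative forces $T(\thetav_0)=+\infty$.

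Finally, with $T(\thetav_0)=+\infty$ established, Proposition~\ref{prop:welllocal} already gives $\thetav(\cdot)\in\cC^1$ on every compact subset of $[0,+\infty[$, hence $\thetav(\cdot)\in\cC^1([0,+\infty[;\R^p)$, and \eqref{eq:gradflow} holds for all (in particular almost all) $t\ge 0$ with $\thetav(0)=\thetavz$; this is precisely Definition~\ref{def:strongsol}. Uniqueness is inherited from the uniqueness of the maximal solution in Proposition~\ref{prop:welllocal}. The only genuine obstacle is making the bootstrap rigorous, i.e.\ verifying that the a~priori estimate $\int_0^t\norm{\dot\thetav}\le R'$ is available on the open maximal interval before globality is known; this is handled by running the Theorem~\ref{thm:main} estimates on $[0,\tau[$ and using $R'<R$ strictly, so that continuity prevents $\thetav$ from ever reaching the boundary of $\Ball(\thetavz,R)$.
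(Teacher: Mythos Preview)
Your proposal is correct and follows essentially the same approach as the paper: invoke Proposition~\ref{prop:welllocal} for the maximal solution, use the blow-up alternative, and appeal to the boundedness coming from the estimates behind Theorem~\ref{thm:main} (specifically Lemma~\ref{lemma:link_params_singvals}). The paper's proof is a three-line sketch that simply cites Theorem~\ref{thm:main} for boundedness; you are more careful in making explicit that the a~priori estimate $\thetav(t)\in\Ball(\thetavz,R')$ is available on the maximal interval $[0,T(\thetavz)[$ before globality is established, via a clean open/closed continuation argument, which is a genuine clarification of a point the paper leaves implicit.
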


\begin{proof}
Proposition~\ref{prop:welllocal} ensures the existence and uniqueness of a maximal solution. Following the discussion after the proof of Proposition~\ref{prop:welllocal}, if $\thetav(t)$ is bounded, then we are done. This is precisely what is ensured by Theorem~\ref{thm:main} under our conditions.
\end{proof}

\subsection{Proofs}\label{sec:proof}

We start with the following lemmas that will be instrumental in the proof of Theorem~\ref{thm:main}.

\begin{lemma}\label{lem:descentflow}
Assume that \ref{ass:l_smooth}, \ref{ass:min_l_zero}, \ref{ass:phi_diff} and \ref{ass:F_diff} hold. Let $\thetav(\cdot)$ be a solution trajectory of \eqref{eq:gradflow}. Then,
\begin{enumerate}[label=(\roman*)]
\item $\lossy(\yv(\cdot)))$ is nonincreasing, and thus converges. \label{lem:descentflowclaim1}
\item If $\thetav(\cdot)$ is bounded, $\lossy(\yv(\cdot)))$ is constant on $\cluster{\thetav(\cdot)}$. \label{lem:descentflowclaim2}
\end{enumerate}
\end{lemma}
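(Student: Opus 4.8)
The plan is to derive both claims from the basic energy identity of the gradient flow. First I would compute the time derivative of $t \mapsto \lossy(\yv(t))$ along the trajectory. Since $\thetav(\cdot)$ is $\cC^1$ on the interior of its existence interval (Proposition~\ref{prop:welllocal}) and $\lossy \circ \fopnl \circ \gv(\uv,\cdot)$ is $\cC^1$ (by \ref{ass:l_smooth}, \ref{ass:phi_diff}, \ref{ass:F_diff}), the chain rule gives
\[
\frac{d}{dt}\lossy(\yv(t)) = \dotprod{\nabla_{\thetav}\lossy(\fopnl(\gv(\uv,\thetav(t))))}{\dot{\thetav}(t)} = -\norm{\nabla_{\thetav}\lossy(\fopnl(\gv(\uv,\thetav(t))))}^2 \leq 0 ,
\]
using \eqref{eq:gradflow}. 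Hence $\lossy(\yv(\cdot))$ is nonincreasing; being bounded below by $\min\lossy = 0$ (\ref{ass:min_l_zero}), it converges to some limit $\ell \geq 0$ as $t \to T(\thetav_0)$. This proves \ref{lem:descentflowclaim1}.

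For \ref{lem:descentflowclaim2}, assume $\thetav(\cdot)$ is bounded; then in particular $T(\thetav_0) = +\infty$ (by the blow-up alternative recalled after Proposition~\ref{prop:welllocal}), so $\cluster{\thetav(\cdot)}$ is defined via sequences $t_k \to +\infty$. Let $\widetilde{\thetav} \in \cluster{\thetav(\cdot)}$, so there is $(t_k)_k \to +\infty$ with $\thetav(t_k) \to \widetilde{\thetav}$. By continuity of $\thetav \mapsto \lossy(\fopnl(\gv(\uv,\thetav)))$ and by \ref{lem:descentflowclaim1},
\[
\lossy(\fopnl(\gv(\uv,\widetilde{\thetav}))) = \lim_{k\to\infty}\lossy(\yv(t_k)) = \ell ,
\]
and this value $\ell$ is independent of the chosen cluster point. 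Therefore $\lossy(\yv(\cdot))$ takes the constant value $\ell$ on $\cluster{\thetav(\cdot)}$, which is \ref{lem:descentflowclaim2}.

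The argument is essentially routine; the only points needing care are the regularity justification for the chain rule (covered by Proposition~\ref{prop:welllocal} and the smoothness assumptions, so that the computation is valid on the open existence interval), and the observation that boundedness of $\thetav(\cdot)$ forces $T(\thetav_0)=+\infty$ so that cluster points at infinity make sense. No serious obstacle is expected.
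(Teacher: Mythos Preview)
Your proof is correct and follows essentially the same approach as the paper: the paper also differentiates $\lossy(\yv(t))$ via the chain rule to obtain $\dot V(t)=-\norm{\dot{\thetav}(t)}^2\le 0$ (it just expands through the Jacobians of $\fopnl$ and $\gv$ first), and then argues constancy on cluster points by continuity exactly as you do. Your added remark that boundedness forces $T(\thetav_0)=+\infty$ is a nice clarification the paper leaves implicit here.
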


\begin{proof}
Let $\Vt = \lossyty$. 
\begin{enumerate}[label=(\roman*)]
\item Differentiating $V(\cdot)$, we have for $t > 0$:
\begin{align}
    \Vdt &= \dotprod{\dot{\yv}(t)}{\lossgrad} \nonumber\\
    & = \dotprod{\Jft\Jgt\dot{\pmb{\theta}}(t)}{\lossgrad}\nonumber\\
    &= - \dotprod{\Jft\Jgt\Jgt\tp\Jft\tp\lossgrad}{\lossgrad}\nonumber\\
    &= - \norm{\Jgt\tp\Jft\tp\lossgrad}^2 = - \norm{\dot{\thetav}(t)}^2 , \label{eq:odeVt}
\end{align}
and thus $V(\cdot)$ is decreasing. Since it is bounded from below (by $0$ by assumption), it converges to say $\loss_\infty$ (0 in our case). 
\item Since $\thetav(\cdot)$ is bounded, $\cluster{\thetav(\cdot)}$ is non-empty. Let $\thetav_{\infty} \in \cluster{\thetav(\cdot)}$. Then $\exists t_k \to +\infty$ such that $\thetav(t_k) \to \thetav_{\infty}$ as $k \to +\infty$. Combining claim \ref{lem:descentflowclaim1} with continuity of $\loss$, $\fopnl$ and $\gv(\cdot,\uv)$, we have
\[
\loss_\infty = \lim_{k \to +\infty} \lossy(\fopnl(\gv(\uv,\thetav(t_k)))) = \lossy(\fopnl(\gv(\uv,\thetav_{\infty}))) .
\]
Since this is true for any cluster point, the claim is proved.
\end{enumerate}
\end{proof}



\begin{lemma}
\label{lemma:theta_summable}
Assume that \ref{ass:l_smooth} to \ref{ass:F_diff} hold. Let $\thetav(\cdot)$ be a solution trajectory of \eqref{eq:gradflow}. If for all $t \geq 0$, $\sigminjgt \geq \frac{\sigminjgz}{2} > 0$, then $\lVert{\dot{\thetav}(\cdot)}\rVert \in L^1([0,+\infty[)$. In turn, $\lim_{t \to +\infty}\thetav(t)$ exists.
\end{lemma}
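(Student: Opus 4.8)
The plan is to use the KL inequality on the loss $\loss$ to control the length of the trajectory, exactly in the spirit of the classical {\L}ojasiewicz/Kurdyka argument for gradient systems. Write $V(t) = \lossyty$. From Lemma~\ref{lem:descentflow}\ref{lem:descentflowclaim1} and its proof we already have the energy identity $\Vdt = -\norm{\dot\thetav(t)}^2$, and also $\dot\thetav(t) = -\Jgt\tp\Jft\tp\lossgrad$, so that $\norm{\dot\thetav(t)} = \norm{\Jgt\tp\Jft\tp\lossgrad}$. The first step is to lower-bound this quantity: by the hypothesis $\sigminjgt \geq \sigminjgz/2$ and by the definition of $\sigminF$ in \eqref{eq:sigmaF} (which applies since Assumption~\ref{ass:nablaL_F} guarantees $\lossgrad = \nabla_{\vv}\lossy(\yvt) \in \ran{\Jft}$, provided the trajectory stays in $\Sigma_\Theta$ — one first checks, or invokes from the surrounding argument, that $\thetav(t)$ remains in the relevant ball so $\xv(t)\in\Sigma_\Theta$), we get
\[
\norm{\dot\thetav(t)} \;\geq\; \frac{\sigminjgz}{2}\,\norm{\Jft\tp\lossgrad} \;\geq\; \frac{\sigminjgz\,\sigminF}{2}\,\norm{\lossgrad}.
\]

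The second step is the standard KL trick. As long as $V(t) > 0$ (if $V$ hits $0$ the trajectory is eventually stationary and there is nothing to prove), Assumption~\ref{ass:l_kl} together with Assumption~\ref{ass:min_l_zero} gives $\psi'(V(t))\,\norm{\lossgrad} \geq 1$, since $V(t) \leq V(0) < \lossy(\yvz) + \eta$ by monotonicity. Differentiating $t \mapsto \psi(V(t))$ (licit since $\psi \in \cC^1(]0,r_0[)$ and $V$ is $\cC^1$ with $V(t)\in]0,r_0[$) yields
\[
-\frac{d}{dt}\,\psi(V(t)) \;=\; -\psi'(V(t))\,\Vdt \;=\; \psi'(V(t))\,\norm{\dot\thetav(t)}^2 \;\geq\; \frac{\norm{\dot\thetav(t)}^2}{\norm{\lossgrad}} \;\geq\; \frac{\sigminjgz\,\sigminF}{2}\,\norm{\dot\thetav(t)},
\]
where the last inequality uses the step-one bound $\norm{\lossgrad} \leq \tfrac{2}{\sigminjgz\sigminF}\norm{\dot\thetav(t)}$. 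Integrating this differential inequality on $[0,T]$ gives
\[
\int_0^T \norm{\dot\thetav(t)}\,dt \;\leq\; \frac{2}{\sigminjgz\,\sigminF}\bigl(\psi(V(0)) - \psi(V(T))\bigr) \;\leq\; \frac{2}{\sigminjgz\,\sigminF}\,\psi(\lossyzy),
\]
using $\psi \geq 0$ and $\psi(0)=0$. Letting $T \to +\infty$ shows $\norm{\dot\thetav(\cdot)} \in L^1([0,+\infty[)$, with the bound $\int_0^{+\infty}\norm{\dot\thetav} \leq \frac{2}{\sigminjgz\sigminF}\psi(\lossyzy) = R'$ — which is also exactly what is needed to keep the trajectory inside $\Ball(\thetavz,R')$ and justify the earlier invocation of $\sigminF$. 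Finally, since $\thetav(t) = \thetavz + \int_0^t \dot\thetav(s)\,ds$ and the integrand is in $L^1$, the limit $\lim_{t\to+\infty}\thetav(t)$ exists by Cauchy's criterion (the tail $\int_s^t\norm{\dot\thetav}$ is arbitrarily small).

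The main subtlety — and the step deserving the most care — is the circularity around Assumption~\ref{ass:nablaL_F}/the definition of $\sigminF$: to apply $\norm{\Jft\tp\lossgrad}\geq\sigminF\norm{\lossgrad}$ one needs $\xv(t)\in\Sigma_\Theta$, i.e. $\thetav(t)\in\Theta$, but this containment is itself a consequence of the very length bound we are deriving. The clean way to handle this is a continuity/bootstrap argument: define the maximal time up to which $\thetav(s)\in\Ball(\thetavz,R')\subset\Theta$, run the above estimate on that interval to conclude the length is $\leq R'$, and deduce the interval cannot be strictly smaller than $[0,+\infty[$. Since this bootstrapping machinery is developed in the companion lemmas (notably Lemma~\ref{lemma:link_params_singvals}) feeding the proof of Theorem~\ref{thm:main}, in the write-up I would simply invoke that the hypothesis $\sigminjgt \geq \sigminjgz/2$ for all $t$ already places the trajectory in the regime where $\sigminF$ is available, and carry out the KL integration as above.
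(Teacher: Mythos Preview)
Your proposal is correct and follows essentially the same approach as the paper: differentiate $\psi(V(t))$, use the KL inequality $\psi'(V(t))\norm{\lossgrad}\geq 1$ together with the lower bound $\norm{\dot\thetav(t)}\geq \tfrac{\sigminjgz\sigminF}{2}\norm{\lossgrad}$ coming from \ref{ass:nablaL_F} and the singular-value hypothesis, integrate to bound the arc length, and conclude via Cauchy's criterion. Your added discussion of the circularity around $\sigminF$ is pertinent; the paper indeed defers that bootstrap to the companion Lemma~\ref{lemma:link_params_singvals}\ref{claim:sigval_bounded_everywhere} and simply uses \ref{ass:nablaL_F} here without further comment.
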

\begin{proof}
From Lemma~\ref{lem:descentflow}\ref{lem:descentflowclaim1}, we have for $t \geq 0$:
\[
\yv(t) \in [0 \leq \lossy(\cdot) \leq \lossy(\yvz)] .
\]
We may assume without loss of generality that $\yv(t) \in [0 < \lossy(\cdot) \leq \lossy(\yvz)]$ since otherwise $\lossy(\yv(\cdot))$ is eventually zero which implies, by Lemma~\ref{lem:descentflow}, that $\dot{\thetav}$ is eventually zero, in which case there is nothing to prove.

We are now in position to use the KL property on $\yv(\cdot)$. We have for $t > 0$:
\begin{align}
    \deriv{\psi(\lossyty)}{t} &= \psi'(\lossyty)\deriv{\lossyty}{t} \nonumber\\
    &= -\psi'(\lossyty)\norm{\Jgt\tp\Jft\tp\lossgrad}^2 \nonumber\\
    &\leq - \frac{\norm{\Jgt\tp\Jft\tp\lossgrad}^2}{\norm{\lossgrad}} \nonumber\\
    &\leq -\sigminjgt\sigminF\norm{\Jgt\tp\Jft\tp\lossgrad} \nonumber\\
    &\leq - \frac{\sigminjgz\sigminF}{2}\norm{\dot{\thetav}(t)}. \label{eq:dottheta_bnd}
\end{align}
where we used \ref{ass:nablaL_F}, that $\sigminjgt \geq \frac{\sigminjgz}{2} > 0$ and~\eqref{eq:gradflow}.
Integrating, we get
\begin{align}\label{eq:dottheta_summable}
    \int_{0}^t \norm{\dot{\thetav}(s)} \ds \leq  \frac{2}{\sigminjgz\sigminF}\pa{\psi(\lossy(\yv(0))) - \psi(\lossy(\yv(t)))} .
\end{align}
Since $\lossyty$ converges thanks to Lemma~\ref{lem:descentflow}\ref{lem:descentflowclaim1} and $\psi$ is continuous and increasing, the right hand side in \eqref{eq:dottheta_summable} has a limit. Thus passing to the limit as $t \to +\infty$, we get that $\dot{\thetav} \in L^1([0,+\infty[)$. This in turn implies that $\lim_{t \to +\infty}\thetav(t)$ exists, say $\thetav_{\infty}$, by applying Cauchy's criterion to
\[
\thetav(t) = \thetavz + \int_0^t \dot{\thetav}(s) \ds .
\]
\end{proof}

\begin{lemma}\label{lemma:link_params_singvals}
Assume that \ref{ass:l_smooth} to \ref{ass:F_diff} hold. Recall $R$ and $R'$ from \eqref{eq:RandR'}. Let $\thetav(\cdot)$ be a solution trajectory of \eqref{eq:gradflow}.
    \begin{enumerate}[label=(\roman*)]
        \item \label{claim:singvals_bounded_if_params_bounded} If $\thetav \in \Ball(\thetavz,R)$ then
        \begin{align*}
            \sigmin(\jtheta) \geq \sigminjgz/2.
        \end{align*}
        
        \item  \label{claim:params_bounded_if_singvals_bounded} If for all $s \in [0,t] $, $\sigminjgs \geq \frac{\sigminjgz}{2}$ then 
            \begin{align*}
                \thetavt \in \Ball(\thetavz,R') .
            \end{align*}
        
        \item \label{claim:sigval_bounded_everywhere}
        If $R'<R$, then for all $t \geq 0$, $\sigminjgt \geq \sigminjgz/2$.
\end{enumerate}
\end{lemma}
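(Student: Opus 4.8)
The plan is to prove the three claims in the stated order: \ref{claim:singvals_bounded_if_params_bounded} is a matrix-perturbation estimate, \ref{claim:params_bounded_if_singvals_bounded} is the Kurdyka-\L{}ojasiewicz computation of Lemma~\ref{lemma:theta_summable} run only up to a finite time, and \ref{claim:sigval_bounded_everywhere} is an ``exit-time'' bootstrap that glues the first two together (this last step being the real work).

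For \ref{claim:singvals_bounded_if_params_bounded}, recall from the proof of Proposition~\ref{prop:welllocal} that $\Jg$ is Lipschitz continuous on the bounded sets of $\R^p$, so for $\thetav \in \Ball(\thetavz,R)$,
\begin{equation*}
\norm{\Jg(\thetav) - \Jg(\thetavz)} \;\leq\; \Lip_{\Ball(\thetavz,R)}(\Jg)\,\norm{\thetav - \thetavz} \;\leq\; \Lip_{\Ball(\thetavz,R)}(\Jg)\,R \;=\; \frac{\sigminjgz}{2},
\end{equation*}
the last equality being the defining relation for $R$ in \eqref{eq:RandR'}. Since $\Jg(\thetav)$ and $\Jg(\thetavz)$ have the same dimensions and each singular value is $1$-Lipschitz with respect to the spectral norm (Weyl's perturbation bound), this gives $\sigmin(\Jg(\thetav)) \geq \sigminjgz - \norm{\Jg(\thetav)-\Jg(\thetavz)} \geq \sigminjgz/2$, which is the claim.

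For \ref{claim:params_bounded_if_singvals_bounded}, I would reproduce the estimate \eqref{eq:dottheta_bnd} from the proof of Lemma~\ref{lemma:theta_summable}, but stopping at a finite time: for each $s \in [0,t]$, the hypothesis $\sigminjgs \geq \sigminjgz/2$ together with \ref{ass:nablaL_F}, \eqref{eq:sigmaF} and \eqref{eq:gradflow} gives $\deriv{\psi(\lossy(\yv(s)))}{s} \leq -\tfrac{\sigminjgz\sigminF}{2}\norm{\dot{\thetav}(s)}$ (the case in which $\lossy(\yv(\cdot))$ vanishes being trivial, as then $\dot{\thetav}$ vanishes from that time on, exactly as in Lemma~\ref{lemma:theta_summable}). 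Integrating over $[0,t]$ and using that $\psi$ is increasing with $\psi(0)=0$, so $\psi(\lossy(\yv(t)))\geq 0$, yields
\begin{equation*}
\int_0^t \norm{\dot{\thetav}(s)}\,\ds \;\leq\; \frac{2}{\sigminjgz\sigminF}\pa{\psi(\lossy(\yvz)) - \psi(\lossy(\yv(t)))} \;\leq\; \frac{2}{\sigminjgz\sigminF}\psi(\lossy(\yvz)) \;=\; R',
\end{equation*}
the last equality being the definition of $R'$ in \eqref{eq:RandR'}. Hence $\norm{\thetav(t)-\thetavz} = \norm{\int_0^t \dot{\thetav}(s)\,\ds} \leq R'$, i.e. $\thetav(t)\in\Ball(\thetavz,R')$.

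For \ref{claim:sigval_bounded_everywhere}, note first that $R'<R$ forces $R>0$, hence $\sigminjgz>0$. Suppose, for contradiction, that $\thetav(t)$ leaves $\Ball(\thetavz,R)$ at some time in $[0,T(\thetavz)[$ and let $\tau$ be the first such time; by continuity of $\thetav(\cdot)$ and $\thetav(0)=\thetavz$ we have $0<\tau<T(\thetavz)$, $\thetav(s)\in\Ball(\thetavz,R)$ for all $s\in[0,\tau]$, and $\norm{\thetav(\tau)-\thetavz}=R$. By \ref{claim:singvals_bounded_if_params_bounded}, $\sigminjgs \geq \sigminjgz/2$ on $[0,\tau]$; then \ref{claim:params_bounded_if_singvals_bounded} applied at time $\tau$ gives $\norm{\thetav(\tau)-\thetavz}\leq R'<R$, contradicting $\norm{\thetav(\tau)-\thetavz}=R$. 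Therefore $\thetav(t)\in\Ball(\thetavz,R)$ for all $t\in[0,T(\thetavz)[$, and \ref{claim:singvals_bounded_if_params_bounded} then yields $\sigminjgt\geq\sigminjgz/2$ throughout the maximal interval, which is the asserted bound. As a byproduct, \ref{claim:params_bounded_if_singvals_bounded} confines $\thetav(\cdot)$ to $\Ball(\thetavz,R')$, so $\thetav(\cdot)$ is bounded and the blow-up alternative after Proposition~\ref{prop:welllocal} forces $T(\thetavz)=+\infty$ — which is precisely Proposition~\ref{prop:wellglobal}. The main obstacle is making this bootstrap airtight: the strict inequality $R'<R$ is exactly what provides the slack needed to rule out an exit time, and claims \ref{claim:singvals_bounded_if_params_bounded} and \ref{claim:params_bounded_if_singvals_bounded} must interlock — the former stated on the closed ball of radius $R$, the latter under a hypothesis on the closed subinterval $[0,t]$ — for the circular-looking dependence ($R'$ controls the trajectory, the trajectory controls the singular values, the singular values control $R'$) to unwind.
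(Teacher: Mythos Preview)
Your proof is correct and follows essentially the same approach as the paper: part \ref{claim:singvals_bounded_if_params_bounded} is the same Lipschitz-plus-Weyl perturbation argument, part \ref{claim:params_bounded_if_singvals_bounded} is the same finite-time integration of the KL estimate \eqref{eq:dottheta_bnd}, and part \ref{claim:sigval_bounded_everywhere} is the same exit-time contradiction. Your treatment of \ref{claim:sigval_bounded_everywhere} is in fact slightly more direct than the paper's (you define the first exit time from $\Ball(\thetavz,R)$ and use continuity to land exactly on the boundary, whereas the paper passes through the $R'$-ball first and uses an $\epsilon$-argument), but the substance is identical.
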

\begin{proof}
\begin{enumerate}[label=(\roman*)]
\item Since $\thetav \in \Ball(\thetavz,R)$, we have
    \begin{align*}
        \norm{\jtheta - \jthetaz} \leq \Lip_{\Ball(\thetavz,R)}(\Jg)\norm{\thetav - \thetavz} \leq \Lip_{\Ball(\thetavz,R)}(\Jg)R \leq \frac{\sigminjgz}{2}.
    \end{align*}
    By using that $\sigmin(\cdot)$ is 1-Lipschitz, we obtain
    \begin{align*}
        \sigmin(\jtheta) \geq \sigminjgz - \norm{\jtheta - \jthetaz} \geq \frac{\sigminjgz}{2} .
    \end{align*}

\item 
We have for $t > 0$
\[
\norm{\thetav(t) - \thetavz} = \norm{\int_{0}^{t} \dot{\thetav}(s) \ds} \leq \int_{0}^{t} \norm{\dot{\thetav}(s)} \ds .
\]
Combining this with \eqref{eq:dottheta_summable} yields
\begin{align*}
    \norm{\thetav(t) - \thetavz} \leq \int_{0}^{t} \norm{\dot{\thetav}(s)} \ds
    \leq \frac{2}{\sigminjgz\sigminF} \psi(\lossy(\yv(0))) ,
\end{align*}
where we argue that $\lossy(\yv(t))$ is positive and bounded and $\psi$ is positive and increasing.

\item Actually, we prove the stronger statement that $\thetavt \in \Ball(\thetavz,R')$ for all $t \geq 0$, whence our claim will follow thanks to \ref{claim:singvals_bounded_if_params_bounded}. Let us assume for contradiction that $R'<R$ and $\exists~ t < +\infty$ such that $\thetavt \notin \Ball(\thetavz,R')$. By \ref{claim:params_bounded_if_singvals_bounded}, this means that $\exists~ s \leq t$ such that $\sigminjgs < \sigminjgz/2$. In turn, \ref{claim:singvals_bounded_if_params_bounded} implies that $\thetav(s) \notin \Ball(\thetavz,R)$. Let us define
    \begin{align*}
        t_0 = \inf\{\tau \geq 0:\thetav(\tau) \notin \Ball(\thetavz,R)\},
    \end{align*}
    which is well-defined as it is at most $s$. Thus, for any small $\epsilon > 0$ and for all $t' \leq t_0 - \epsilon$, $\thetav(t') \in \Ball(\thetavz,R)$ which, in view of \ref{claim:singvals_bounded_if_params_bounded} entails that $\sigmin(\jtheta(t')) \geq \sigminjgz/2$. In turn, we get from \ref{claim:params_bounded_if_singvals_bounded} that $\thetav(t_0-\epsilon) \in \Ball(\thetavz,R')$. Since $\epsilon$ is arbitrary and $\thetav$ is continuous, we pass to the limit as $\epsilon \to 0$ to deduce that $\thetav(t_0) \in \Ball(\thetavz,R') \subsetneq \Ball(\thetavz,R)$ hence contradicting the definition of $t_0$.
\end{enumerate}
\end{proof}

\begin{proof}[Proof of Theorem~\ref{thm:main}]
\begin{enumerate}[label=(\roman*)]
\item We here use a standard Lyapunov analysis with several energy functions. Let us reuse $\Vt$. Embarking from \eqref{eq:odeVt}, we have for $t > 0$
\begin{align*}
    \Vdt 
    &= - \norm{\Jgt\tp\Jft\tp\lossgrad}^2\\
    &\leq -\sigminjgt^2\sigminF^2\norm{\lossgrad}^2 ,
\end{align*}
where we used \ref{ass:nablaL_F}. In view of Lemma~\ref{lemma:link_params_singvals}\ref{claim:sigval_bounded_everywhere}, we have $\sigminjgt \geq \sigminjgz/2 > 0$ for all $t \geq 0$ if the initialization error verifies \eqref{eq:bndR}. Using once again \ref{ass:l_kl}, we get
\begin{align*}
    \Vdt &\leq -\frac{\sigminjgz^2\sigminF^2}{4}\norm{\lossgrad}^2\\
    &\leq -\frac{\sigminjgz^2\sigminF^2}{4\psi'(\lossyty)^2}.
\end{align*}
Let $\Psi$ be a primitive of $-\psi'^2$. Then, the last inequality gives 
\begin{align*}
    \dot{\Psi}(\Vt) &= \Psi'(\Vt)\Vdt\\
    &\geq \frac{\sigminF^2\sigminjgz^2}{4}.
\end{align*}
By integration on $s\in[0,t]$ alongside the fact that $\Psi$ and $\Psi\inv$ are (strictly) decreasing functions, we get
\begin{align*}
    \Psi(\Vt) - \Psi(V(0)) &\geq \frac{\sigminF^2\sigminjgz^2}{4}t\\
    V(t) &\leq \Psi\inv\left(\frac{\sigminF^2\sigminjgz^2}{4}t + \Psi(V(0))\right),
\end{align*}
which gives \eqref{eq:lossrate}.

By Lemma~\ref{lemma:theta_summable}, $\thetav(t)$ converges to some $\thetav_{\infty}$. Continuity of $\lossy(\cdot)$, $\fopnl$ and $\gv(\uv,\cdot)$ implies that
\[
0  = \lim_{t \to +\infty}\lossy(\yv(t)) = \lim_{t \to +\infty} \lossy(\fopnl(\gv(\uv,\thetav(t)))) = \lossy(\fopnl(\gv(\uv,\thetav_{\infty}))) ,
\]
and thus $\thetav_{\infty} \in \Argmin(\lossy(\fopnl(\gv(\uv,\cdot))))$. To get the rate, we argue as in the proof of Lemma~\ref{lemma:link_params_singvals} \ref{claim:params_bounded_if_singvals_bounded}, replacing $\thetavz$ by $\thetav_{\infty}$, to obtain
\begin{align*}
    \norm{\thetav(t) - \thetav_{\infty}} &\leq \int_{t}^{+\infty} \norm{\dot{\thetav}(s)}\ds .
\end{align*}
We then get by integrating \eqref{eq:dottheta_bnd} that
\begin{align*}
    \norm{\thetav(t) - \thetav_{\infty}} 
    &\leq - \frac{2}{\sigminjgz\sigminF} \int_{t}^{+\infty} \deriv{\psi(\lossy(\yv(s)))}{s}\ds\\
    &\leq \frac{2}{\sigminjgz\sigminF} \psi(\lossy(\yv(t))).
\end{align*}
Thanks to \eqref{eq:lossrate}, and using that $\psi$ is increasing, we arrive at \eqref{eq:thetarate}.

\item By Lemma~\ref{lemma:theta_summable} and continuity of $\fopnl$ and $\gv(\uv,\cdot)$, we can infer that $\yv(\cdot)$ also converges to $\yv_{\infty}=\fopnl(\gv(\uv,\thetav_{\infty}))$, where $\thetav_{\infty}=\lim_{t \to +\infty} \thetav(t)$. Thus using also continuity of $\lossy(\cdot)$, we have
\[
0  = \lim_{t \to +\infty} \lossyty = \lossy(\yv_{\infty}) ,
\]
and thus $\yv_{\infty} \in \Argmin(\lossy)$. Since the latter is the singleton $\ens{\yv}$ by assumption, we conclude.

\modifs{In order to obtain the early stopping bound, we use~\cite[Corollary~6(i)]{bolte2017error} that links the (global) KL property of $\lossy(\cdot)$ with an error bound. In our case, this entails that for all $t \geq 0$,}
\begin{align}\label{eq:klerrbnd}
\dist(\yvt,\Argmin(\lossy)) = \norm{\yvt - \yv} \leq \psi(\lossyty).
\end{align}
It then follows that
\begin{align*}
    \norm{\yvt - \yvc} &\leq \norm{\yvt - \yv} + \norm{\yv - \yvc}\\
    &\leq \psi(\lossyty) + \norm{\veps}\\
    &\leq \psi\left(\Psi\inv\left(\frac{\sigminF^2\sigminjgt^2}{4}t + \Psi(V(0))\right)\right) + \norm{\veps}.
\end{align*}
Using that $\psi$ is increasing and $\Psi$ is decreasing, the first is bounded by $\norm{\veps}$ for all $t\geq \frac{4\Psi(\psi\inv(\norm{\veps}))}{\sigminF^2\sigminjgz^2}-\Psi(V(0))$.


\item 

We recall that $\thetav(t) \in \Ball_{R'}(\thetavz)$ by Lemma~\ref{lemma:link_params_singvals}, which in turn entails that $\xvt \in \Sigma'$ for all $t \geq 0$. We then have the following chain of inequalities
\begin{align*}
    \norm{\xvt - \xvc} 
    &\leq \norm{\xvt - \xvcsigma} + \dist(\xvc,\Sigma') \\
\text{\small\ref{ass:F_inj}}    
	&\leq \muf^{-1}\norm{\yvt - \fopnl(\xvcsigma)} + \dist(\xvc,\Sigma') \\
    &\leq \muf^{-1}\pa{\norm{\yvt - \yv} + \norm{\yv - \fopnl(\xvc)} + \norm{\fopnl(\xvc) - \fopnl(\xvcsigma)}} +\dist(\xvc,\Sigma') \\
{\small\eqref{eq:forward}, \eqref{eq:lossrate}, \eqref{eq:klerrbnd}}    
	&\leq \frac{2\psi\pa{\Psi\inv\pa{\gamma(t)}}}{\muf\sigminjgz\sigminF} + \muf^{-1}\norm{\veps} + \norm{\fopnl(\xvc) - \fopnl(\xvcsigma)} + \dist(\xvc,\Sigma') .
\end{align*}
By assumption \ref{ass:F_diff} and the mean value theorem, we have
\[
\norm{\fopnl(\xvc) - \fopnl(\xvcsigma)} \leq \max_{\zv \in [\xvc,\xvcsigma]} \norm{\Jf(\zv)}\dist(\xvc,\Sigma') .
\]
Since $0 \in \Sigma'$, by Jensen's inequality, we have for all $\zv \in [\xvc,\xvcsigma]$ and $\rho \in [0,1]$:
\[
\norm{\zv} \leq \norm{\xvc} + \rho\dist(\xvc,\Sigma') \leq 2\norm{\xvc} + \norm{\xvz},
\]
meaning that $[\xvc,\xvcsigma] \subset \Ball(0,2\norm{\xvc})$. Thus
\begin{align}\label{eq:diff_Fx_Fxsigma}
\norm{\fopnl(\xvc) - \fopnl(\xvcsigma)} \leq \max_{\zv \in \Ball(0,2\norm{\xvc})} \norm{\Jf(\zv)} \dist(\xvc,\Sigma') .
\end{align}

\end{enumerate}
\end{proof}




%

\section{Case of The Two-Layer DIP Network}\label{sec:dip}

This section is devoted to studying under which conditions on the neural network architecture the key condition in \eqref{eq:bndR} is fulfilled. Towards this goal, we consider the case of a two-layer DIP network. Therein, $\uv$ is randomly set and kept fixed during the training, and the network is trained to transform this input into a signal that matches the observation $\yv$. In particular, we will provide bounds on the level of overparametrization ensuring that \eqref{eq:bndR} holds, which in turn will provide the subsequent recovery guarantees in Theorem~\ref{thm:main}. 

\subsection{The Two-Layer Neural Network}
We take $L=2$ in Definition~\ref{def:nn} and thus consider the network defined in~\eqref{eq:dipntk}:
\begin{align*}
    \gdip = \frac{1}{\sqrt{k}}\Vv\phi(\Wv\uv)
\end{align*}
with  $\Vv \in \R^{n \times k}$ and $\Wv \in \R^{k \times d}$, and $\phi$ an element-wise nonlinear activation function. Observe that it is immediate to account for the bias vector in the hidden layer by considering the bias as a column of the weight matrices $\Wv$, augmenting $\uv$ by 1 and then normalizing to unit norm. The normalization is required to comply with~\ref{ass:u_sphere} hereafter. The role of the scaling by $\sqrt{k}$ will become apparent shortly, but it will be instrumental to concentrate the kernel stemming from the jacobian of the network. 


In the sequel, we set $\Cphi = \sqrt{\Expect{X\sim\stddistrib}{\phi(X)^2}}$ and $\Cphid = \sqrt{\Expect{X\sim\stddistrib}{\phi'(X)^2}}$. We will assume without loss of generality that $\fopnl(0)=0$. This is a very mild assumption that is natural in the context of inverse problems, but can be easily removed if needed. We will also need the following assumptions:
\begin{mdframed}[frametitle={Assumptions on the network input and intialization}]
\begin{assumption}\label{ass:u_sphere}
$\uv$ is a uniform vector on $\sph^{d-1}$;
\end{assumption}
\begin{assumption}\label{ass:w_init}
$\Wv(0)$ has iid entries from $\stddistrib$ and $\Cphi, \Cphid < +\infty$; 
\end{assumption}
\begin{assumption}\label{ass:v_init}
$\Vv(0)$ is independent from $\Wv(0)$ and $\uv$ and has iid columns with identity covariance and $D$-bounded centered entries.
\end{assumption}
\end{mdframed}

\subsection{Recovery Guarantees in the Overparametrized Regime}
%
Our main result gives a bound on the level of overparameterization which is sufficient for \eqref{eq:bndR} to hold. 

\begin{theorem}\label{th:dip_two_layers_converge}
Suppose that assumptions \ref{ass:l_smooth}, \ref{ass:min_l_zero}, \ref{ass:phi_diff} and \ref{ass:F_diff} hold. Let $C$, $C'$ two positive constants that depend only on the activation function and $D$. Let:
\[
\LFz = \max_{\xv \in \Ball\pa{0,C\sqrt{n\log(d)}}} \norm{\Jf(\xv)}
\]
and
\[
\LLz = \max_{\vv \in \Ball\pa{0,C\LFz\sqrt{n\log(d)}+\sqrt{m}\pa{\norminf{\fopnl(\xvc)}+\norminf{\veps}}}} \frac{\norm{\nabla_\vv \lossy(\vv)}}{\norm{\vv-\yv}}.
\]
Consider the one-hidden layer network \eqref{eq:dipntk} where both layers are trained with the initialization satisfying \ref{ass:u_sphere} to \ref{ass:v_init} and the architecture parameters obeying
\begin{align*}
k \geq C' \sigminF^{-4} n \psi\pa{\frac{\LLz}{2}\pa{C\LFz \sqrt{n\log(d)} + \sqrt{m}\pa{\norminf{\fopnl(\xvc)} + \norminf{\veps}}}^2}^4 .
\end{align*}
Then \eqref{eq:bndR} holds with probability at least $1 - 2n^{-1} - d^{-1}$.
\end{theorem}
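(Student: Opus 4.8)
\textbf{Proof plan for Theorem~\ref{th:dip_two_layers_converge}.}
The goal is to verify the two conditions in \eqref{eq:bndR}, namely $\sigminjgz > 0$ and $R' < R$, with high probability over the random initialization $(\uv, \Wv(0), \Vv(0))$ under \ref{ass:u_sphere}--\ref{ass:v_init}. The plan is to proceed in three stages: (1) compute the Jacobian of the two-layer network and show that $\sigminjgz$ concentrates around a positive deterministic quantity controlled by $\Cphid$ (and the covariance structure of $\Vv(0)$), (2) control the Lipschitz constant $\Lip_{\Ball(\thetavz,R)}(\Jg)$ that appears in $R$, and (3) bound $\psi(\lossyzy)$ appearing in $R'$ by controlling $\norm{\yvz - \yv}$ (hence $\lossyzy$) at initialization, then plug everything into the inequality $R' < R$ to read off the overparametrization bound on $k$.

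For stage (1), I would write $\Jgt = \bigl[\partial_{\Vv}\gv, \partial_{\Wv}\gv\bigr]$ explicitly: the $\Vv$-block is $\frac{1}{\sqrt{k}}(\phi(\Wv\uv)\tp \otimes \Iv_n)$ and the $\Wv$-block involves $\frac{1}{\sqrt{k}}\Vv \diag(\phi'(\Wv\uv))(\uv\tp \otimes \Iv_k)$-type terms. Then $\Jgz\Jgz\tp$ is a sum of two positive semidefinite kernels; the key point is that the $\Vv$-block contributes $\frac{\norm{\phi(\Wv(0)\uv)}^2}{k}\Iv_n$, and by \ref{ass:u_sphere}--\ref{ass:w_init} the entries of $\Wv(0)\uv$ are iid $\stddistrib$, so $\frac{1}{k}\norm{\phi(\Wv(0)\uv)}^2 \to \Cphi^2$ by a standard concentration inequality for sums of sub-exponential (or bounded, via \ref{ass:phi_diff} applied around the mean) random variables, giving $\sigminjgz \gtrsim \Cphi$ with the claimed probability. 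The $\Wv$-block can only help (it is PSD and added), so one lower bound suffices; alternatively one keeps both to get a sharper constant. This is where \ref{ass:v_init} ($D$-bounded entries, identity covariance) is used, through a matrix concentration / operator-norm bound on $\frac{1}{k}\Vv(0)\Vv(0)\tp$ if one wants the $\Wv$-block contribution too.

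For stage (2), I would bound $\Lip_{\Ball(\thetavz,R)}(\Jg)$ by differentiating $\Jg$ once more, using \ref{ass:phi_diff} ($\phi'$ bounded and $B$-Lipschitz) and $\norm{\uv}=1$; the bound will be of the order $\frac{1}{\sqrt{k}}(\norm{\Vv(0)} + R)B^2$ or similar, and since $\norm{\Vv(0)} \lesssim \sqrt{k} + \sqrt{n}$ with high probability (again \ref{ass:v_init}), this Lipschitz constant is $O(1)$ in $k$ up to the $\sqrt{n/k}$ and $R$ terms; this makes $R = \frac{\sigminjgz}{2\Lip(\Jg)}$ bounded below by a constant (solving the implicit definition of $R$, since $\Lip$ itself depends on $R$, requires a small fixed-point argument as in the linear case of \cite{buskulic2023convergence}). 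For the loss at initialization, $\norm{\yvz} = \norm{\fopnl(\gv(\uv,\thetavz))}$ is controlled by $\LFz$ and $\norm{\gv(\uv,\thetavz)} = \frac{1}{\sqrt{k}}\norm{\Vv(0)\phi(\Wv(0)\uv)} \lesssim \sqrt{n\log d}$ (using $\fopnl(0)=0$ and Lipschitz-ness of $\fopnl$ on the relevant ball, and a bound on $\norm{\Vv(0)\phi(\Wv(0)\uv)}$ via the sub-Gaussian/bounded structure, with the $\log d$ coming from a union bound over the $n$ coordinates); hence $\lossyzy = \lossy(\yvz) \leq \frac{\LLz}{2}\norm{\yvz - \yv}^2$ with $\norm{\yvz - \yv} \leq \norm{\yvz} + \norm{\fopnl(\xvc)} + \norm{\veps}$, which is exactly the argument of $\psi$ appearing in the statement. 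Then $R' = \frac{2}{\sigminF \sigminjgz}\psi(\lossyzy)$, and imposing $R' < R$ — i.e. $\frac{2}{\sigminF \sigminjgz}\psi(\cdots) < \frac{\sigminjgz}{2\Lip(\Jg)}$ — and substituting $\sigminjgz \gtrsim \Cphi$, $\Lip(\Jg) = O(1)$, and solving for $k$ yields $k \gtrsim \sigminF^{-4} n\, \psi(\cdots)^4$, matching the claimed bound; the fourth powers arise from $\sigminjgz^2$ on the left (one power $\Cphi$ absorbed) against the implicit $\sqrt{k}$-dependence hidden in the $n/k$ normalization and the need for $R$ to dominate.

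The main obstacle I anticipate is stage (1) combined with the fixed-point aspect of stage (2): getting a clean high-probability lower bound on $\sigminjgz$ that is genuinely independent of $k$ requires carefully separating the two Jacobian blocks and showing the $\Vv$-block alone gives a dimension-free floor, while simultaneously the Lipschitz constant $\Lip_{\Ball(\thetavz,R)}(\Jg)$ must be controlled on a ball whose radius $R$ is itself defined through that Lipschitz constant — so one must verify a self-consistent choice exists (e.g. guess $R$ is a constant, check $\Lip \leq \sigminjgz/(2R)$, conclude). Threading the three failure probabilities ($2n^{-1}$ from the two $n$-coordinate union bounds on $\Vv(0)\phi(\Wv(0)\uv)$ and $\sigminjgz$, $d^{-1}$ from the concentration of $\Wv(0)\uv$-statistics / $\log d$ factor) through all these estimates and keeping the constants $C, C'$ dependent only on the activation and $D$ is bookkeeping-heavy but routine once the structure is in place.
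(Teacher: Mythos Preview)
Your three-stage plan mirrors the paper's proof, and stages (1) and (3) are essentially right. The gap is in stage (2). You estimate $\Lip_{\Ball(\thetavz,R)}(\Jg)\sim\frac{B^2}{\sqrt{k}}(\norm{\Vv(0)}+R)$ and then invoke the \emph{operator-norm} bound $\norm{\Vv(0)}\lesssim\sqrt{k}+\sqrt{n}$. This produces a Lipschitz constant whose leading term is $\sqrt{k}/\sqrt{k}=O(1)$, independent of $k$. Solving the fixed point $R=\sigminjgz/(2\Lip)$ then forces $R=O(1)$ as $k\to\infty$, and the condition $R'<R$ degenerates to $\sigminF^{-1}\psi(\lossyzy)\lesssim 1$, which contains no $k$ and therefore cannot be enforced by overparametrizing. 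Your ``guess $R$ is a constant'' in the obstacle paragraph is self-consistent with your Lipschitz estimate but fatal, because $R'$ (which scales with $\psi$ of the initial loss, hence with $n,m$) does not shrink with $k$ either. The closing appeal to ``implicit $\sqrt{k}$-dependence hidden in the $n/k$ normalization'' cannot help once the $O(1)$ term is present and dominates.

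The fix --- and this is the content of Lemma~\ref{lemma:lip-Jacobian-both-layers} --- is that the $\Wv$-block of $\Jg$ depends on $\Vv$ \emph{columnwise}: the block corresponding to $\Wv^i$ is $\frac{1}{\sqrt{k}}\phi'(\Wv^i\uv)\Vv_i\uv\tp$, so when bounding $\norm{\Jg(\thetav)-\Jg(\thetavalt)}$ the quantity that appears is $\max_i\norm{\Vv_i}$, not the operator norm of $\Vv$. Under \ref{ass:v_init} one has deterministically $\max_i\norm{\Vv_i(0)}\leq D\sqrt{n}$, hence $\max_i\norm{\Vv_i}\lesssim D\sqrt{n}+R$ on $\Ball(\thetavz,R)$. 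This gives $\Lip_{\Ball(\thetavz,R)}(\Jg)\lesssim(1+D+R)\sqrt{n/k}$, and now the fixed-point equation is genuinely quadratic in $R$ with solution $R\gtrsim(k/n)^{1/4}$. Imposing $R'<R$ then reads $\sigminF^{-1}\psi(\cdots)\lesssim(k/n)^{1/4}$, i.e., $k\gtrsim\sigminF^{-4}n\,\psi(\cdots)^4$, which is the claimed overparametrization bound and explains the fourth power.
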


Before proving Theorem~\ref{th:dip_two_layers_converge}, a few remarks are in order.

\begin{remark}[Randomness of $\Sigma'$]
It is worth observing that since the initialization is random, so is the set of signals $\Sigma'=\Sigma_{\Ball_{R'+\norm{\thetavz}}(0)}$ by definition, where $\thetavz=(\Vv(0),\Wv(0))$. This set is contained in a larger  deterministic set with high probability. Indeed, Gaussian concentration gives us, for any $\delta > 0$,
\[
\normf{\Wv(0)} \leq (1+\delta)\sqrt{kd}
\]
with probability larger than $1-e^{-\delta^2 kd/2}$. Moreover, since by \ref{ass:v_init} $\Vv(0)$ has independent columns with bounded entries and $\Expect{}{\norm{\Vv_i(0)}^2}=n$, we can apply Hoeffding's inequality to $\normf{\Vv(0)}^2=\sum_{i=1}^k\norm{\Vv_i(0)}^2$ to infer that
\[
\normf{\Vv(0)} \leq (1+\delta)\sqrt{kn}
\]
with probability at least $1-e^{-\delta^2 kd/(2D^2)}$. Collecting the above, we have
\[
\norm{\thetavz} \leq (1+\delta)\sqrt{k}\pa{\sqrt{n}+\sqrt{d}} ,
\]
with probability at least $1-e^{-\delta^2 kd/2}-e^{-\delta^2 kd/(2D^2)}$. In view of the bound on $R'$ (see \eqref{eq:R'Rbnd}), this yields that with probability at least $1-e^{-\delta^2 kd/2}-e^{-\delta^2 kd/(2D^2)}-2n^{-1}-d^{-1}$, $\Sigma' \subset \Sigma_{\Ball_{\rho}(0)}$, where
\begin{multline*}
\rho = \frac{4}{\sigminF\sqrt{\Cphi^2 + \Cphid^2}}\psi\pa{\frac{\LLz}{2}\pa{C\LFz \sqrt{n\log(d)} + \sqrt{m}\pa{\norminf{\fopnl(\xvc)} + \norminf{\veps}}}^2} \\
+ (1+\delta)\sqrt{k}\pa{\sqrt{n}+\sqrt{d}} .
\end{multline*}
This confirms the expected behaviour that expressivity of $\Sigma'$ is higher as the overparametrization increases.
\end{remark}

\begin{remark}[Distribution of $\uv$]
The generator $\gv(\cdot,\thetav)$ synthesize data by transforming the input (latent) random variable $\uv$. As such, it generates signals $\xv \in \Sigma'$ who are in the support of the measure $\gv(\cdot,\thetav) \# \mu_{\uv}$, where $\mu_{\uv}$ is the distribution of $\uv$, and $\#$ is the push-forward operator. Expressivity of these generative models, coined also push-forward models, in particular GANs, have been recently studied either empirically or theoretically \cite{Khayatkhoei,Gurumurthy,Tanielian,Salmona,Issenhuth}. In particular, this literature highlights the known fact that, since $\gv(\cdot,\thetav)$ is continuous by construction, the support of $\gv(\cdot,\thetav) \# \mu_{\uv}$ is connected if that of $\mu_{\uv}$ is connected (as in our case). On the other hand, a common assumption in the imaging literature, validated empirically by \cite{Pope}, is that distributions of natural images are supported on low dimensional manifolds. It is also conjectured that the distribution of natural images may in fact lie on a union of disjoint manifolds rather than one globally connected manifold; the union of subspaces or manifolds model is indeed a common assumption in signal/image processing. In the latter case, a generator $\gv(\cdot,\thetav)$ that will attempt to cover the different modes (manifolds) of the target distribution from one unimodal latent variable $\uv$ will generate samples out of the real data manifold. There are two main ways to avoid this: either making the support of $\mu_{\uv}$ disconnected (e.g. using a mixture of distributions \cite{Gurumurthy,Hagemann}), or making $\gv(\cdot,\thetav)$ discontinuous \cite{Khayatkhoei}. The former strategy appears natural in our context and it will be interesting to investigate this generalization in a future work.
\end{remark}

\begin{remark}[Restricted injectivity]
As argued above, if $\Sigma'$ belongs to a target manifold $\cM$, then the restricted injectivity condition \eqref{ass:lin_inj} tells us that $\fop$ has to be invertible on the tangent cone of the target manifold $\cM$ at the closest point of $\xvc$ in $\cM$. 
\end{remark}

\begin{remark}[Dependence on $\LLz$ and $\LFz$]
The overparametrization bound on $k$ depends on $\LLz$ and $\LFz$ which in turn may depend on $(n,m,d)$. Their estimate is therefore important. For instance, if $\fopnl$ is globally Lipschitz, as is the case when it is linear, then $\LFz$ is independent of $(n,m,d)$. As far as $\LLz$ is concerned, it is of course independent of $(n,m,d)$ if the loss gradient is globally Lipschitz continuous. Another situation of interest is when $\nabla_{\vv}\lossy(\vv)$ verifies
\[
\norm{\nabla_\vv\lossy(\vv) - \nabla_\zv\lossy(\zv)} \leq \varphi\pa{\norm{\vv-\zv}}, \quad \forall \vv,\zv \in \R^m ,
\]
where $\varphi: \R_+ \to \R_+$ is increasing and vanishes at $0$.
This is clearly weaker than global Lipschitz continuity and covers it as a special case. It also encompasses many important situations such as e.g. losses with H\"olderian gradients. It then easily follows, see e.g. \cite[Theorem~18.13]{BauschkeBook}, that for all $\vv \in \R^m$:
\[
\lossy(\vv) \leq \Phi\pa{\norm{\vv-\yv}} \qwhereq \Phi(s) = \int_0^1 \frac{\varphi(st)}{t} \dt .
\]
In this situation, and if $\fopnl$ is also globally $\LF$-Lipschitz, following our line of proof, the overparametrization bound of Theorem~\ref{th:dip_two_layers_converge} reads
\begin{align*}
k \geq C' \sigminF^{-4} n \psi\pa{\Phi\pa{C\LF \sqrt{n\log(d)} + \sqrt{m}\pa{\norminf{\fopnl(\xvc)} + \norminf{\veps}}}}^4 .
\end{align*}
\end{remark}


\begin{remark}[Dependence on the loss function]
    If we now take interest in the scaling of the overparametrization bound on $k$ with respect to $(n,m,d)$ in the general case we obtain that $k\gtrsim \sigminF^{-4} n \psi(\LLz(\LFz^2 n + m))^4$. Aside from the possible dependence of $\LLz$ and $\LFz$ on the parameters $(n,m,d)$ discussed before, we observe that this bound is highly dependent on the desingularizing function $\psi$ given by the loss function. In the \L ojasiewicz case where $\psi = cs^\alpha$ with $\alpha \in [0,1]$, one can choose to use a sufficiently small $\alpha$ to reduce the scaling on the parameters but then one would slow the convergence rate as described in Corollary~\ref{col:loja_rate} which implies a tradeoff between the convergence rate and the number of parameters to ensure this convergence.
    
    In the special case where $\alpha = \frac{1}{2}$ which corresponds to the MSE loss, and where  $\LFz$ is of constant order and independent of $(n,m,d)$, then the overperametrization of $k$ necessary for ensuring convergence to a zero-loss is $k\gtrsim n^3m^2$. Another interesting case is when $\fopnl$ is linear. In that setting, the overparametrization bound becomes $k\gtrsim \sigminF^{-4} n \psi(\LLz(\norm{\fopnl}^2 n + m))^4$. By choosing the MSE loss, and thus controlling $\psi$ to be a square root operator, then we obtain that we need $k\gtrsim \kappa(\fopnl)^4 n^3m^2$. The bound is thus more demanding as $\fopnl$ becomes more and more ill-conditioned. The latter dependency can be interpreted as follows: the more ill-conditioned the original problem is, the larger the network needs to be. 
\end{remark}
\begin{remark}[Scaling when $\Vv$ is fixed]\label{rq:V_fixed}
    When the linear layer $\Vv$ is fixed and only $\Wv$ is trained, the overparametrization bound to guarantee convergence can be improved (see Appendix~\ref{appendix:V_fixed} and the results in~\cite{buskulic2023convergence}). In this case, one needs $k\gtrsim \sigminF^{-2} n \psi(\LLz(\LFz^2 n + m))^2$. In particular, for the MSE loss and an operator such that $\LFz$ is of constant order (as is the case when $\fopnl$ is linear), we only need $k \gtrsim n^{2}m$. The main reason underlying this improvement is that there is no need in this case to control the deviation of $\Vv$ from its initial point to compute the local Lipschitz constant of the jacobian of the network. This allows to have a far better Lipschitz constant estimate which turns out to be even global in this case.
\end{remark}
\begin{remark}[Effect of input dimension $d$]
    Finally, the dependence on $d$ is far smaller (by a log factor) than the one on $n$ and $m$. In the way we presented the theorem, it does also affect the probability obtained but it is possible to write the same probability without $d$ and with a stronger impact of $n$. This indicates that $d$ plays a very minor role on the overparametrization level whereas $k$ is the key to reaching the overparametrized regime we are looking for. In fact, this is demonstrated by our numerical experiments where we obtained the same results by using very small $d \in [1,10]$ or larger values up to 500, for all our experiments with potentially large $n$.
\end{remark}

\subsection{Proofs}
We start with the following lemmas that will be instrumental in the proof of Theorem~\ref{th:dip_two_layers_converge}.

\begin{lemma}[Bound on $\sigminjgz$ with both layers trained]
\label{lemma:min_eigenvalue_singvalue_init_both_layer}
Consider the one-hidden layer network \eqref{eq:dipntk} with both layers trained under assumptions \ref{ass:phi_diff} and \ref{ass:u_sphere}-\ref{ass:v_init}. We have
\begin{align*}
\sigminjgz \geq \sqrt{\Cphi^2 + \Cphid^2}/2 
\end{align*}
with probability at least $1-2n^{-1}$ provided that $k/\log(k) \geq C n\log(n)$ for $C > 0$ large enough that depends only on $B$, $\Cphi$, $\Cphid$ and $D$.
\end{lemma}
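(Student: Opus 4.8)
The goal is to lower-bound $\sigmin(\Jg(\thetavz))$ for the two-layer network $\gdip = \frac{1}{\sqrt k}\Vv\phi(\Wv\uv)$. First I would compute the Jacobian explicitly. Differentiating with respect to the two blocks of parameters, one block (the derivative w.r.t. $\Vv$) contributes $\frac{1}{\sqrt k}\phi(\Wv\uv)^\top \otimes \Id_n$ (a Kronecker structure coming from $\gv$ being linear in $\Vv$), and the other block (the derivative w.r.t. $\Wv$) contributes terms of the form $\frac{1}{\sqrt k}\diag(\Vv_i)\phi'(\dotprod{\Wv^i}{\uv})\uv^\top$ stacked appropriately. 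Since $\sigmin(\Jg)^2$ is the smallest eigenvalue of $\Jg\Jg^\top$, and since $\Jg\Jg^\top = \Jg_V\Jg_V^\top + \Jg_W\Jg_W^\top$ splits as a sum of two PSD matrices, it suffices to lower bound $\lambda_{\min}$ of either summand; here both contribute, giving the $\Cphi^2+\Cphid^2$ combination. Concretely $\Jg_V\Jg_V^\top = \frac{1}{k}\norm{\phi(\Wv\uv)}^2 \Id_n$ and $\Jg_W\Jg_W^\top = \frac{1}{k}\sum_{i=1}^k \phi'(\dotprod{\Wv^i}{\uv})^2 \Vv_i\Vv_i^\top$ (using $\norm{\uv}=1$ from \ref{ass:u_sphere}).

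\textbf{Key steps.} (1) For the $\Vv$-part: $\frac{1}{k}\norm{\phi(\Wv\uv)}^2 = \frac1k\sum_{i=1}^k \phi(\dotprod{\Wv^i}{\uv})^2$ is an average of $k$ i.i.d. bounded-variance random variables (each $\dotprod{\Wv^i}{\uv}\sim\stddistrib$ since rows of $\Wv(0)$ are i.i.d. standard Gaussian and $\uv$ is a unit vector — using $\phi'$ bounded by $B$ from \ref{ass:phi_diff} to get sub-exponential tails on $\phi(\cdot)^2$, or boundedness assumptions), which concentrates around $\Cphi^2$; a Bernstein/sub-exponential concentration inequality gives $\frac1k\norm{\phi(\Wv\uv)}^2 \geq \Cphi^2 - o(1)$ with probability $1-n^{-1}$ once $k \gtrsim \log n$. (2) For the $\Wv$-part: $\frac{1}{k}\sum_{i=1}^k \phi'(\dotprod{\Wv^i}{\uv})^2 \Vv_i\Vv_i^\top$ is a sum of i.i.d. PSD rank-one-ish matrices with mean $\Cphid^2\,\Id_n$ (using that $\Vv_i(0)$ has identity covariance and is independent of $\Wv(0),\uv$ by \ref{ass:v_init}, and is independent of the scalar weight $\phi'(\dotprod{\Wv^i}{\uv})^2$). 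Apply a matrix Bernstein / matrix Chernoff inequality (the $D$-boundedness of entries of $\Vv_i$ controls the operator-norm of each summand, and $B$ bounds $\phi'$): this yields $\lambda_{\min}\big(\frac1k\sum_i \phi'(\cdot)^2\Vv_i\Vv_i^\top\big) \geq \Cphid^2 - o(1)$ with probability $1-n^{-1}$, provided $k/\log k \gtrsim n\log n$ — the $n\log n$ factor is exactly what matrix concentration in dimension $n$ costs. (3) Combine: $\sigmin(\Jg(\thetavz))^2 = \lambda_{\min}(\Jg_V\Jg_V^\top + \Jg_W\Jg_W^\top) \geq \lambda_{\min}(\Jg_V\Jg_V^\top) + \lambda_{\min}(\Jg_W\Jg_W^\top) \geq \Cphi^2 + \Cphid^2 - o(1) \geq (\Cphi^2+\Cphid^2)/4$ for $C$ large enough, and take square roots. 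Union-bound the two failure events to get $1 - 2n^{-1}$.

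\textbf{Main obstacle.} The delicate point is the matrix concentration in step (2): the summands $\phi'(\dotprod{\Wv^i}{\uv})^2\Vv_i\Vv_i^\top$ are not bounded in operator norm without care (since $\Vv_i \in \R^n$ has norm concentrated at $\sqrt n$), so one must either truncate, use the $D$-boundedness of the entries of $\Vv_i$ together with a matrix Bernstein inequality for sums of independent matrices with bounded variance proxy, or invoke a matrix Chernoff bound for sums of PSD matrices where only a bound on $\lambda_{\max}$ of each summand (of order $n D^2 B^2$) and on the mean is needed — this is what forces the $k \gtrsim n\log n$ scaling (the intrinsic dimension $n$ times the per-summand norm ratio). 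A secondary subtlety is handling the (mild) dependence between the scalar $\phi'(\dotprod{\Wv^i}{\uv})^2$ and nothing else — conditioning on $\Wv(0)$ and $\uv$ first, then using independence of $\Vv(0)$, cleanly decouples them. Everything else (computing the Jacobian, the $\Vv$-part scalar concentration, the union bound) is routine.
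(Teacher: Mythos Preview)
Your proposal is correct and follows the same overall strategy as the paper --- compute $\Jg(\thetavz)\Jg(\thetavz)^\top$ explicitly and control its smallest eigenvalue via matrix concentration --- but organizes the argument differently. The paper keeps the two contributions together as a single summand $\Hv_i = \phi'(\Wv^i(0)\uv)^2\Vv_i(0)\Vv_i(0)^\top + \phi(\Wv^i(0)\uv)^2\Id_n$ and applies matrix Chernoff once to $\frac{1}{k}\sum_i \Hv_i$; to bound $\lambda_{\max}(\Hv_i)$ it must control $\max_{i\in[k]}\phi(\Wv^i(0)\uv)^2$ via Gaussian concentration and a union bound over the $k$ indices, which is where the extra $\log(k)$ factor enters and why the statement reads $k/\log(k)\gtrsim n\log n$. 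Your split handles the $\Vv$-block (a scalar multiple of $\Id_n$) by scalar sub-exponential concentration and the $\Wv$-block by matrix Chernoff with the deterministic per-summand bound $\lambda_{\max}\leq B^2D^2n$; this sidesteps the union bound and actually yields the slightly stronger condition $k\gtrsim n\log n$. Both routes are valid; yours is marginally sharper and arguably cleaner, while the paper's single-shot argument is more compact. Two minor remarks: first, $\lambda_{\min}(A+B)\geq\lambda_{\min}(A)+\lambda_{\min}(B)$ is Weyl's inequality and is fine here; second, your conditioning suggestion is unnecessary --- after conditioning on $\uv$, the summands $\phi'(\Wv^i(0)\uv)^2\Vv_i(0)\Vv_i(0)^\top$ are already i.i.d.\ with mean $\Cphid^2\Id_n$, so matrix Chernoff applies directly.
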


\begin{proof}

Define the matrix $\Hv = \jthetaz\jthetaz\tp$. Since $\uv$ is on the unit sphere, $\Hv$ reads
\begin{align*}
    \Hv = \frac{1}{k} \sum_{i=1}^k \Hv_i, \qwhereq \Hv_i \eqdef \phi'(\Wv^i(0)\uv)^2\Vv_i(0)\Vv_i(0)\tp + \phi(\Wv^i(0)\uv)^2\Id_n .
\end{align*}
It then follows that
\begin{align*}
    \Expect{}{\Hv} &= \frac{1}{k}\Expect{X\sim \stddistrib}{\phi'(X)^2} \sum_{i=1}^k\Expect{}{\Vv_i(0) \Vv_i(0)\tp} + \Expect{X\sim \stddistrib}{\phi(X)^2}\Id_n\\
    &= (\Cphid^2 + \Cphi^2) \Id_n,
\end{align*}
where we used \ref{ass:u_sphere}, \ref{ass:w_init} and orthogonal invariance of the Gaussian distribution, hence $\Wv^i(0)\uv$ are iid in $\stddistrib$, as well as \ref{ass:v_init} and independence between $\Vv(0)$ and $\Wv(0)$. Moreover, $\Expect{}{\phi(X)} \leq \Cphi$, and since $X\sim\stddistrib$ and in view of \ref{ass:phi_diff}, we can upper-bound $\phi(X)$ using the Gaussian concentration inequality to get
\begin{align}
    \prob{\phi(X) \geq \Cphi\sqrt{\log(nk)} + \tau} \leq \prob{\phi(X) \geq \Expect{}{\phi(X)} + \tau}
    \leq \exp{\left(-\frac{\tau^2}{2B^2}\right)}.
\end{align}
By choosing $\tau = \sqrt{2}B\sqrt{\log(nk)}$, and taking $c_1 = \Cphi + \sqrt{2}B$, we get
\begin{align}
    \prob{\phi(X) \geq c_1\sqrt{\log(nk)}} \leq (nk)^{-1} .
\end{align}
Using a union bound, we obtain
\begin{align*}
\prob{\max_{i \in [k]} \phi(\Wv^i(0)\uv)^2 > c_1\log(nk)} \leq n(nk)^{-1} \leq n^{-1} .
\end{align*}
Thus, with probability at least $1-n^{-1}$ we get
\begin{align*}
\max_{i \in [k]} \lmax\pa{\Hv_i} 
\leq B^2D^2n + c_1\log(nk)
\leq c_2n\log(k) , 
\end{align*}
where $c_2 = B^2D^2 + 2c_1$.
We can then apply the matrix Chernoff inequality \cite[Theorem~5.1.1]{tropp_introduction_2015} to get
\begin{align*}
&\prob{\sigminjgz \leq \delta\sqrt{\Cphid^2 + \Cphi^2}} \\
&\leq \prob{\sigminjgz \leq \delta\sqrt{\Cphid^2 + \Cphi^2} \; \bigg| \max_{i \in [k]} \lmax\pa{\Hv_i} \leq c_2n\log(k)} \\
& \quad + \prob{\max_{i \in [k]} \lmax\pa{\Hv_i} \geq c_2n\log(k)} \\
&\leq ne^{-\frac{(1-\delta)^2k(\Cphid^2 + \Cphi^2)}{c_2n\log(k)}} + n^{-1} .
\end{align*}
Taking $\delta=1/2$ and $k$ as prescribed with a sufficiently large constant $C$, we conclude.

\end{proof}



\begin{lemma}[Local Lipschitz constant of $\jcal$ with both layers trained]\label{lemma:lip-Jacobian-both-layers}
Suppose that assumptions \ref{ass:phi_diff}, \ref{ass:u_sphere} and \ref{ass:v_init} are satisfied. For the one-hidden layer network \eqref{eq:dipntk} with both layers trained, we have for $n \geq 2$ and any $\rho > 0$:
\[
\Lip_{\Ball(\thetavz,\rho)}(\jcal) \leq B (1+2(D+\rho))\sqrt{\frac{n}{k}} .
\]
\end{lemma}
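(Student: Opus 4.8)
The plan is to bound the Lipschitz constant of the Jacobian $\jcal = \Jg$ of the two-layer network $\gv(\uv,\thetav) = \frac{1}{\sqrt{k}}\Vv\phi(\Wv\uv)$ directly, by estimating the operator norm of the difference $\jcal(\thetav_1) - \jcal(\thetav_2)$ for $\thetav_1,\thetav_2 \in \Ball(\thetavz,\rho)$, uniformly over the ball. First I would write out the Jacobian explicitly with respect to the two blocks of parameters. Differentiating $\gv$ with respect to $\Vv$ gives an operator whose action involves $\phi(\Wv\uv)/\sqrt{k}$, and differentiating with respect to $\Wv$ gives an operator whose action involves $\frac{1}{\sqrt{k}}\Vv\,\diag(\phi'(\Wv\uv))$ applied through $\uv$. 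So $\jcal$ is essentially built out of the two vectors $\phi(\Wv\uv) \in \R^k$ and $\phi'(\Wv\uv) \in \R^k$ and the matrix $\Vv$, all rescaled by $1/\sqrt{k}$.

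**The two pieces of the difference.**
Next I would split $\jcal(\thetav_1) - \jcal(\thetav_2)$ into the contribution from the $\Vv$-block and the contribution from the $\Wv$-block, and bound each operator norm separately, using that $\|\jcal(\thetav_1)-\jcal(\thetav_2)\| \leq \|\Delta_{\Vv}\| + \|\Delta_{\Wv}\|$. For the $\Vv$-block, the difference only depends on $\Wv_1\uv$ versus $\Wv_2\uv$ through $\phi$; since $\phi$ is $B$-Lipschitz (by \ref{ass:phi_diff}) and $\|\uv\|=1$ (by \ref{ass:u_sphere}), one gets a factor $B\|\Wv_1-\Wv_2\|/\sqrt{k}$, and summing the $k$ coordinates in the right norm produces the $\sqrt{n/k}\cdot B$-type scaling — here one needs to be careful that the relevant norm on the $\Wv$-differences is Frobenius but the output scaling only costs $\sqrt{n}$. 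For the $\Wv$-block, there are two sources of variation: the change in $\Vv$ (bounded by $\|\Vv_1-\Vv_2\|$, and $\|\Vv\| $ itself is controlled by $\|\Vv(0)\| + \rho$ on the ball, where $\|\Vv(0)\|$ columns are $D$-bounded so $\|\Vv(0)\|$-type quantities contribute the $D$), and the change in $\phi'(\Wv\uv)$, bounded using that $\phi'$ is $B$-Lipschitz. Collecting: the $\Vv$-change term yields something like $B\sqrt{n/k}$, the $\Vv$-magnitude-times-$\phi'$-change term yields $B(D+\rho)\sqrt{n/k}$ up to the constant $2$, giving the stated $B(1+2(D+\rho))\sqrt{n/k}$.

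**Controlling $\|\Vv\|$ on the ball and bookkeeping.**
A key bookkeeping point is that on $\Ball(\thetavz,\rho)$, $\|\Vv - \Vv(0)\|_F \leq \rho$, hence $\|\Vv\| \leq \|\Vv(0)\| + \rho$; and by \ref{ass:v_init} the columns of $\Vv(0)$ are $D$-bounded, which is where the $D$ enters (one must decide whether $\|\Vv(0)\|$ is bounded deterministically by $D\sqrt{\cdot}$ after appropriate normalization — the statement is deterministic, so I expect the bound $\|\Vv_i(0)\| \leq D\sqrt{n}$-flavored estimates or a column-wise argument that keeps the operator norm growth at $D\sqrt{n}$... actually more carefully, since each $\Hv_i$-type quantity in the companion lemma uses $\|\Vv_i(0)\| \leq D\sqrt{n}$, I would track $\Vv$ column by column so that the per-neuron contribution is $O(D+\rho)$ and the aggregate output norm costs only $\sqrt{n}$ from the ambient dimension and $1/\sqrt{k}$ from the scaling). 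I also need $|\phi'| \leq B$ and $|\phi(x)| \leq |\phi(0)| + B|x|$; the constant $\phi(0)$ should be absorbable or assumed zero, and with $\|\Wv\uv\|$ controlled on the ball this stays manageable.

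**Main obstacle.**
The main obstacle I anticipate is the careful norm bookkeeping: the Jacobian is a map from $\R^p = \R^{nk + kd}$ to $\R^n$, and one must consistently use the Frobenius (Euclidean) norm on the parameter perturbations while extracting only a $\sqrt{n}$ (not $\sqrt{nk}$) factor on the output side, which requires exploiting the $1/\sqrt{k}$ normalization and the fact that the per-coordinate (per-neuron) terms are summed via Cauchy–Schwarz in exactly the right way. Getting the clean constant $(1+2(D+\rho))$ rather than a messier polynomial in $D$, $\rho$, $B$ requires organizing the three difference terms (change in $\phi(\Wv\uv)$, change in $\Vv$, change in $\phi'(\Wv\uv)$ modulated by $\|\Vv\|$) so that two of them combine into the $2(D+\rho)$ factor and one gives the lone $1$. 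No step is deep, but the combination must be done attentively.
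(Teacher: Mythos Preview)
Your proposal is correct and follows essentially the same route as the paper: write the Jacobian block by block, split the difference into the three contributions (change in $\phi(\Wv\uv)$ for the $\Vv$-block, change in $\Vv$ and change in $\phi'(\Wv\uv)$ modulated by $\max_i\norm{\Vvalt_i}$ for the $\Wv$-block), use $|\phi'|\leq B$ and $B$-Lipschitzness of $\phi,\phi'$ together with $\norm{\uv}=1$, and control $\max_i\norm{\Vvalt_i}^2 \leq 2nD^2+2\rho^2$ via \ref{ass:v_init}. The paper carries this out in squared Frobenius norm (using $n\geq 2$ to absorb stray factors of $2$) and then takes a square root with $(a+b)^{1/2}\leq a^{1/2}+b^{1/2}$, so your worry about $|\phi(0)|$ is unnecessary---only differences of $\phi$ appear---and your accounting of which terms produce the ``$1$'' versus the ``$2(D+\rho)$'' is slightly off (in the paper one term gives $4D^2+2\rho^2$ and the other two together give the $1$), but these are cosmetic.
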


\begin{proof}
Let $\thetav \in \R^{k(d + n)}$ (resp. $\thetavalt$) be the vectorized form of the parameters of the network $(\Wv,\Vv)$ (resp. $(\Wvalt,\Vvalt)$). Using the expression of the Jacobian $\Jg$, we have, for $\thetav, \thetavalt \in \Ball(R,\thetavz)$,
 
\begin{align*}
&\norm{\jtheta - \jcal(\thetavalt)}^2 \\
&\leq \frac{1}{k} \pa{\sum_{i=1}^k \normf{\phi'(\Wv^i\uv)\Vv_i\uv\tp - \phi'(\Wvalt^i\uv)\Vvalt_i\uv\tp}^2 + 
n\norm{\phiWu - \phiWualt}^2} \\
&\leq \frac{1}{k} \Bigg(2 \sum_{i=1}^k \pa{\normf{\phi'(\Wv^i\uv)\pa{\Vv_i - \Vvalt_i}\uv\tp}^2 + \normf{\pa{\phi'(\Wv^i\uv)-\phi'(\Wvalt^i\uv)}\Vvalt_i\uv\tp}^2} \\
&\qquad + n\norm{\phiWu - \phiWualt}^2\Bigg) \\ 
&\leq \frac{1}{k} \pa{2 B^2 \sum_{i=1}^k \pa{\norm{\Vv_i - \Vvalt_i}^2 + \norm{\Wv^i-\Wvalt^i}^2\norm{\Vvalt_i}^2} + n\norm{\phiWu - \phiWualt}^2}\\
&\leq \frac{1}{k} \pa{2 B^2 \normf{\Vv - \Vvalt}^2 + 2B^2\sum_{i=1}^k\norm{\Wv^i-\Wvalt^i}^2\norm{\Vvalt_i}^2 +  B^2n\norm{(\Wv - \Wvalt)\uv}^2}\\
&\leq \frac{1}{k} \pa{2 B^2 \normf{\Vv - \Vvalt}^2 + 2B^2\max_i\norm{\Vvalt_i}^2\normf{\Wv-\Wvalt}^2 +  B^2n\normf{\Wv - \Wvalt}^2}\\
&\leq  \frac{n}{k}B^2\pa{\normf{\Vv - \Vvalt}^2 + \normf{\Wv - \Wvalt}^2} + \frac{2}{k}B^2\max_i\norm{\Vvalt_i}^2\normf{\Wv-\Wvalt}^2\\
&=  \frac{n}{k} B^2 \norm{\thetav - \thetavalt}^2 + \frac{2}{k}B^2\max_i\norm{\Vvalt_i}^2\normf{\Wv-\Wvalt}^2 .
\end{align*}
Moreover, for any $i \in [k]$:
\[
\norm{\Vvalt_i}^2 \leq 2\norm{\Vv_i(0)}^2 + 2 \norm{\Vvalt_i-\Vv_i(0)}^2 \leq 2\norm{\Vv_i(0)}^2 + 2 \norm{\thetav-\thetavz}^2 \leq 2nD^2 + 2\rho^2 ,
\]
where we used \ref{ass:v_init}.
Thus
\begin{align*}
\norm{\jtheta - \jcal(\thetavalt)}^2 \leq \frac{n}{k} B^2\pa{1+4D^2+2\rho^2}\norm{\thetav - \thetavalt}^2 .
\end{align*}
Taking the square-root and using that $(a+b)^{1/2} \leq a^{1/2} + b^{1/2}$ for any $a, b \geq 0$, we conclude.
\end{proof}



\begin{lemma}[Bound on the initial error]\label{lemma:bound_initial_misfit}
Under assumptions \ref{ass:phi_diff}, \ref{ass:F_diff} and \ref{ass:u_sphere} to \ref{ass:v_init}, the initial error of the network satisfies
\begin{align*}
\norm{\yv(0) - \yv} \leq C\LFz \sqrt{n\log(d)} + \sqrt{m}\pa{\norminf{\fopnl(\xvc)} + \norminf{\veps}} ,
\end{align*}
with probability at least $1 - d^{-1}$, where $C$ is a constant that depends only on $B$, $\Cphi$, and $D$.
\end{lemma}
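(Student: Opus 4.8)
The plan is to reduce everything to a high‑probability bound on $\norm{\xv(0)}$, where $\xv(0)=\gv(\uv,\thetav(0))=\frac{1}{\sqrt{k}}\Vv(0)\phi(\Wv(0)\uv)=\frac{1}{\sqrt{k}}\sum_{i=1}^{k}\phi(\Wv^i(0)\uv)\Vv_i(0)$. Using $\yv=\fopnl(\xvc)+\veps$, the normalization $\fopnl(0)=0$, and the triangle inequality,
\begin{align*}
\norm{\yv(0)-\yv}&\leq\norm{\fopnl(\xv(0))-\fopnl(0)}+\norm{\fopnl(\xvc)}+\norm{\veps}\\
&\leq\norm{\fopnl(\xv(0))-\fopnl(0)}+\sqrt{m}\norminf{\fopnl(\xvc)}+\sqrt{m}\norminf{\veps},
\end{align*}
so only the first term is at stake. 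Once we establish $\norm{\xv(0)}\leq C\sqrt{n\log(d)}$ on an event of probability at least $1-d^{-1}$, the segment $[0,\xv(0)]$ lies in $\Ball(0,C\sqrt{n\log(d)})$, so by Assumption~\ref{ass:F_diff} (hence $\fopnl\in\cC^1$) and the mean value theorem,
\begin{equation*}
\norm{\fopnl(\xv(0))-\fopnl(0)}\leq\Big(\sup_{\zv\in[0,\xv(0)]}\norm{\Jf(\zv)}\Big)\norm{\xv(0)}\leq\LFz\norm{\xv(0)}\leq C\LFz\sqrt{n\log(d)},
\end{equation*}
which is exactly the claimed bound. Thus the whole proof is the estimate $\norm{\xv(0)}\leq C\sqrt{n\log(d)}$.

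To prove it I would condition on $(\Wv(0),\uv)$ and view $\xv(0)$ as a sum of the $k$ independent, centered random vectors $\frac{1}{\sqrt{k}}\phi(\Wv^i(0)\uv)\Vv_i(0)$, $i\in[k]$ (independence and centering come from Assumption~\ref{ass:v_init}, the $\Vv_i(0)$ being i.i.d., centered, with identity covariance and $D$‑bounded entries, independent of $(\Wv(0),\uv)$). Their norms are bounded by $M_i\eqdef\frac{D\sqrt{n}}{\sqrt{k}}\abs{\phi(\Wv^i(0)\uv)}$, and, using $\Expect{}{\norm{\Vv_i(0)}^2}=n$, one has $\sum_{i=1}^{k}\Expect{}{\norm{\tfrac{1}{\sqrt{k}}\phi(\Wv^i(0)\uv)\Vv_i(0)}^2}=\frac{n}{k}\sum_{i=1}^{k}\phi(\Wv^i(0)\uv)^2\eqdef\sigma^2$. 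A dimension‑free Bernstein inequality for sums of independent bounded random vectors in a Hilbert space then gives, conditionally on $(\Wv(0),\uv)$,
\begin{equation*}
\norm{\xv(0)}\lesssim\sigma+\sigma\sqrt{t}+\Big(\max_{i\in[k]}M_i\Big)t\qquad\text{with probability at least }1-e^{-t},
\end{equation*}
and I would take $t=c\log(d)$ for a suitable constant $c$ so that the conditional failure probability is at most $\tfrac12 d^{-1}$. It then remains to control $\sigma^2$ and $\max_i M_i$ through estimates on $(\Wv(0),\uv)$ alone.

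For this I invoke Assumptions~\ref{ass:u_sphere}, \ref{ass:w_init} and orthogonal invariance of the Gaussian: since $\uv\in\sph^{d-1}$ and $\Wv(0)$ has i.i.d.\ $\stddistrib$ entries, the scalars $\Wv^i(0)\uv$ are i.i.d.\ $\stddistrib$, so $\norm{\Wv(0)\uv}^2=\sum_i(\Wv^i(0)\uv)^2\sim\chi^2_k$ concentrates; combined with the growth bound $\phi(x)^2\leq2\phi(0)^2+2B^2x^2$ (a consequence of Assumption~\ref{ass:phi_diff}, with $\phi(0)^2\leq2(\Cphi^2+B^2)$), this yields $\sum_i\phi(\Wv^i(0)\uv)^2\lesssim k$, hence $\sigma^2\lesssim n$, on an event of probability at least $1-\tfrac14 d^{-1}$ as soon as $k\gtrsim\log(d)$ — amply ensured by the overparametrization hypothesis of Theorem~\ref{th:dip_two_layers_converge}. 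Likewise the Gaussian maximal inequality and a union bound over $i\in[k]$ give $\max_i\abs{\Wv^i(0)\uv}\lesssim\sqrt{\log(kd)}$, hence $\max_i M_i\lesssim\sqrt{n\log(kd)/k}$, on an event of probability at least $1-\tfrac14 d^{-1}$, and the same overparametrization makes $(\max_i M_i)\log(d)\lesssim\sqrt{n\log(d)}$. Substituting into the conditional Bernstein bound and intersecting the three events gives $\norm{\xv(0)}\leq C\sqrt{n\log(d)}$ with probability at least $1-d^{-1}$, for a constant $C$ depending only on $B$, $\Cphi$ and $D$; plugging this into the mean value theorem step above finishes the proof.

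The triangle/mean value theorem step and the Gaussian and $\chi^2$ tail estimates are routine (the latter essentially already appear in the proof of Lemma~\ref{lemma:min_eigenvalue_singvalue_init_both_layer}). The one genuinely delicate point is obtaining the $\sqrt{n\log(d)}$ scaling rather than something like $\sqrt{nk\log(d)}$ or $\sqrt{n}\,\mathrm{polylog}$: this is exactly where the $1/\sqrt{k}$ normalization of the generator is used — it makes the conditional variance proxy $\sigma^2=\frac{n}{k}\sum_i\phi(\Wv^i(0)\uv)^2$ of order $n$ rather than $nk$ — and where one must use a \emph{dimension‑free} (Hilbert‑space) vector Bernstein inequality, so that no factor growing with $n$ (which may vastly exceed $d$ in the DIP regime) contaminates the probability, in contrast with the dimension‑dependent matrix Bernstein bound.
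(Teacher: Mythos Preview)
Your reduction via the triangle inequality and the mean value theorem is exactly the paper's opening move. The divergence is in the concentration step for $\norm{\xv(0)}$. The paper does not exploit the centering of the $\Vv_i(0)$ at all: it pulls out $\max_i\norm{\Vv_i(0)}\leq D\sqrt{n}$ to get $\norm{\xv(0)}\leq D\sqrt{n}\cdot k^{-1/2}\sum_i|\phi(\Wv^i(0)\uv)|$, observes that this sum is a $B$-Lipschitz function of the Gaussian matrix $\Wv(0)$, and applies a single Gaussian concentration inequality in $\Wv(0)$. Your route is structurally different: you condition on $(\Wv(0),\uv)$, use that the $\Vv_i(0)$ are centered with identity covariance, and invoke a dimension-free vector Bernstein inequality, followed by two further tail estimates on $(\Wv(0),\uv)$ to control $\sigma$ and $\max_i M_i$. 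Your argument is sound, but it delivers the $\sqrt{n\log d}$ bound only under the side requirement that $k$ be at least polylogarithmic in $d$ (needed so that the $(\max_i M_i)\cdot t$ term is dominated); this hypothesis is absent from the lemma's statement, though it is available in Theorem~\ref{th:dip_two_layers_converge} where the lemma is used. The paper's route is shorter---one concentration step instead of three---and relies only on the boundedness of $\Vv(0)$, not on its first two moments; on the other hand, its claim that the expectation of $k^{-1/2}\sum_i|\phi(\Wv^i(0)\uv)|$ is at most $\Cphi$ would seem to need more care, since each summand has nonzero mean and there are $k$ of them. Your use of the centering of $\Vv_i(0)$ is precisely what sidesteps this issue.
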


\begin{proof}
By \ref{ass:F_diff} and the mean value theorem, we have
\[
\norm{\yvz - \yv} \leq \max_{\xv \in \Ball(0,\norm{\xvz})} \norm{\Jf(\xv)}\norm{\xvz} + \sqrt{m} \pa{\norminf{\fopnl(\xvc)} + \norminf{\veps}} ,
\]
where $\xvz = \gdipz = \frac{1}{\sqrt{k}} \sum_{i=1}^k \phi(\Wv^i(0)\uv)\Vv_i(0)$. Moreover, by \ref{ass:v_init}:
\[
\norm{\gdipz} \leq \max_i \norm{\Vv_i(0)} \frac{1}{\sqrt{k}} \sum_{i=1}^k \abs{\phi(\Wv^i(0)\uv)} \leq D\sqrt{n} \frac{1}{\sqrt{k}} \sum_{i=1}^k \abs{\phi(\Wv^i(0)\uv)} .
\]

We now prove that the last term concentrates around its expectation. First, owing to \ref{ass:u_sphere} and \ref{ass:w_init}, we can argue using orthogonal invariance of the Gaussian distribution and independence to infer that
\[
\Expect{}{\frac{1}{\sqrt{k}} \sum_{i=1}^k \abs{\phi(\Wv^i(0)\uv)}}^2 \leq \frac{1}{k}\Expect{}{\pa{\sum_{i=1}^k \abs{\phi(\Wv^i(0)\uv)}}^2} = \Expect{}{\phi(\Wv^1(0)\uv)^2} = \Cphi^2 .
\]
In addition, the triangle inequality and Lipschitz continuity of $\phi$ (see \ref{ass:phi_diff}) yields
\begin{align*}
\frac{1}{\sqrt{k}}\abs{\sum_{i=1}^k \abs{\phi(\Wv^i\uv)} - \abs{\phi(\Wvalt^i\uv)}}
&\leq \frac{1}{\sqrt{k}}\sum_{i=1}^k \abs{\phi(\Wv^i\uv) - \phi(\Wvalt^i\uv)} \\
&\leq B\pa{\frac{1}{\sqrt{k}}\sum_{i=1}^k \norm{\Wv^i - \Wvalt^i}} \leq B\normf{\Wv - \Wvalt} .
\end{align*}
We then get using the Gaussian concentration inequality that
\begin{align*}
&\prob{\frac{1}{\sqrt{k}} \sum_{i=1}^k \abs{\phi(\Wv^i(0)\uv)} \geq \Cphi \sqrt{\log(d)} + \tau} \\
&\leq \prob{\frac{1}{\sqrt{k}} \sum_{i=1}^k \abs{\phi(\Wv^i(0)\uv)} \geq \Expect{}{\frac{1}{\sqrt{k}} \sum_{i=1}^k \abs{\phi(\Wv^i(0)\uv)}} + \tau}
\leq e^{-\frac{\tau^2}{2B^2}} .
\end{align*}
Taking $\tau = \sqrt{2}B \sqrt{\log(d)}$, we get
\[
\norm{\xvz} \leq C \sqrt{n\log(d)} 
\] 
with probability at least $1-d^{-1}$. Since the event above implies $\Ball(0,\norm{\xvz}) \subset \Ball\pa{0,C\sqrt{n\log(d)}}$, we conclude.
\end{proof}

\begin{proof}[Proof of Theorem~\ref{th:dip_two_layers_converge}]
Proving Theorem~\ref{th:dip_two_layers_converge} amounts to showing that \eqref{eq:bndR} holds with high probability under our scaling. This will be achieved by combining Lemma~\ref{lemma:min_eigenvalue_singvalue_init_both_layer}, Lemma~\ref{lemma:lip-Jacobian-both-layers} and Lemma~\ref{lemma:bound_initial_misfit} as well as the union bound.

From Lemma~\ref{lemma:min_eigenvalue_singvalue_init_both_layer}, we have
\begin{align*}
\sigminjgz \geq \sqrt{\Cphi^2 + \Cphid^2}/2 
\end{align*}
with probability at least $1-2n^{-1}$ provided $k \geq C_0 n\log(n)\log(k)$ for $C_0 > 0$. On the other hand, from Lemma~\ref{lemma:lip-Jacobian-both-layers}, and recalling $R$ from \eqref{eq:RandR'}, we have that $R$ must obey
\[
R \geq \frac{\sigminjgz}{2B((1+2D)+2R)}\sqrt{\frac{k}{n}} \geq \frac{\sqrt{\Cphi^2 + \Cphid^2}}{8B((1/2+D)+R)}\sqrt{\frac{k}{n}} .
\]
Solving for $R$, we arrive at
\[
R \geq \frac{\sqrt{(1/2+D)^2 + \frac{\sqrt{(\Cphi^2 + \Cphid^2)\frac{k}{n}}}{2B}}-(1/2+D)}{2}.
\]
Simple algebraic computations and standard bounds on $\sqrt{1+a}$ for $a \in [0,1]$ show that
\[
R \geq C_1\pa{\frac{k}{n}}^{1/4}
\]
whenever $k \gtrsim n$, $C_1$ being a positive constant that depends only on $B$, $\Cphi$, $\Cphid$ and $D$.

Thanks to \ref{ass:l_smooth} and \ref{ass:min_l_zero}, we have by the descent lemma, see e.g. \cite[Lemma~2.64]{BauschkeBook}, that
\[
\lossyzy \leq \max_{\vv \in [\yv,\yvz]}\frac{\norm{\nabla \lossy(\vv)}}{\norm{\vv-\yv}} \frac{\norm{\yvz-\yv}^2}{2} .
\] 
Combining Lemma~\ref{lemma:bound_initial_misfit} and the fact that 
\[
[\yv,\yvz] \subset \Ball(0,\norm{\yv}+\norm{\yvz}) 
\]
then allows to deduce that with probability at least $1-d^{-1}$, we have
\[
\lossyzy \leq \frac{\LLz}{2}\pa{C\LFz \sqrt{n\log(d)} + \sqrt{m}\pa{\norminf{\fopnl(\xvc)} + \norminf{\veps}}}^2 .
\] 
Therefore, using the union bound and the fact that $\psi$ is increasing, it is sufficient for \eqref{eq:bndR} to be fulfilled with probability at least $1-2n^{-1}-d^{-1}$, that
\begin{equation}\label{eq:R'Rbnd}
\frac{4}{\sigminF\sqrt{\Cphi^2 + \Cphid^2}}\psi\pa{\frac{\LLz}{2}\pa{C\LFz \sqrt{n\log(d)} + \sqrt{m}\pa{\norminf{\fopnl(\xvc)} + \norminf{\veps}}}^2} < C_1\pa{\frac{k}{n}}^{1/4} ,
\end{equation}
whence we deduce the claimed scaling.
\end{proof}


\section{Numerical Experiments}\label{sec:expes}



To validate our theoretical findings, we carried out a series of experiments on two-layer neural networks in the DIP setting. Therein, 25000 gradient descent iterations with a fixed step-size were performed. If the loss reached a value smaller than $10^{-7}$, we stopped the training and considered it has converged. For these networks, we only  trained the first layer, $\Wv$, and fixed the second layer, $\Vv$, as it allows to have better theoretical scalings as discussed in Remark~\ref{rq:V_fixed}. Every network was initialized with respect to the assumption of this work where we used sigmoid activation function. The entries of $\xvc$ are drawn from $\stddistrib$ while the entries of the linear forward operator $\fopnl$ are drawn from $\mathcal{N}(0,1/\sqrt{n})$ to ensure that $\LFz$ is of constant order. 

Our first experiment in Figure~\ref{fig:plots_heatmaps} studies the convergence to a zero-loss solution of networks with different architecture parameters in a noise-free context. The absence of noise allows the networks to converge faster which is helpful to check convergence in 25000 iterations. We used $\lossy(\yvt) = \frac{1}{2}\norm{\yvt - \yv}^2$ as it should gives good exponential decay. For each set of architecture parameters, we did 50 runs and calculated the frequency at which the network arrived at the error threshold of $10^{-7}$. We present two experiments, in the first one we fix $m=10$ and $d=500$ and let $k$ and $n$ vary while in the second we fix $n=60$, $d=500$ and we let $k$ and $m$ vary.


\begin{figure}[htb!]
    \begin{subfigure}{.5\textwidth}
        \centering
        \includegraphics[width=.9\linewidth]{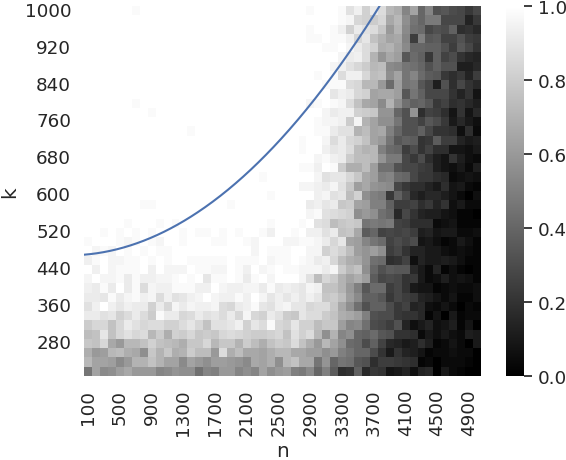}
        \caption{$k$ vs $n$}
        \label{fig:heat_s1}
    \end{subfigure}%
    \begin{subfigure}{.5\textwidth}
        \centering
        \includegraphics[width=.9\linewidth]{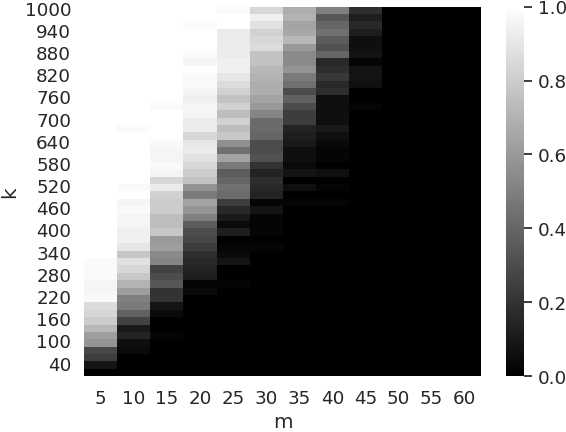}
        \caption{$k$ vs $m$}
        \label{fig:heat_s2}
    \end{subfigure}%
    \caption{Probability of converging to a zero-loss solution for networks with different architecture parameters confirming our theoretical predictions: linear dependency between $k$ and $m$ and at least quadratic dependency between $k$ and $n$. The blue line is a quadratic function representing the phase transition fitted on the data.}
    \label{fig:plots_heatmaps}
\end{figure}

Based on Remark~\ref{rq:V_fixed} concerning Theorem~\ref{th:dip_one_layer_fixed} which is a specialisation of Theorem~\ref{th:dip_two_layers_converge}, for our experimental setting (MSE loss with $\LFz$ of constant order), one should expect to observe convergence to zero-loss solutions when $k\gtrsim n^{2} m$. We observe in Figure~\ref{fig:heat_s1} the relationship between $k$ and $n$ for a fixed $m$. In this setup where $n \gg m$ and $\fop$ is Gaussian, we expect a quadratic relationship which seems to be the case in the plot. It is however surprising that with values of $k$ restricted to the range $[20,1000]$, the network converges to zero-loss solution with high probability for situations where $n > k$ which goes against our intuition for these underparametrized cases.

Additionally, the observation of Figure~\ref{fig:heat_s2} provides a very different picture when the ratio $m/n$ goes away from 0. We first see clearly the expected linear relationship between $k$ and $m$. However, we used in this experiment $n=60$ and we can see that for the same range of values of $k$, the method has much more difficulty to converge with already small $m$. This indicates that the ratio $m/n$ plays an important role in the level of overparametrization necessary for the network to converge. It is clear from these results that our bounds are not tight as we observe convergence for lower values of $k$ than expected.


In our second experiment presented in Figure~\ref{fig:signal_convergence}, we look at the signal evolution under different noise levels when the restricted injectivity constraint \ref{ass:F_inj} is met to verify our theoretical bound on the signal loss. Due to the fact that our networks can span the entirety of the space $\R^n$, this injectivity constraint becomes a global one, which forces us to use a square matrix as our forward operator, we thus chose to use $n=m=10$. Following the discussion about assumption \ref{ass:nablaL_F}, we choose to use $\lossy(\yvt) = \eta(\norm{\yvt - \yv}^2)$ with $\eta(s) = s^{p+1}/(2(p+1))$ where $p\in[0,1]$ with $p=0.2$ for this specific experiment. We generated once a forward operator with singular values in $\{\frac{1}{z^2 +1} \mid z\in [0,9]\}$ and kept the same one for all the runs. To better see the convergence of the signal, we ran these experiments for 200000 iterations. Furthermore $\epsilon$ is a noise vector with entries drawn from a uniform distribution $U(-\beta,\beta)$ with $\beta$ representing the level of noise.

\begin{figure}[htb!]
    \begin{subfigure}[t]{.48\textwidth}
        \centering
        \includegraphics[width=\linewidth]{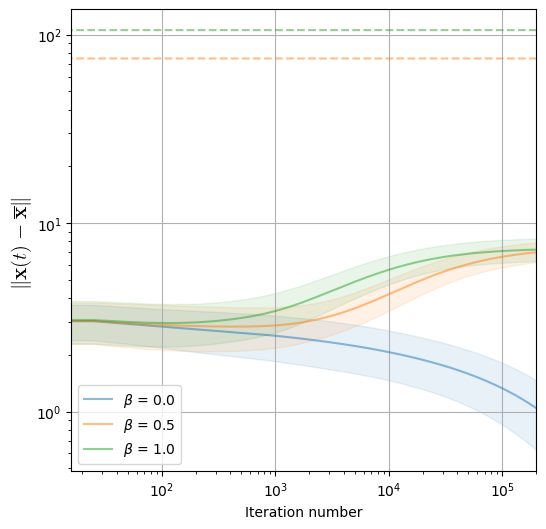}
        \caption{Signal distance to $\xvc$ for different noise levels. The mean and standard deviation of 50 runs are plotted. The dashed line represents the expectation of the theoretical upper bound of this distance when $t \to +\infty$.}
        \label{fig:signal_convergence}
    \end{subfigure}%
    \hspace{0.04\textwidth}
    \begin{subfigure}[t]{.48\textwidth}
        \centering
        \includegraphics[width=\linewidth]{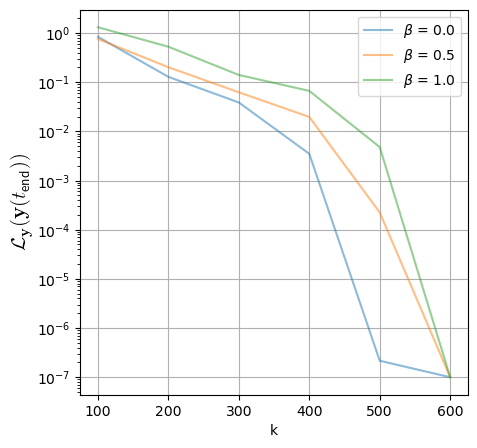}
        \caption{Loss found at time $t_{\text{end}}$, which correspond to the end of the optimization process for networks with varying number of neurons $k$ with three noise levels. Each point is averaged from 50 runs.}
        \label{fig:noise_vs_k}
    \end{subfigure}%
    \caption{Effect of the noise on both the signal and the loss convergence in different contexts.}
    \label{fig:noise_effect}
\end{figure}


In this figure, we plot the mean and the standard deviation of 50 runs for each noise level. For comparison we also show with the dashed line the expectation of the theoretical upper bound, corresponding to $\Expect{}{\norm{\varepsilon}/\muf} \geq \frac{\sqrt{m\beta}}{\sqrt{6}\muf}$. 
We observe that the gap between this theoretical bound and the mean of the signal loss is growing as the noise level grows. This indicates that the more noise, the less tighter our bound becomes.
We also see different convergence profiles of the signal depending on the noise level which is to be expected as the network will fit this noise to optimize its loss. Of course, when there is no noise, the signal tends to the ground truth thanks to the injectivity of the forward operator.


We continue the study of the effect of the noise on the convergence of the networks in Figure~\ref{fig:noise_vs_k}. We show the convergence profile of the loss depending on the noise level and $k$. For that we fixed $n=1000$, $m=10$, $d=10$, $p=0.1$ and ran the optimization of networks with different $k$ and $\beta$ values and we took the loss value obtained at the end of the optimization. The results are averaged from 50 runs and help to see that even if a network with insufficient overparametrization does not converge to a zero-loss solution, the more neurons it has, the better in average the solution in term of loss value. Moreover, this effect seems to stay true even with noise. It is interesting to see the behavior of the loss in such cases that are not treated by our theoretical framework.


For our fourth experiment, we are interested by the effect on the convergence speed of the parameter $p$ of the loss previously described. We fixed $n=1000$, $m=10$ and $k=800$ and varied $p$ between 0 and 1. For each choice of $p$, we trained 50 networks and show the mean value of the loss at each iteration in Figure~\ref{fig:p_convergence}. We chose to use $10^{6}$ iteration steps and let the optimization reach a limit of $10^{-14}$. As expected by corollary~\ref{col:loja_rate}, smaller $p$ values lead to faster convergence rate in general. Indeed, smaller $p$ values are closer to the case where $\alpha=1/2$ in the corollary and higher $p$ values means that $\alpha$ will grow away from $1/2$ which worsens the theoretical rate of convergence.

\begin{figure}[!htb]
    \centering
    \includegraphics[width=\linewidth]{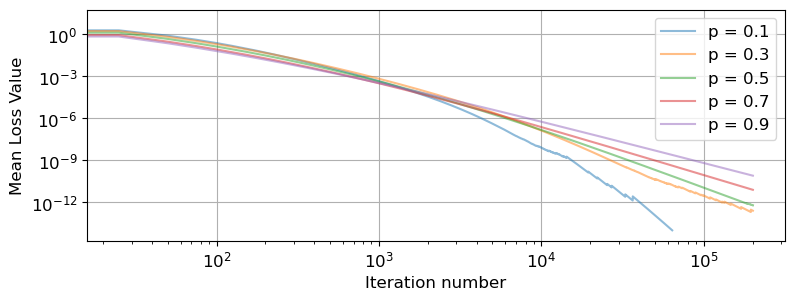}
    \caption{Convergence profile of different losses parametrized by $p$. The mean loss values at each iteration of 50 networks are plotted.}
    \label{fig:p_convergence}
\end{figure}

\section{Conclusion and Future Work}\label{sec:conclu}

This paper studied the optimization trajectories of neural networks in the inverse problem setting and provided both convergence guarantees for the network and recovery guarantees of the solution. Our results hold for a broad class of loss functions thanks to the Kurdyka-\L ojasewiecz inequality.  We also demonstrate that for a two-layers DIP network with smooth activation and sufficient overparametrization, we obtain with high probability our theoretical guarantees. Our proof relies on bounding the minimum singular values of the Jacobian of the network through an overparametrization that ensures a good initialization of the network. Then the recovery guarantees are obtained by decomposing the distance to the signal in different error terms explained by the noise, the optimization and the architecture. Although our bounds are not tight as demonstrated by the numerical experiments, they provide a step towards the theoretical understanding of neural networks for inverse problem resolution. In the future we would like to study more thorougly the multilayer case and adapt our result to take into account the ReLU function. Another future direction is to adapt our analysis to the supervised setting and to provide a similar analysis with accelerated optimization methods.




\bibliography{references}


\begin{thebibliography}{60}
\ifx \bisbn   \undefined \def \bisbn  #1{ISBN #1}\fi
\ifx \binits  \undefined \def \binits#1{#1}\fi
\ifx \bauthor  \undefined \def \bauthor#1{#1}\fi
\ifx \batitle  \undefined \def \batitle#1{#1}\fi
\ifx \bjtitle  \undefined \def \bjtitle#1{#1}\fi
\ifx \bvolume  \undefined \def \bvolume#1{\textbf{#1}}\fi
\ifx \byear  \undefined \def \byear#1{#1}\fi
\ifx \bissue  \undefined \def \bissue#1{#1}\fi
\ifx \bfpage  \undefined \def \bfpage#1{#1}\fi
\ifx \blpage  \undefined \def \blpage #1{#1}\fi
\ifx \burl  \undefined \def \burl#1{\textsf{#1}}\fi
\ifx \doiurl  \undefined \def \doiurl#1{\url{https://doi.org/#1}}\fi
\ifx \betal  \undefined \def \betal{\textit{et al.}}\fi
\ifx \binstitute  \undefined \def \binstitute#1{#1}\fi
\ifx \binstitutionaled  \undefined \def \binstitutionaled#1{#1}\fi
\ifx \bctitle  \undefined \def \bctitle#1{#1}\fi
\ifx \beditor  \undefined \def \beditor#1{#1}\fi
\ifx \bpublisher  \undefined \def \bpublisher#1{#1}\fi
\ifx \bbtitle  \undefined \def \bbtitle#1{#1}\fi
\ifx \bedition  \undefined \def \bedition#1{#1}\fi
\ifx \bseriesno  \undefined \def \bseriesno#1{#1}\fi
\ifx \blocation  \undefined \def \blocation#1{#1}\fi
\ifx \bsertitle  \undefined \def \bsertitle#1{#1}\fi
\ifx \bsnm \undefined \def \bsnm#1{#1}\fi
\ifx \bsuffix \undefined \def \bsuffix#1{#1}\fi
\ifx \bparticle \undefined \def \bparticle#1{#1}\fi
\ifx \barticle \undefined \def \barticle#1{#1}\fi
\bibcommenthead
\ifx \bconfdate \undefined \def \bconfdate #1{#1}\fi
\ifx \botherref \undefined \def \botherref #1{#1}\fi
\ifx \url \undefined \def \url#1{\textsf{#1}}\fi
\ifx \bchapter \undefined \def \bchapter#1{#1}\fi
\ifx \bbook \undefined \def \bbook#1{#1}\fi
\ifx \bcomment \undefined \def \bcomment#1{#1}\fi
\ifx \oauthor \undefined \def \oauthor#1{#1}\fi
\ifx \citeauthoryear \undefined \def \citeauthoryear#1{#1}\fi
\ifx \endbibitem  \undefined \def \endbibitem {}\fi
\ifx \bconflocation  \undefined \def \bconflocation#1{#1}\fi
\ifx \arxivurl  \undefined \def \arxivurl#1{\textsf{#1}}\fi
\csname PreBibitemsHook\endcsname

\bibitem[\protect\citeauthoryear{Arridge et~al.}{2019}]{arridge_solving_2019}
\begin{barticle}
\bauthor{\bsnm{Arridge}, \binits{S.}},
\bauthor{\bsnm{Maass}, \binits{P.}},
\bauthor{\bsnm{Ozan}, \binits{O.}},
\bauthor{\bsnm{Sch\"{o}nlieb}, \binits{C.-B.}}:
\batitle{Solving inverse problems using data-driven models}.
\bjtitle{Acta Numerica}
\bvolume{28},
\bfpage{1}--\blpage{174}
(\byear{2019})
\end{barticle}
\endbibitem

\bibitem[\protect\citeauthoryear{Ongie et~al.}{2020}]{ongie_deep_2020}
\begin{botherref}
\oauthor{\bsnm{Ongie}, \binits{G.}},
\oauthor{\bsnm{Jalal}, \binits{A.}},
\oauthor{\bsnm{Metzler}, \binits{C.A.}},
\oauthor{\bsnm{Baraniuk}, \binits{R.G.}},
\oauthor{\bsnm{Dimakis}, \binits{A.G.}},
\oauthor{\bsnm{Willett}, \binits{R.}}:
Deep {Learning} {Techniques} for {Inverse} {Problems} in {Imaging}.
IEEE Journal on Selected Areas in Information Theory,
39--56
(2020)
\end{botherref}
\endbibitem

\bibitem[\protect\citeauthoryear{Mukherjee et~al.}{2023}]{mukherjee2023learned}
\begin{barticle}
\bauthor{\bsnm{Mukherjee}, \binits{S.}},
\bauthor{\bsnm{Hauptmann}, \binits{A.}},
\bauthor{\bsnm{{\"O}ktem}, \binits{O.}},
\bauthor{\bsnm{Pereyra}, \binits{M.}},
\bauthor{\bsnm{Sch{\"o}nlieb}, \binits{C.-B.}}:
\batitle{Learned reconstruction methods with convergence guarantees: a survey of concepts and applications}.
\bjtitle{IEEE Signal Processing Magazine}
\bvolume{40}(\bissue{1}),
\bfpage{164}--\blpage{182}
(\byear{2023})
\end{barticle}
\endbibitem

\bibitem[\protect\citeauthoryear{Li et~al.}{2020}]{li_nett_2020}
\begin{barticle}
\bauthor{\bsnm{Li}, \binits{H.}},
\bauthor{\bsnm{Schwab}, \binits{J.}},
\bauthor{\bsnm{Antholzer}, \binits{S.}},
\bauthor{\bsnm{Haltmeier}, \binits{M.}}:
\batitle{{NETT}: solving inverse problems with deep neural networks}.
\bjtitle{Inverse Problems}
\bvolume{36}(\bissue{6}),
\bfpage{065005}
(\byear{2020})
\end{barticle}
\endbibitem

\bibitem[\protect\citeauthoryear{Mukherjee et~al.}{2020}]{mukherjee2020learned}
\begin{botherref}
\oauthor{\bsnm{Mukherjee}, \binits{S.}},
\oauthor{\bsnm{Dittmer}, \binits{S.}},
\oauthor{\bsnm{Shumaylov}, \binits{Z.}},
\oauthor{\bsnm{Lunz}, \binits{S.}},
\oauthor{\bsnm{{\"O}ktem}, \binits{O.}},
\oauthor{\bsnm{Sch{\"o}nlieb}, \binits{C.-B.}}:
Learned convex regularizers for inverse problems.
arXiv preprint arXiv:2008.02839
(2020)
\end{botherref}
\endbibitem

\bibitem[\protect\citeauthoryear{Schwab et~al.}{2019}]{schwab2019deep}
\begin{barticle}
\bauthor{\bsnm{Schwab}, \binits{J.}},
\bauthor{\bsnm{Antholzer}, \binits{S.}},
\bauthor{\bsnm{Haltmeier}, \binits{M.}}:
\batitle{Deep null space learning for inverse problems: convergence analysis and rates}.
\bjtitle{Inverse Problems}
\bvolume{35}(\bissue{2}),
\bfpage{025008}
(\byear{2019})
\end{barticle}
\endbibitem

\bibitem[\protect\citeauthoryear{Liu et~al.}{2021}]{liu2021recovery}
\begin{barticle}
\bauthor{\bsnm{Liu}, \binits{J.}},
\bauthor{\bsnm{Asif}, \binits{S.}},
\bauthor{\bsnm{Wohlberg}, \binits{B.}},
\bauthor{\bsnm{Kamilov}, \binits{U.}}:
\batitle{Recovery analysis for plug-and-play priors using the restricted eigenvalue condition}.
\bjtitle{Advances in Neural Information Processing Systems}
\bvolume{34},
\bfpage{5921}--\blpage{5933}
(\byear{2021})
\end{barticle}
\endbibitem

\bibitem[\protect\citeauthoryear{Ulyanov et~al.}{2018}]{ulyanov_deep_2020}
\begin{bchapter}
\bauthor{\bsnm{Ulyanov}, \binits{D.}},
\bauthor{\bsnm{Vedaldi}, \binits{A.}},
\bauthor{\bsnm{Lempitsky}, \binits{V.}}:
\bctitle{Deep image prior}.
In: \bbtitle{Proceedings of the IEEE Conference on Computer Vision and Pattern Recognition},
pp. \bfpage{9446}--\blpage{9454}
(\byear{2018})
\end{bchapter}
\endbibitem

\bibitem[\protect\citeauthoryear{Prost et~al.}{2021}]{prost_learning_2021}
\begin{bchapter}
\bauthor{\bsnm{Prost}, \binits{J.}},
\bauthor{\bsnm{Houdard}, \binits{A.}},
\bauthor{\bsnm{Almansa}, \binits{A.}},
\bauthor{\bsnm{Papadakis}, \binits{N.}}:
\bctitle{Learning local regularization for variational image restoration}.
In: \bbtitle{International Conference on Scale Space and Variational Methods in Computer Vision},
pp. \bfpage{358}--\blpage{370}
(\byear{2021})
\end{bchapter}
\endbibitem

\bibitem[\protect\citeauthoryear{Venkatakrishnan et~al.}{2013}]{venkatakrishnan_plug-and-play_2013}
\begin{bchapter}
\bauthor{\bsnm{Venkatakrishnan}, \binits{S.V.}},
\bauthor{\bsnm{Bouman}, \binits{C.A.}},
\bauthor{\bsnm{Wohlberg}, \binits{B.}}:
\bctitle{Plug-and-play priors for model based reconstruction}.
In: \bbtitle{2013 IEEE Global Conference on Signal and Information Processing},
pp. \bfpage{945}--\blpage{948}
(\byear{2013})
\end{bchapter}
\endbibitem

\bibitem[\protect\citeauthoryear{Monga et~al.}{2021}]{monga_algorithm_2021}
\begin{barticle}
\bauthor{\bsnm{Monga}, \binits{V.}},
\bauthor{\bsnm{Li}, \binits{Y.}},
\bauthor{\bsnm{Eldar}, \binits{Y.C.}}:
\batitle{Algorithm unrolling: Interpretable, efficient deep learning for signal and image processing}.
\bjtitle{IEEE Signal Processing Magazine}
\bvolume{38}(\bissue{2}),
\bfpage{18}--\blpage{44}
(\byear{2021})
\end{barticle}
\endbibitem

\bibitem[\protect\citeauthoryear{Liu et~al.}{2019}]{liu_image_2019}
\begin{bchapter}
\bauthor{\bsnm{Liu}, \binits{J.}},
\bauthor{\bsnm{Sun}, \binits{Y.}},
\bauthor{\bsnm{Xu}, \binits{X.}},
\bauthor{\bsnm{Kamilov}, \binits{U.S.}}:
\bctitle{Image restoration using total variation regularized deep image prior}.
In: \bbtitle{ICASSP 2019-2019 IEEE International Conference on Acoustics, Speech and Signal Processing (ICASSP)},
pp. \bfpage{7715}--\blpage{7719}
(\byear{2019})
\end{bchapter}
\endbibitem

\bibitem[\protect\citeauthoryear{Mataev et~al.}{2019}]{mataev_deepred_2019}
\begin{bchapter}
\bauthor{\bsnm{Mataev}, \binits{G.}},
\bauthor{\bsnm{Milanfar}, \binits{P.}},
\bauthor{\bsnm{Elad}, \binits{M.}}:
\bctitle{Deepred: Deep image prior powered by red}.
In: \bbtitle{Proceedings of the IEEE/CVF International Conference on Computer Vision Workshops},
pp. \bfpage{0}--\blpage{0}
(\byear{2019})
\end{bchapter}
\endbibitem

\bibitem[\protect\citeauthoryear{Shi et~al.}{2022}]{shi_measuring_2022}
\begin{barticle}
\bauthor{\bsnm{Shi}, \binits{Z.}},
\bauthor{\bsnm{Mettes}, \binits{P.}},
\bauthor{\bsnm{Maji}, \binits{S.}},
\bauthor{\bsnm{Snoek}, \binits{C.G.}}:
\batitle{On measuring and controlling the spectral bias of the deep image prior}.
\bjtitle{International Journal of Computer Vision}
\bvolume{130}(\bissue{4}),
\bfpage{885}--\blpage{908}
(\byear{2022})
\end{barticle}
\endbibitem

\bibitem[\protect\citeauthoryear{Zukerman et~al.}{2021}]{zukerman_bp-dip_2021}
\begin{bchapter}
\bauthor{\bsnm{Zukerman}, \binits{J.}},
\bauthor{\bsnm{Tirer}, \binits{T.}},
\bauthor{\bsnm{Giryes}, \binits{R.}}:
\bctitle{Bp-dip: A backprojection based deep image prior}.
In: \bbtitle{2020 28th European Signal Processing Conference (EUSIPCO)},
pp. \bfpage{675}--\blpage{679}
(\byear{2021}).
\bcomment{IEEE}
\end{bchapter}
\endbibitem

\bibitem[\protect\citeauthoryear{Bartlett et~al.}{2021}]{bartlett_deep_2021}
\begin{barticle}
\bauthor{\bsnm{Bartlett}, \binits{P.L.}},
\bauthor{\bsnm{Montanari}, \binits{A.}},
\bauthor{\bsnm{Rakhlin}, \binits{A.}}:
\batitle{Deep learning: a statistical viewpoint}.
\bjtitle{Acta numerica}
\bvolume{30},
\bfpage{87}--\blpage{201}
(\byear{2021})
\end{barticle}
\endbibitem

\bibitem[\protect\citeauthoryear{Fang et~al.}{2021}]{fang_mathematical_2021}
\begin{barticle}
\bauthor{\bsnm{Fang}, \binits{C.}},
\bauthor{\bsnm{Dong}, \binits{H.}},
\bauthor{\bsnm{Zhang}, \binits{T.}}:
\batitle{Mathematical models of overparameterized neural networks}.
\bjtitle{Proceedings of the IEEE}
\bvolume{109}(\bissue{5}),
\bfpage{683}--\blpage{703}
(\byear{2021})
\end{barticle}
\endbibitem

\bibitem[\protect\citeauthoryear{Chizat et~al.}{2019}]{chizat_lazy_2019}
\begin{botherref}
\oauthor{\bsnm{Chizat}, \binits{L.}},
\oauthor{\bsnm{Oyallon}, \binits{E.}},
\oauthor{\bsnm{Bach}, \binits{F.}}:
On lazy training in differentiable programming.
Advances in neural information processing systems
\textbf{32}
(2019)
\end{botherref}
\endbibitem

\bibitem[\protect\citeauthoryear{Du et~al.}{2019}]{du_gradient_2019}
\begin{bchapter}
\bauthor{\bsnm{Du}, \binits{S.S.}},
\bauthor{\bsnm{Zhai}, \binits{X.}},
\bauthor{\bsnm{Poczos}, \binits{B.}},
\bauthor{\bsnm{Singh}, \binits{A.}}:
\bctitle{Gradient {Descent} {Provably} {Optimizes} {Over}-parameterized {Neural} {Networks}}.
In: \bbtitle{International Conference on Learning Representations}
(\byear{2019})
\end{bchapter}
\endbibitem

\bibitem[\protect\citeauthoryear{Arora et~al.}{2019}]{arora_fine-grained_2019}
\begin{bchapter}
\bauthor{\bsnm{Arora}, \binits{S.}},
\bauthor{\bsnm{Du}, \binits{S.}},
\bauthor{\bsnm{Hu}, \binits{W.}},
\bauthor{\bsnm{Li}, \binits{Z.}},
\bauthor{\bsnm{Wang}, \binits{R.}}:
\bctitle{Fine-grained analysis of optimization and generalization for overparameterized two-layer neural networks}.
In: \bbtitle{International Conference on Machine Learning},
pp. \bfpage{322}--\blpage{332}
(\byear{2019})
\end{bchapter}
\endbibitem

\bibitem[\protect\citeauthoryear{Oymak and Soltanolkotabi}{2019}]{oymak_overparameterized_2019}
\begin{bchapter}
\bauthor{\bsnm{Oymak}, \binits{S.}},
\bauthor{\bsnm{Soltanolkotabi}, \binits{M.}}:
\bctitle{Overparameterized nonlinear learning: Gradient descent takes the shortest path?}
In: \bbtitle{International Conference on Machine Learning},
pp. \bfpage{4951}--\blpage{4960}
(\byear{2019})
\end{bchapter}
\endbibitem

\bibitem[\protect\citeauthoryear{Oymak and Soltanolkotabi}{2020}]{oymak_toward_2020}
\begin{barticle}
\bauthor{\bsnm{Oymak}, \binits{S.}},
\bauthor{\bsnm{Soltanolkotabi}, \binits{M.}}:
\batitle{Toward moderate overparameterization: Global convergence guarantees for training shallow neural networks}.
\bjtitle{IEEE Journal on Selected Areas in Information Theory}
\bvolume{1}(\bissue{1}),
\bfpage{84}--\blpage{105}
(\byear{2020})
\end{barticle}
\endbibitem

\bibitem[\protect\citeauthoryear{Lojasiewicz}{1963}]{loj1}
\begin{barticle}
\bauthor{\bsnm{Lojasiewicz}, \binits{S.}}:
\batitle{Une propri\'et\'e topologique des sous-ensembles analytiques r\'eels}.
\bjtitle{Coll. du CNRS, Les {\'e}quations aux d{\'e}riv{\'e}es partielles}
\bvolume{117}(\bissue{87-89}),
\bfpage{2}
(\byear{1963})
\end{barticle}
\endbibitem

\bibitem[\protect\citeauthoryear{{\L}ojasiewicz}{1984}]{loj3}
\begin{barticle}
\bauthor{\bsnm{{\L}ojasiewicz}, \binits{S.}}:
\batitle{Sur les trajectoires du gradient d'une fonction analytique}.
\bjtitle{Semin. Geom., Univ. Studi Bologna}
\bvolume{1982/1983},
\bfpage{115}--\blpage{117}
(\byear{1984})
\end{barticle}
\endbibitem

\bibitem[\protect\citeauthoryear{Kurdyka}{1998}]{kurdyka_gradients_1998}
\begin{bchapter}
\bauthor{\bsnm{Kurdyka}, \binits{K.}}:
\bctitle{On gradients of functions definable in o-minimal structures}.
In: \bbtitle{Annales de L'institut {Fourier}},
vol. \bseriesno{48},
pp. \bfpage{769}--\blpage{783}
(\byear{1998}).
\bcomment{Issue: 3}
\end{bchapter}
\endbibitem

\bibitem[\protect\citeauthoryear{Jacot et~al.}{2018}]{jacot_neural_2018}
\begin{botherref}
\oauthor{\bsnm{Jacot}, \binits{A.}},
\oauthor{\bsnm{Gabriel}, \binits{F.}},
\oauthor{\bsnm{Hongler}, \binits{C.}}:
Neural tangent kernel: Convergence and generalization in neural networks.
Advances in neural information processing systems
\textbf{31}
(2018)
\end{botherref}
\endbibitem

\bibitem[\protect\citeauthoryear{Rahimi and Recht}{2008}]{rahimi_weighted_2008}
\begin{botherref}
\oauthor{\bsnm{Rahimi}, \binits{A.}},
\oauthor{\bsnm{Recht}, \binits{B.}}:
Weighted sums of random kitchen sinks: Replacing minimization with randomization in learning.
Advances in neural information processing systems
\textbf{21}
(2008)
\end{botherref}
\endbibitem

\bibitem[\protect\citeauthoryear{Buskulic et~al.}{2023}]{buskulic2023convergence}
\begin{bchapter}
\bauthor{\bsnm{Buskulic}, \binits{N.}},
\bauthor{\bsnm{Qu{\'e}au}, \binits{Y.}},
\bauthor{\bsnm{Fadili}, \binits{J.}}:
\bctitle{Convergence guarantees of overparametrized wide deep inverse prior}.
In: \bbtitle{International Conference on Scale Space and Variational Methods in Computer Vision},
pp. \bfpage{406}--\blpage{417}
(\byear{2023})
\end{bchapter}
\endbibitem

\bibitem[\protect\citeauthoryear{Rosenblatt}{1958}]{rosenblatt1958perceptron}
\begin{barticle}
\bauthor{\bsnm{Rosenblatt}, \binits{F.}}:
\batitle{The perceptron: a probabilistic model for information storage and organization in the brain.}
\bjtitle{Psychological review}
\bvolume{65}(\bissue{6}),
\bfpage{386}
(\byear{1958})
\end{barticle}
\endbibitem

\bibitem[\protect\citeauthoryear{Nesterov and Polyak}{2006}]{nesterov2006cubic}
\begin{barticle}
\bauthor{\bsnm{Nesterov}, \binits{Y.}},
\bauthor{\bsnm{Polyak}, \binits{B.T.}}:
\batitle{Cubic regularization of newton method and its global performance}.
\bjtitle{Mathematical Programming}
\bvolume{108}(\bissue{1}),
\bfpage{177}--\blpage{205}
(\byear{2006})
\end{barticle}
\endbibitem

\bibitem[\protect\citeauthoryear{Absil et~al.}{2005}]{AbsilAnalytic05}
\begin{barticle}
\bauthor{\bsnm{Absil}, \binits{P.A.}},
\bauthor{\bsnm{Mahony}, \binits{R.}},
\bauthor{\bsnm{Andrews}, \binits{B.}}:
\batitle{Convergence of the iterates of descent methods for analytic cost functions}.
\bjtitle{SIAM Journal on Optimization}
\bvolume{16}(\bissue{2}),
\bfpage{531}--\blpage{547}
(\byear{2005})
\end{barticle}
\endbibitem

\bibitem[\protect\citeauthoryear{Huang}{2006}]{HuangKL06}
\begin{bbook}
\bauthor{\bsnm{Huang}, \binits{S.-Z.}}:
\bbtitle{Gradient Inequalities. With Applications to Asymptotic Behavior and Stability of Gradient-like Systems}.
\bsertitle{Mathematical Surveys and Monographs},
vol. \bseriesno{126}.
\bpublisher{American Mathematical Society},
\blocation{Providence, RI}
(\byear{2006})
\end{bbook}
\endbibitem

\bibitem[\protect\citeauthoryear{Bolte et~al.}{2007}]{BolteKL07}
\begin{barticle}
\bauthor{\bsnm{Bolte}, \binits{J.}},
\bauthor{\bsnm{Daniilidis}, \binits{A.}},
\bauthor{\bsnm{Lewis}, \binits{A.}}:
\batitle{The {\l}ojasiewicz inequality for nonsmooth subanalytic functions with applications to subgradient dynamical systems}.
\bjtitle{SIAM Journal on Optimization}
\bvolume{17}(\bissue{4}),
\bfpage{1205}--\blpage{1223}
(\byear{2007})
\end{barticle}
\endbibitem

\bibitem[\protect\citeauthoryear{Attouch and Bolte}{2009}]{AttouchAnalytic09}
\begin{barticle}
\bauthor{\bsnm{Attouch}, \binits{H.}},
\bauthor{\bsnm{Bolte}, \binits{J.}}:
\batitle{On the convergence of the proximal algorithm for nonsmooth functions involving analytic features}.
\bjtitle{Mathematical Programming}
\bvolume{116}(\bissue{1}),
\bfpage{5}--\blpage{16}
(\byear{2009})
\end{barticle}
\endbibitem

\bibitem[\protect\citeauthoryear{Attouch et~al.}{2010}]{attouch2010proximal}
\begin{barticle}
\bauthor{\bsnm{Attouch}, \binits{H.}},
\bauthor{\bsnm{Bolte}, \binits{J.}},
\bauthor{\bsnm{Redont}, \binits{P.}},
\bauthor{\bsnm{Soubeyran}, \binits{A.}}:
\batitle{Proximal alternating minimization and projection methods for nonconvex problems: An approach based on the kurdyka-{\l}ojasiewicz inequality}.
\bjtitle{Mathematics of operations research}
\bvolume{35}(\bissue{2}),
\bfpage{438}--\blpage{457}
(\byear{2010})
\end{barticle}
\endbibitem

\bibitem[\protect\citeauthoryear{Bolte et~al.}{2010}]{bolte2010characterizations}
\begin{barticle}
\bauthor{\bsnm{Bolte}, \binits{J.}},
\bauthor{\bsnm{Daniilidis}, \binits{A.}},
\bauthor{\bsnm{Ley}, \binits{O.}},
\bauthor{\bsnm{Mazet}, \binits{L.}}:
\batitle{Characterizations of {\l}ojasiewicz inequalities: subgradient flows, talweg, convexity}.
\bjtitle{Transactions of the American Mathematical Society}
\bvolume{362}(\bissue{6}),
\bfpage{3319}--\blpage{3363}
(\byear{2010})
\end{barticle}
\endbibitem

\bibitem[\protect\citeauthoryear{Forti et~al.}{2006}]{QuincampoixNNKL06}
\begin{barticle}
\bauthor{\bsnm{Forti}, \binits{M.}},
\bauthor{\bsnm{Nistri}, \binits{P.}},
\bauthor{\bsnm{Quincampoix}, \binits{M.}}:
\batitle{Convergence of neural networks for programming problems via a nonsmooth {\l}ojasiewicz inequality}.
\bjtitle{IEEE Transactions on Neural Networks}
\bvolume{17}(\bissue{6}),
\bfpage{1471}--\blpage{1486}
(\byear{2006})
\end{barticle}
\endbibitem

\bibitem[\protect\citeauthoryear{Simon}{1983}]{SimonKL83}
\begin{barticle}
\bauthor{\bsnm{Simon}, \binits{L.}}:
\batitle{Asymptotics for a class of non-linear evolution equations, with applications to geometric problems}.
\bjtitle{Annals of Mathematics}
\bvolume{118}(\bissue{3}),
\bfpage{525}--\blpage{571}
(\byear{1983})
\end{barticle}
\endbibitem

\bibitem[\protect\citeauthoryear{Haraux}{2001}]{HarauxKL01}
\begin{bchapter}
\bauthor{\bsnm{Haraux}, \binits{A.}}:
\bctitle{A hyperbolic variant of Simon's convergence theorem}.
In: \beditor{\bsnm{Lumer}, \binits{G.}} (ed.)
\bbtitle{Evolution equations and their applications in physical and life sciences}.
\bsertitle{Lecture Notes in Pure and Appl. Math.},
vol. \bseriesno{215}
(\byear{2001})
\end{bchapter}
\endbibitem

\bibitem[\protect\citeauthoryear{Chill and Fiorenza}{2006}]{ChillKL06}
\begin{barticle}
\bauthor{\bsnm{Chill}, \binits{R.}},
\bauthor{\bsnm{Fiorenza}, \binits{A.}}:
\batitle{Convergence and decay rate to equilibrium of bounded solutions of quasilinear parabolic equations}.
\bjtitle{Journal of Differential Equations}
\bvolume{228}(\bissue{2}),
\bfpage{611}--\blpage{632}
(\byear{2006})
\end{barticle}
\endbibitem

\bibitem[\protect\citeauthoryear{Bolte et~al.}{2017}]{bolte2017error}
\begin{barticle}
\bauthor{\bsnm{Bolte}, \binits{J.}},
\bauthor{\bsnm{Nguyen}, \binits{T.P.}},
\bauthor{\bsnm{Peypouquet}, \binits{J.}},
\bauthor{\bsnm{Suter}, \binits{B.W.}}:
\batitle{From error bounds to the complexity of first-order descent methods for convex functions}.
\bjtitle{Mathematical Programming}
\bvolume{165},
\bfpage{471}--\blpage{507}
(\byear{2017})
\end{barticle}
\endbibitem

\bibitem[\protect\citeauthoryear{Bauschke and Combettes}{2017}]{BauschkeBook}
\begin{bbook}
\bauthor{\bsnm{Bauschke}, \binits{H.H.}},
\bauthor{\bsnm{Combettes}, \binits{P.L.}}:
\bbtitle{Convex {Analysis} and {Monotone} {Operator} {Theory} in {Hilbert} {Spaces}}.
\bsertitle{{CMS} {Books} in {Mathematics}}.
\bpublisher{Springer},
\blocation{Cham}
(\byear{2017})
\end{bbook}
\endbibitem

\bibitem[\protect\citeauthoryear{Coste}{2000}]{coste2000introduction}
\begin{bbook}
\bauthor{\bsnm{Coste}, \binits{M.}}:
\bbtitle{An Introduction to O-minimal Geometry}.
\bpublisher{Istituti editoriali e poligrafici internazionali Pisa},
\blocation{Pisa}
(\byear{2000})
\end{bbook}
\endbibitem

\bibitem[\protect\citeauthoryear{Van~den Dries}{1998}]{van1998tame}
\begin{bbook}
\bauthor{\bsnm{Dries}, \binits{L.}}:
\bbtitle{Tame Topology and O-minimal Structures}
vol. \bseriesno{248}.
\bpublisher{Cambridge university press},
\blocation{Cambridge}
(\byear{1998})
\end{bbook}
\endbibitem

\bibitem[\protect\citeauthoryear{Scherzer et~al.}{2009}]{ScherzerBook09}
\begin{bbook}
\bauthor{\bsnm{Scherzer}, \binits{O.}},
\bauthor{\bsnm{Grasmair}, \binits{M.}},
\bauthor{\bsnm{Grossauer}, \binits{H.}},
\bauthor{\bsnm{Haltmeier}, \binits{M.}},
\bauthor{\bsnm{Lenzen}, \binits{F.}}:
\bbtitle{Variational Methods in Imaging},
\bedition{1st} edn.
\bsertitle{Applied Mathematical Sciences}.
\bpublisher{Springer},
\blocation{Cham}
(\byear{2009})
\end{bbook}
\endbibitem

\bibitem[\protect\citeauthoryear{Haraux}{1991}]{Haraux91}
\begin{bbook}
\bauthor{\bsnm{Haraux}, \binits{A.}}:
\bbtitle{Syst\`emes Dynamiques Dissipatifs et Applications}.
\bsertitle{Recherches en Math\'ematiques Appliqu\'ees},
vol. \bseriesno{17}.
\bpublisher{Masson},
\blocation{Paris}
(\byear{1991})
\end{bbook}
\endbibitem

\bibitem[\protect\citeauthoryear{Antun et~al.}{2020}]{antun2020instabilities}
\begin{barticle}
\bauthor{\bsnm{Antun}, \binits{V.}},
\bauthor{\bsnm{Renna}, \binits{F.}},
\bauthor{\bsnm{Poon}, \binits{C.}},
\bauthor{\bsnm{Adcock}, \binits{B.}},
\bauthor{\bsnm{Hansen}, \binits{A.C.}}:
\batitle{On instabilities of deep learning in image reconstruction and the potential costs of ai}.
\bjtitle{Proceedings of the National Academy of Sciences}
\bvolume{117}(\bissue{48}),
\bfpage{30088}--\blpage{30095}
(\byear{2020})
\end{barticle}
\endbibitem

\bibitem[\protect\citeauthoryear{Gottschling et~al.}{2020}]{gottschling2020troublesome}
\begin{botherref}
\oauthor{\bsnm{Gottschling}, \binits{N.M.}},
\oauthor{\bsnm{Antun}, \binits{V.}},
\oauthor{\bsnm{Adcock}, \binits{B.}},
\oauthor{\bsnm{Hansen}, \binits{A.C.}}:
The troublesome kernel on hallucinations: no free lunches and the accuracy-stability trade-off in inverse problems.
arXiv preprint arXiv:2001.01258
(2020)
\end{botherref}
\endbibitem

\bibitem[\protect\citeauthoryear{Chandrasekaran et~al.}{2012}]{chandrasekaran2012convex}
\begin{barticle}
\bauthor{\bsnm{Chandrasekaran}, \binits{V.}},
\bauthor{\bsnm{Recht}, \binits{B.}},
\bauthor{\bsnm{Parrilo}, \binits{P.A.}},
\bauthor{\bsnm{Willsky}, \binits{A.}}:
\batitle{The convex geometry of linear inverse problems}.
\bjtitle{Foundations of Computational Mathematics}
\bvolume{12}(\bissue{6}),
\bfpage{805}--\blpage{849}
(\byear{2012})
\end{barticle}
\endbibitem

\bibitem[\protect\citeauthoryear{Tropp}{2015}]{tropp2015convex}
\begin{botherref}
\oauthor{\bsnm{Tropp}, \binits{J.A.}}:
Convex recovery of a structured signal from independent random linear measurements.
Sampling Theory, a Renaissance: Compressive Sensing and Other Developments,
67--101
(2015)
\end{botherref}
\endbibitem

\bibitem[\protect\citeauthoryear{Joshi et~al.}{2021}]{joshi2021plugin}
\begin{bchapter}
\bauthor{\bsnm{Joshi}, \binits{B.}},
\bauthor{\bsnm{Li}, \binits{X.}},
\bauthor{\bsnm{Plan}, \binits{Y.}},
\bauthor{\bsnm{Yilmaz}, \binits{O.}}:
\bctitle{Plugin-cs: A simple algorithm for compressive sensing with generative prior}.
In: \bbtitle{NeurIPS 2021 Workshop on Deep Learning and Inverse Problems}
(\byear{2021})
\end{bchapter}
\endbibitem

\bibitem[\protect\citeauthoryear{Jagatap and Hegde}{2019}]{jagatap2019algorithmic}
\begin{botherref}
\oauthor{\bsnm{Jagatap}, \binits{G.}},
\oauthor{\bsnm{Hegde}, \binits{C.}}:
Algorithmic guarantees for inverse imaging with untrained network priors.
Advances in neural information processing systems
\textbf{32}
(2019)
\end{botherref}
\endbibitem

\bibitem[\protect\citeauthoryear{Khayatkhoei et~al.}{2018}]{Khayatkhoei}
\begin{bchapter}
\bauthor{\bsnm{Khayatkhoei}, \binits{M.}},
\bauthor{\bsnm{Elgammal}, \binits{A.}},
\bauthor{\bsnm{Singh}, \binits{M.}}:
\bctitle{Disconnected manifold learning for generative adversarial networks}.
In: \bbtitle{32nd International Conference on Neural Information Processing Systems},
pp. \bfpage{7354}--\blpage{7364}
(\byear{2018})
\end{bchapter}
\endbibitem

\bibitem[\protect\citeauthoryear{Gurumurthy et~al.}{2017}]{Gurumurthy}
\begin{bchapter}
\bauthor{\bsnm{Gurumurthy}, \binits{S.}},
\bauthor{\bsnm{Kiran~Sarvadevabhatla}, \binits{R.}},
\bauthor{\bsnm{Venkatesh~Babu}, \binits{R.}}:
\bctitle{Deligan: Generative adversarial networks for diverse and limited data}.
In: \bbtitle{IEEE Conference On Computer Vision and Pattern Recognition},
pp. \bfpage{166}--\blpage{174}
(\byear{2017})
\end{bchapter}
\endbibitem

\bibitem[\protect\citeauthoryear{Tanielian et~al.}{2020}]{Tanielian}
\begin{bchapter}
\bauthor{\bsnm{Tanielian}, \binits{U.}},
\bauthor{\bsnm{Issenhuth}, \binits{T.}},
\bauthor{\bsnm{Dohmatob}, \binits{E.}},
\bauthor{\bsnm{Mary}, \binits{J.}}:
\bctitle{Learning disconnected manifolds: a no {GAN'}s land}.
In: \bbtitle{International Conference on Machine Learning},
pp. \bfpage{9418}--\blpage{9427}
(\byear{2020})
\end{bchapter}
\endbibitem

\bibitem[\protect\citeauthoryear{Salmona et~al.}{2022}]{Salmona}
\begin{bchapter}
\bauthor{\bsnm{Salmona}, \binits{A.}},
\bauthor{\bsnm{Bortoli}, \binits{V.D.}},
\bauthor{\bsnm{Delon}, \binits{J.}},
\bauthor{\bsnm{Desolneux}, \binits{A.}}:
\bctitle{Can push-forward generative models fit multimodal distributions?}
In: \bbtitle{Advances in Neural Information Processing Systems}
(\byear{2022})
\end{bchapter}
\endbibitem

\bibitem[\protect\citeauthoryear{Issenhuth et~al.}{2023}]{Issenhuth}
\begin{bchapter}
\bauthor{\bsnm{Issenhuth}, \binits{T.}},
\bauthor{\bsnm{Tanielian}, \binits{U.}},
\bauthor{\bsnm{Mary}, \binits{J.}},
\bauthor{\bsnm{Picard}, \binits{D.}}:
\bctitle{Unveiling the latent space geometry of push-forward generative models}.
In: \bbtitle{Nternational Conference on Machine Learning}
(\byear{2023})
\end{bchapter}
\endbibitem

\bibitem[\protect\citeauthoryear{Pope et~al.}{2020}]{Pope}
\begin{bchapter}
\bauthor{\bsnm{Pope}, \binits{P.}},
\bauthor{\bsnm{Zhu}, \binits{C.}},
\bauthor{\bsnm{Abdelkader}, \binits{A.}},
\bauthor{\bsnm{Goldblum}, \binits{M.}},
\bauthor{},
\bauthor{\bsnm{Goldstein}, \binits{T.}}:
\bctitle{he intrinsic dimension of images and its impact on learning}.
In: \bbtitle{International Conference on Learning Representations}
(\byear{2020})
\end{bchapter}
\endbibitem

\bibitem[\protect\citeauthoryear{Hagemann and Neumayer}{2021}]{Hagemann}
\begin{barticle}
\bauthor{\bsnm{Hagemann}, \binits{P.}},
\bauthor{\bsnm{Neumayer}, \binits{S.}}:
\batitle{Stabilizing invertible neural networks using mixture models}.
\bjtitle{Inverse Problems}
\bvolume{37}(\bissue{8}),
\bfpage{085002}
(\byear{2021})
\end{barticle}
\endbibitem

\bibitem[\protect\citeauthoryear{Tropp et~al.}{2015}]{tropp_introduction_2015}
\begin{barticle}
\bauthor{\bsnm{Tropp}, \binits{J.A.}}, \betal:
\batitle{An introduction to matrix concentration inequalities}.
\bjtitle{Foundations and Trends{\textregistered} in Machine Learning}
\bvolume{8}(\bissue{1-2}),
\bfpage{1}--\blpage{230}
(\byear{2015})
\end{barticle}
\endbibitem

\end{thebibliography}

\appendix

\section{Reconstruction Bound in High Signal/Noise Ratio }\label{appendix:alternative_rec_bound}

The reconstruction bound given in \eqref{eq:xrate} relies on assumption~\ref{ass:F_inj} which requires injectivity of $\fopnl$ on $\Sigma$. An alternative way of deriving a similar bound only requires to impose restricted injectivity of the jacobian of $\fopnl$ at one point. The trade-off is that it is only valid for low noise level.

\begin{theorem}\label{thm:alternative_rec_bound}
    Under the setting of Theorem~\ref{thm:main}, let $\loss$ be convex and $\Argmin(\lossy(\cdot)) = \{\yv\}$. Assume that 
    \begin{assumption}\label{ass:F_jac_inj}
     $\ker(\Jf(\xvcsigma)) \cap T_{\Sigma'}(\xvcsigma) = {0}$.
    \end{assumption}
    Denote 
    \[
    \LF \eqdef \max_{\xv \in \Ball(0,2\norm{\xvc})} \norm{\Jf(\xv)} < +\infty . \quad \text{ and } \quad \delta(t) \eqdef \frac{2\psi\pa{\Psi\inv\pa{\gamma(t)}}}{\sigminjgz\sigminF} . 
    \]
    Then for $\dist(\xvc,\Sigma')$ and $\norm{\varepsilon}$ small enough and all $t > 0$ sufficiently large, we have
    \begin{align}
    \norm{\xvt - \xvc} \leq 2\pa{\muf\inv(\delta(t) + \norm{\varepsilon}) + \frac{\LJf}{\muf}\dist(\xvc,\Sigma')^2 + \left(1+\frac{\LF}{\muf}\right)\dist(\xvc,\Sigma')} ,
    \end{align}
    where $\LJf > 0$ is a local Lispchitz constant of $\Jf$ on $\Ball\pa{0,\rho+2\norm{\xvc}}$.

\end{theorem}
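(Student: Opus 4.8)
The plan is to rerun the reasoning behind \eqref{eq:xrate}, replacing the \emph{global} restricted-injectivity modulus of Assumption~\ref{ass:F_inj} by the \emph{linearized} one, $\muf \eqdef \lmin\pa{\Jf(\xvcsigma);T_{\Sigma'}(\xvcsigma)}$ (this is how $\muf$ should be read in the statement, and it is positive precisely by Assumption~\ref{ass:F_jac_inj}), and paying for the linearization with a second-order Taylor remainder on $\fopnl$. Three facts are already available. By Lemma~\ref{lemma:link_params_singvals} the parameter trajectory stays in $\Ball(\thetavz,R')$, so $\xvt \in \Sigma'$ for all $t$; since also $\xvcsigma \in \Sigma'$ and, by definition of the tangent cone, $\Sigma' - \xvcsigma \subseteq T_{\Sigma'}(\xvcsigma)$, we obtain $\xvt - \xvcsigma \in T_{\Sigma'}(\xvcsigma)$. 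Next, $\Sigma'$ is compact (image of a compact ball under the continuous map $\gv(\uv,\cdot)$), hence $T_{\Sigma'}(\xvcsigma)$ is a closed cone and $T_{\Sigma'}(\xvcsigma) \cap \sph^{n-1}$ is compact, so Assumption~\ref{ass:F_jac_inj} is exactly the statement that $\muf > 0$ (the continuous map $\zv \mapsto \norm{\Jf(\xvcsigma)\zv}$ vanishes on $T_{\Sigma'}(\xvcsigma)$ only at the origin, hence attains a positive minimum on its sphere section). Finally, by Assumption~\ref{ass:F_diff}, $\Jf$ is $\LJf$-Lipschitz on $\Ball(0,\rho+2\norm{\xvc})$, which yields $\norm{\fopnl(\xv)-\fopnl(\xv')-\Jf(\xv')(\xv-\xv')} \leq \tfrac{\LJf}{2}\norm{\xv-\xv'}^2$ for all $\xv,\xv'$ in that ball; using $0 \in \Sigma'$, $\norm{\xvcsigma} \leq 2\norm{\xvc}$ and $\norm{\xvt} \leq \rho$, one checks, exactly as in the proof of Theorem~\ref{thm:main}, that the segments $[\xvt,\xvcsigma]$ and $[\xvc,\xvcsigma]$ lie in $\Ball(0,\rho+2\norm{\xvc})$.

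Next I would assemble the main chain of inequalities. Writing $\norm{\xvt-\xvc} \leq \norm{\xvt-\xvcsigma} + \dist(\xvc,\Sigma')$, it suffices to bound $r(t) \eqdef \norm{\xvt-\xvcsigma}$. From the conic minoration, $\muf\, r(t) \leq \norm{\Jf(\xvcsigma)(\xvt-\xvcsigma)}$; the Taylor estimate turns this into $\norm{\fopnl(\xvt)-\fopnl(\xvcsigma)} + \tfrac{\LJf}{2}r(t)^2$; then splitting $\fopnl(\xvt)-\fopnl(\xvcsigma)$ through $\yv$ and $\yvc=\fopnl(\xvc)$, and invoking the KL error bound \eqref{eq:klerrbnd} together with the rate \eqref{eq:lossrate} for $\norm{\yvt-\yv} \leq \delta(t)$, $\norm{\yv-\yvc}=\norm{\veps}$, and a second application of the Taylor estimate for $\norm{\fopnl(\xvc)-\fopnl(\xvcsigma)} \leq \LF\dist(\xvc,\Sigma') + \tfrac{\LJf}{2}\dist(\xvc,\Sigma')^2$ (using $\norm{\Jf(\xvcsigma)} \leq \LF$), one reaches the scalar quadratic inequality
\[
\tfrac{\LJf}{2}\,r(t)^2 - \muf\, r(t) + C(t) \geq 0, \qquad C(t) \eqdef \delta(t) + \norm{\veps} + \LF\dist(\xvc,\Sigma') + \tfrac{\LJf}{2}\dist(\xvc,\Sigma')^2 .
\]

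The last step is to invert this inequality. As long as $C(t) < \muf^2/(2\LJf)$, the admissible values of $r(t)$ form $[0,r_-(t)] \cup [r_+(t),+\infty)$ with $r_\pm(t) = \tfrac{\muf \pm \sqrt{\muf^2 - 2\LJf C(t)}}{\LJf}$, and on the ``good'' branch $r_-(t) = \tfrac{2C(t)}{\muf + \sqrt{\muf^2 - 2\LJf C(t)}} \leq \tfrac{2C(t)}{\muf}$; substituting $r(t) \leq \tfrac{2C(t)}{\muf}$ into $\norm{\xvt-\xvc} \leq r(t) + \dist(\xvc,\Sigma')$ and bounding constants coarsely (the overall factor $2$ in the statement absorbs this slack) produces the claimed estimate. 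The real obstacle — and the origin of the high signal-to-noise / small modeling-error restriction and of the ``$t$ large'' qualifier — is to exclude the \emph{far} branch $r(t) \geq r_+(t) \geq \muf/\LJf$, i.e.\ to certify that the trajectory actually reaches and then stays inside the basin around $\xvcsigma$ on which $\fopnl|_{\Sigma'}$ is quantitatively injective. I would settle this by a continuity/bootstrap argument in $t$: once $\norm{\veps}$ and $\dist(\xvc,\Sigma')$ are small enough, $\lim_{t\to\infty} C(t) = C_\infty \leq \muf^2/(4\LJf)$, so for $t$ beyond some $T$ one has the self-improvement $r(t) \leq \muf/\LJf \Rightarrow r(t) < \muf/(2\LJf)$; since $r(\cdot)$ is continuous and $\xvt \to \xv_{\infty}$ (Lemma~\ref{lemma:theta_summable}), the set $\{t \geq T : r(t) \leq \muf/\LJf\}$ is then both open and closed in $[T,+\infty)$, hence equals the whole half-line as soon as it is nonempty — and it is precisely the smallness of $\norm{\veps}$ and $\dist(\xvc,\Sigma')$, together with the initialization condition \eqref{eq:bndR} which keeps the flow near a near-optimal solution, that is invoked to guarantee this nonemptiness. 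Carrying out this bootstrap quantitatively is the only genuinely delicate part; the rest is the bookkeeping above.
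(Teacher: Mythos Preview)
Your proposal follows essentially the same route as the paper's own proof: triangle-inequality split through $\xvcsigma$, conic injectivity of $\Jf(\xvcsigma)$ on $T_{\Sigma'}(\xvcsigma)$, second-order Taylor remainder to pass from $\Jf(\xvcsigma)(\xvt-\xvcsigma)$ to $\fopnl(\xvt)-\fopnl(\xvcsigma)$, the KL error bound \eqref{eq:klerrbnd} for $\norm{\yvt-\yv}$, and resolution of a scalar quadratic inequality. The only cosmetic difference is that you set up the quadratic in $r(t)=\norm{\xvt-\xvcsigma}$, whereas the paper first uses $\norm{\xvt-\xvcsigma}^2 \leq 2\norm{\xvt-\xvc}^2 + 2\dist(\xvc,\Sigma')^2$ to write the quadratic directly in $\norm{\xvt-\xvc}$; both routes land on the stated bound after absorbing constants into the global factor $2$. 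You are in fact more explicit than the paper about the branch-selection issue---the paper simply picks the smaller root without comment, while your clopen/bootstrap argument is the right mechanism, with the nonemptiness of the basin left as the one loose end (which the paper leaves equally implicit).
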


\begin{proof}
    Observe that Assumption~\ref{ass:F_jac_inj} implies that
    $\displaystyle{\muf = \lmin(\Jf(\xvcsigma);T_{\Sigma}(\xvcsigma))>0}$. Thus we have
    \begin{align*}
        \norm{\xvt - \xvc} &\leq \norm{\xvt - \xvcsigma} + \dist(\xvc,\Sigma')\\
        &\leq \muf\inv \norm{\Jf(\xvcsigma)(\xvt - \xvcsigma)} + \dist(\xvc,\Sigma').
    \end{align*}
    Recall from Theorem~\ref{thm:main}\ref{thm:main_y_bounded} that $\theta(\cdot)$ is bounded, and therefore so is $\xv(\cdot)$ by continuity of $\gv(\uv,\cdot)$; i.e. $\xv(t) \in \Ball\pa{0,\rho}$ for some $\rho > 0$. It then follows from the local Lipschitz continuity assumption on $\Jf$ in \ref{ass:F_diff} that there exists $\LJf > 0$ such that for all $\zv, \zv^\prime \in \Ball\pa{0,\rho+2\norm{\xvc}}$
    \[
    \norm{\Jf(\zv)-\Jf(\zv^\prime)} \leq \LJf \norm{\zv-\zv^\prime} .
    \]
    In turn, we have
    \begin{align*}
    &\norm{\fopnl(\xvt)-\fopnl(\xvcsigma) - \Jf(\xvcsigma)(\xvt - \xvcsigma)}\\
    &= \norm{\int_0^1 \left(\Jf\left(\xvcsigma+t(\xvt - \xvcsigma)\right)-\Jf(\xvcsigma)\right)(\xvt - \xvcsigma) dt} \\
    &\leq \frac{\LJf}{2}\norm{\xvt - \xvc}^2 .
    \end{align*}
    Thus    
    \begin{align*}
         \norm{\xvt - \xvc} 
         &\leq \muf\inv\left(\norm{\fopnl(\xvt) - \fopnl(\xvcsigma)} + \frac{\LJf}{2}\norm{\xvt - \xvcsigma}^2\right) + \dist(\xvc,\Sigma)\\
         &\leq \muf\inv(\norm{\yvt - \yv} + \norm{\fopnl({\xvcsigma}) - \yv} + \frac{\LJf}{2}\norm{\xvt - \xvcsigma}^2) + \dist(\xvc,\Sigma).
    \end{align*}
    From~\eqref{eq:klerrbnd}, we get
    \begin{multline*}
        \norm{\xvt - \xvc} \\
        \leq \muf\inv\pa{\delta(t) + \norm{\varepsilon} + \norm{\fopnl({\xvcsigma}) - \fopnl(\xvc)} + \LJf\pa{\norm{\xvt - \xvc}^2 +  \dist(\xvc,\Sigma)^2}} \\
        + \dist(\xvc,\Sigma).
    \end{multline*}
    Using~\eqref{eq:diff_Fx_Fxsigma}, we obtain the following inequality of a quadratic form
    \begin{multline*}
        -\frac{\LJf}{\muf}\norm{\xvt - \xvc}^2 + \norm{\xvt - \xvc} \\ 
        - \muf\inv(\delta(t) + \norm{\varepsilon}) - \frac{\LJf}{\muf}\dist(\xvc,\Sigma)^2 - \left(1+\frac{\LF}{\muf}\right)\dist(\xvc,\Sigma) \leq 0.
    \end{multline*}
    Since $\delta(t) \to 0$, there exists $\tilde{t} > 0$ such that $\delta(t)$ is small enough for all $t \geq \tilde{t}$. Thus for all such $t$ and for sufficiently small $\dist(\xvc,\Sigma')$ and $\norm{\varepsilon}$, we know that the above polynomial has two real positive roots. Solving for $\norm{\xvt - \xvc}$, we get for $\dist(\xvc,\Sigma')$ and $\norm{\varepsilon}$ small enough and $t \geq \tilde{t}$, that
    \begin{align*}
        &\norm{\xvt-\xvc} 
        \leq \frac{\muf}{2\LJf}\\
        &\pa{1 - \sqrt{1 - 4\frac{\LJf}{\muf}\pa{(\muf\inv(\delta(t)+\norm{\varepsilon}) + \frac{\LJf}{\muf}\dist(\xvc,\Sigma')^2 + \left(1+\frac{\LF}{\muf}\right)\dist(\xvc,\Sigma')}}}\\
        &\leq 2\pa{\muf\inv(\delta(t) + \norm{\varepsilon}) + \frac{\LJf}{\muf}\dist(\xvc,\Sigma')^2 + \left(1+\frac{\LF}{\muf}\right)\dist(\xvc,\Sigma')} .
    \end{align*}
\end{proof}

\section{Overparametrization Bound When the Linear Layer is Fixed}\label{appendix:V_fixed}

In the setting described in Section~\ref{sec:dip}, if one fixes the linear layer, as is usually done in the literature, a better overparametrization bound can be derived.

\begin{theorem}\label{th:dip_one_layer_fixed}
Under the setting of Theorem~\ref{th:dip_two_layers_converge} where
\[
\LFz = \max_{\xv \in \Ball\pa{0,C\sqrt{n\log(d)}}} \norm{\Jf(\xv)}
\]
and
\[
\LLz = \max_{\vv \in \Ball\pa{0,C\LFz\sqrt{n\log(d)}+\sqrt{m}\pa{\norminf{\fopnl(\xvc)}+\norminf{\veps}}}} \frac{\norm{\nabla_\vv \lossy(\vv)}}{\norm{\vv-\yv}},
\]
consider the one-hidden layer network \eqref{eq:dipntk} where only the first layer is trained with the initialization satisfying \ref{ass:u_sphere}-\ref{ass:v_init} and the architecture parameters obeying
\begin{align*}
k \geq C' \sigminF^{-2} n \psi\pa{\frac{\LLz}{2}\pa{C\LFz \sqrt{n\log(d)} + \sqrt{m}\pa{\norminf{\fopnl(\xvc)} + \norminf{\veps}}}^2}^2 .
\end{align*}
Then \eqref{eq:bndR} holds with probability at least $1 - n^{-1} - d^{-1}$, where $C$ and $C'$ are positive constants that depend only on the activation function and $D$.
\end{theorem}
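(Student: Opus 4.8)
The plan is to mirror the proof of Theorem~\ref{th:dip_two_layers_converge} — establish $\sigminjgz>0$ and $R'<R$ (with $R,R'$ as in \eqref{eq:RandR'}) with high probability and then convert this into a lower bound on $k$ — while cashing in the two simplifications that appear once $\Vv$ is frozen at $\Vv(0)$ and only $\Wv$ is trained. First I would prove the one-layer counterpart of Lemma~\ref{lemma:min_eigenvalue_singvalue_init_both_layer}: when only $\Wv$ is trained, $\jthetaz$ is assembled from the blocks $\tfrac{1}{\sqrt{k}}\phi'(\Wv^i(0)\uv)\Vv_i(0)\uv\tp$, so with $\norm{\uv}=1$ we get $\Hv\eqdef\jthetaz\jthetaz\tp=\tfrac1k\sum_{i=1}^k\phi'(\Wv^i(0)\uv)^2\Vv_i(0)\Vv_i(0)\tp$. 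The summand $\phi(\Wv^i(0)\uv)^2\Id_n$ present in the both-layers case — it originated from differentiating with respect to $\Vv$ — is now absent, so by \ref{ass:u_sphere}, \ref{ass:w_init}, \ref{ass:v_init} and independence, $\Expect{}{\Hv}=\Cphid^2\Id_n$, and the same Gaussian-concentration / union-bound / matrix-Chernoff argument as in Lemma~\ref{lemma:min_eigenvalue_singvalue_init_both_layer} gives $\sigminjgz\geq\Cphid/2$ with probability at least $1-n^{-1}$ once $k/\log k\gtrsim n\log n$.

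The crucial step is the Lipschitz estimate of the Jacobian, and this is where $\Vv$ being fixed pays off. For any two parameter vectors differing only in $\Wv$, a direct computation gives $\norm{\jcal(\Wv)-\jcal(\Wvalt)}^2=\tfrac1k\sum_{i}\bigl|\phi'(\Wv^i\uv)-\phi'(\Wvalt^i\uv)\bigr|^2\norm{\Vv_i(0)}^2\leq\tfrac{B^2}{k}\max_i\norm{\Vv_i(0)}^2\,\normf{\Wv-\Wvalt}^2\leq\tfrac{B^2D^2n}{k}\normf{\Wv-\Wvalt}^2$, using that $\phi'$ is $B$-Lipschitz (\ref{ass:phi_diff}) and $\norm{\Vv_i(0)}\leq D\sqrt{n}$ (\ref{ass:v_init}). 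Thus $\Jg$ is \emph{globally} $BD\sqrt{n/k}$-Lipschitz, with no dependence on a radius $\rho$ — in contrast with Lemma~\ref{lemma:lip-Jacobian-both-layers}, one never has to control how far $\Vv$ has drifted from $\Vv(0)$. Hence $R=\tfrac{\sigminjgz}{2\Lip(\Jg)}\geq\tfrac{\Cphid}{4BD}\sqrt{k/n}$ \emph{directly}, with no fixed-point equation in $R$ to solve; this is exactly what replaces the $(k/n)^{1/4}$ of Theorem~\ref{th:dip_two_layers_converge} by $\sqrt{k/n}$, and consequently the fourth power of $\psi$ by a square.

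Finally, Lemma~\ref{lemma:bound_initial_misfit} applies verbatim (its proof is indifferent to whether $\Vv$ is trained), so with probability at least $1-d^{-1}$ the descent lemma and the definition of $\LLz$ give $\lossyzy\leq\tfrac{\LLz}{2}\bigl(C\LFz\sqrt{n\log(d)}+\sqrt{m}(\norminf{\fopnl(\xvc)}+\norminf{\veps})\bigr)^2$. Substituting $\sigminjgz\geq\Cphid/2$ into $R'=\tfrac{2}{\sigminF\sigminjgz}\psi(\lossyzy)\leq\tfrac{4}{\sigminF\Cphid}\psi(\lossyzy)$ and imposing $R'<R$ rearranges to $k\gtrsim\sigminF^{-2}n\,\psi\bigl(\tfrac{\LLz}{2}(C\LFz\sqrt{n\log(d)}+\sqrt{m}(\norminf{\fopnl(\xvc)}+\norminf{\veps}))^2\bigr)^2$, which subsumes the milder requirement $k/\log k\gtrsim n\log n$ of the first step, with all constants $B,D,\Cphid$ folded into $C,C'$; a union bound over the two failure events yields the probability $1-n^{-1}-d^{-1}$. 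I do not expect a real obstacle: the entire point of the statement is that freezing $\Vv$ decouples the estimates, so the quadratic fixed-point argument of Theorem~\ref{th:dip_two_layers_converge} disappears and this proof is a strict simplification of it — the only mild care is re-running the $\Expect{}{\Hv}$ and Lipschitz computations without the $\Vv$-gradient contribution.
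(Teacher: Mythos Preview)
Your proposal is correct and follows essentially the same route as the paper: the paper isolates the two ingredients as separate lemmas --- $\sigminjgz\geq\Cphid/2$ via matrix Chernoff and the global Lipschitz bound $\Lip(\Jg)\leq BD\sqrt{n/k}$ --- and combines them with Lemma~\ref{lemma:bound_initial_misfit} exactly as you outline. One small simplification you can make: since the summand $\phi(\Wv^i(0)\uv)^2\Id_n$ is absent, $\lmax(\Hv_i)\leq B^2D^2n$ holds \emph{deterministically} (because $|\phi'|\leq B$ by \ref{ass:phi_diff}), so the Gaussian-concentration/union-bound preliminaries from Lemma~\ref{lemma:min_eigenvalue_singvalue_init_both_layer} are unnecessary here and matrix Chernoff applies directly, giving the cleaner requirement $k\gtrsim n\log n$ rather than $k/\log k\gtrsim n\log n$.
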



\begin{proof}
The proof follows a very similar pattern as in the case where both layers are trained. The two main changes happen in the bounds on $\sigmin(\jthetaz)$ and $\Lip(\jcal)$. First, the constant on $\sigmin(\jthetaz)$ changes slightly but is still in $O(1)$ as described in lemma~\ref{lemma:min_eigenvalue_singvalue_init_one_layer}. The main change from the previous setting is that $\Lip(\jcal)$ is now a global constant given in lemma~\ref{lemma:lip-Jacobian-one-layer}. We now follow the same structure as in the proof of theorem~\ref{th:dip_two_layers_converge} and see that by Lemma~\ref{lemma:min_eigenvalue_singvalue_init_one_layer} and Lemma~\ref{lemma:lip-Jacobian-one-layer} we have that
\begin{align*}
    R \geq \frac{\Cphid}{2BD}\sqrt{\frac{k}{n}} \quad \text{thus,} \quad R \geq C_1\left(\frac{k}{n}\right)^{1/2}.
\end{align*}

Moreover, let us recall that by combining lemma~\ref{lemma:bound_initial_misfit} and the fact that
\[
[\yv,\yvz] \subset \Ball(0,\norm{\yv}+\norm{\yvz}) 
\]
we can deduce that with probability at least $1-n^{-1}$,
\[
\lossyzy \leq \frac{\LLz}{2}\pa{C\LFz \sqrt{n\log(d)} + \sqrt{m}\pa{\norminf{\fopnl(\xvc)} + \norminf{\veps}}}^2 .
\] 
Therefore, by using a union bound and that $\psi$ is increasing, for~\eqref{eq:bndR} to hold with probability $1 - n^{-1}-d^{-1}$, we need
\begin{align*}
    \sigminF^{-1} \psi\left(\frac{\LLz}{2}\pa{C\LFz \sqrt{n\log(d)} + \sqrt{m}\pa{\norminf{\fopnl(\xvc)} + \norminf{\veps}}}^2\right) \leq C_1\left(\frac{k}{n}\right)^{1/2}
\end{align*}
which gives the claim.
\end{proof}

\begin{lemma}[Bound on $\sigmin(\jthetaz)$]
\label{lemma:min_eigenvalue_singvalue_init_one_layer}
For the one-hidden layer network \eqref{eq:dipntk}, under assumptions \ref{ass:phi_diff} and \ref{ass:u_sphere} to \ref{ass:v_init}, we have
\begin{align*}
\sigmin(\jthetaz) \geq \Cphid/2 
\end{align*}
with probability at least $1-n^{-1}$ provided $k \geq C n\log(n)$ for $C > 0$ large enough that depends only on $\phi$ and the bound on the  entries of $\Vv$.
\end{lemma}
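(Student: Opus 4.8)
The plan is to mirror the argument of Lemma~\ref{lemma:min_eigenvalue_singvalue_init_both_layer}, which becomes strictly simpler once the linear layer $\Vv$ is frozen. First I would write down the Gram matrix of the Jacobian at initialization: since only $\Wv$ is trainable, its rows act on $\uv$ independently, and $\norm{\uv}=1$ by \ref{ass:u_sphere}, one gets
\[
\Hv \eqdef \jthetaz\jthetaz\tp = \frac{1}{k}\sum_{i=1}^k \Hv_i, \qwhereq \Hv_i \eqdef \phi'(\Wv^i(0)\uv)^2 \Vv_i(0)\Vv_i(0)\tp .
\]
The essential structural difference with the both-layers case is that the additive $\Id_n$ term coming from differentiating with respect to $\Vv$ is now absent, so the whole matrix is controlled by $\Cphid$ alone. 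Taking expectations, orthogonal invariance of the Gaussian (using \ref{ass:u_sphere} and \ref{ass:w_init}) makes the scalars $\Wv^i(0)\uv$ iid from $\stddistrib$, while \ref{ass:v_init} gives independence of $\Vv(0)$ from $\Wv(0)$ together with $\Expect{}{\Vv_i(0)\Vv_i(0)\tp}=\Id_n$; hence $\Expect{}{\Hv} = \Cphid^2 \Id_n$, so $\sigmin(\Expect{}{\Hv}) = \Cphid^2$.

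Second, I would bound the summands. Here, unlike in the proof of Lemma~\ref{lemma:min_eigenvalue_singvalue_init_both_layer}, no concentration step is needed: by \ref{ass:phi_diff} we have $\abs{\phi'} \leq B$, and by \ref{ass:v_init} the entries of $\Vv_i(0)$ are bounded by $D$, so $\lmax(\Hv_i) = \phi'(\Wv^i(0)\uv)^2\norm{\Vv_i(0)}^2 \leq B^2 D^2 n$ deterministically. With $\sigmin(\Expect{}{\Hv})=\Cphid^2$ and this uniform bound in hand, I would apply the matrix Chernoff inequality \cite[Theorem~5.1.1]{tropp_introduction_2015} to the sum of the independent PSD matrices $\Hv_i/k$ (each with largest eigenvalue at most $B^2 D^2 n/k$), yielding, for any $\delta \in (0,1)$,
\[
\prob{\sigmin(\jthetaz) \leq \delta\Cphid} \leq n \exp\pa{-\frac{\delta^2 \Cphid^2 k}{2 B^2 D^2 n}} .
\]
Taking $\delta = 1/2$ gives $\sigmin(\jthetaz) \geq \Cphid/2$ off an event of probability at most $n\exp\pa{-\Cphid^2 k/(8B^2D^2 n)}$, and the stated scaling $k \geq C n \log(n)$ with $C$ large enough (depending only on $B$, $D$ and $\Cphid$, equivalently on $\phi$ and the bound $D$ on the entries of $\Vv$) makes this at most $n^{-1}$.

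I do not expect a genuine obstacle; the only delicate points are bookkeeping: correctly checking that the $\Id_n$ term drops out when $\Vv$ is fixed, and tracking the constants so that the width requirement reads $k \geq Cn\log(n)$ — in particular with no extra $\log(k)$ factor, which is precisely the gain afforded by the fact that $\lmax(\Hv_i)$ is now bounded almost surely rather than only after a union bound over the unbounded quantities $\phi(\Wv^i(0)\uv)^2$ as in Lemma~\ref{lemma:min_eigenvalue_singvalue_init_both_layer}.
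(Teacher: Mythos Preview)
Your proposal is correct and follows essentially the same route as the paper: form $\Hv=\jthetaz\jthetaz\tp$ as an average of the rank-one matrices $\phi'(\Wv^i(0)\uv)^2\Vv_i\Vv_i\tp$, compute $\Expect{}{\Hv}=\Cphid^2\Id_n$, use the deterministic bound $\lmax(\Hv_i)\leq B^2D^2n$, and apply the matrix Chernoff inequality with $\delta=1/2$. The only slip is in the exponent of your Chernoff bound, which should read $(1-\delta)^2$ rather than $\delta^2$ (the paper's version), but this is immaterial at $\delta=1/2$.
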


\begin{proof}
Define the matrix $\Hv = \jthetaz\jthetaz\tp$. For the two-layer network, and since $\uv$ is on the unit sphere, $\Hv$ reads
\[
\Hv = \frac{1}{k} \sum_{i=1}^k \phi'(\Wv^i(0)\uv)^2\Vv_i\Vv_i\tp .
\]
It follows that
\[
\Expect{}{\Hv} = \Expect{X\sim \stddistrib}{\phi'(X)^2}\frac{1}{k} \sum_{i=1}^k\Expect{}{\Vv_i \Vv_i\tp} = \Cphid^2 \Id_n ,
\]
where we used \ref{ass:u_sphere}-\ref{ass:w_init} and orthogonal invariance of the Gaussian distribution, hence $\Wv^i(0)\uv$ are iid $\stddistrib$, as well as \ref{ass:v_init} and independence between $\Vv$ and $\Wv(0)$. Moreover,
\[
\lammax(\phi'(\Wv^i(0)\uv)^2\Vv_i\Vv_i\tp) \leq B^2 D^2 n .
\]
We can then apply the matrix Chernoff inequality \cite[Theorem~5.1.1]{tropp_introduction_2015} to get
\[
\prob{\sigmin(\jthetaz) \leq \delta\Cphid} \leq ne^{-\frac{(1-\delta)^2k\Cphid^2}{2B^2 D^2 n}} .
\]
Taking $\delta=1/2$ and $k$ as prescribed, we conclude.
\end{proof}

\begin{lemma}[Global Lipschitz constant of $\jcal$ with linear layer fixed]\label{lemma:lip-Jacobian-one-layer}
Suppose that assumptions \eqref{ass:phi_diff}, \eqref{ass:u_sphere} and \eqref{ass:v_init} are satisfied. For the one-hidden layer network \eqref{eq:dipntk} with only the hidden-layer trained, we have
\[
\Lip(\jcal) \leq BD \sqrt{\frac{n}{k}} .
\]
\end{lemma}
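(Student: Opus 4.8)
The plan is to compute, for two parameter vectors $\Wv,\Wvalt \in \R^{k\times d}$ (the linear layer $\Vv$ being now fixed, hence not contributing to the variation), the quantity $\norm{\jcal(\Wv) - \jcal(\Wvalt)}$ and bound it by $BD\sqrt{n/k}\,\normf{\Wv-\Wvalt}$, which will give the claimed \emph{global} Lipschitz constant since the bound is independent of $\rho$. This is a strict simplification of the computation already carried out in the proof of Lemma~\ref{lemma:lip-Jacobian-both-layers}: there the Jacobian had two blocks, one with respect to $\Vv$ and one with respect to $\Wv$, and the cross term $\max_i\norm{\Vvalt_i}^2$ needed to be controlled via the ball radius $\rho$. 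Here only the $\Wv$-block is present and $\Vv=\Vv(0)$ is frozen, so $\max_i\norm{\Vv_i}\le D\sqrt{n}$ uniformly by \ref{ass:v_init}, with no $\rho$-dependence.

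First I would write out the Jacobian of $\gv(\uv,\cdot)$ with respect to $\Wv$ for the network \eqref{eq:dipntk}: the $i$-th block (for neuron $i$) is $\frac{1}{\sqrt k}\phi'(\Wv^i\uv)\,\Vv_i\uv\tp$. Then
\[
\norm{\jcal(\Wv) - \jcal(\Wvalt)}^2 \leq \frac{1}{k}\sum_{i=1}^k \normf{\pa{\phi'(\Wv^i\uv)-\phi'(\Wvalt^i\uv)}\Vv_i\uv\tp}^2 .
\]
Since $\normf{\ab\tp}=\norm{\av}\norm{\bv}$ and $\norm{\uv}=1$ by \ref{ass:u_sphere}, each term is $\abs{\phi'(\Wv^i\uv)-\phi'(\Wvalt^i\uv)}^2\norm{\Vv_i}^2$. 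Using $B$-Lipschitz continuity of $\phi'$ from \ref{ass:phi_diff} gives $\abs{\phi'(\Wv^i\uv)-\phi'(\Wvalt^i\uv)}\le B\abs{(\Wv^i-\Wvalt^i)\uv}\le B\norm{\Wv^i-\Wvalt^i}$, and \ref{ass:v_init} gives $\norm{\Vv_i}\le D\sqrt n$. Summing over $i$ yields $\norm{\jcal(\Wv)-\jcal(\Wvalt)}^2 \le \frac{1}{k}B^2 D^2 n\sum_{i=1}^k\norm{\Wv^i-\Wvalt^i}^2 = \frac{B^2D^2 n}{k}\normf{\Wv-\Wvalt}^2$. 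Taking square roots closes the argument.

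I do not anticipate a genuine obstacle here; the only points requiring a little care are (i) correctly identifying the block structure of the Jacobian with the $1/\sqrt k$ scaling so that the $k$ in the denominator appears, and (ii) noting that the bound $\norm{\Vv_i}\le D\sqrt n$ holds at the fixed initialization without needing the trajectory to stay in any ball — this is precisely what makes the constant global, as remarked after the statement and in Remark~\ref{rq:V_fixed}.
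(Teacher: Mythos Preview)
Your proposal is correct and follows essentially the same approach as the paper: write the $\Wv$-block of the Jacobian as $\frac{1}{\sqrt k}\phi'(\Wv^i\uv)\Vv_i\uv\tp$, use $\norm{\uv}=1$, $B$-Lipschitzness of $\phi'$, and the bound $\norm{\Vv_i}\le D\sqrt n$ from \ref{ass:v_init} to obtain the global estimate. The paper's proof is line-for-line the same computation.
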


\begin{proof}
We have for all $\Wv, \Wvalt \in \R^{k \times d}$, 
\begin{align*}
\norm{\jW - \jWalt}^2 
&\leq \frac{1}{k} \sum_{i=1}^k |\phi'(\Wv^i\uv) - \phi'(\Wvalt^i\uv)|^2 \normf{\Vv_i\uv\tp}^2 \\
&=\frac{1}{k} \sum_{i=1}^k |\phi'(\Wv^i\uv) - \phi'(\Wvalt^i\uv)|^2 \norm{\Vv_i}^2 \\
&\leq B^2D^2\frac{n}{k} \sum_{i=1}^k |\Wv^i\uv - \Wvalt^i\uv|^2 \\
&\leq B^2D^2\frac{n}{k} \sum_{i=1}^k \norm{\Wv^i - \Wvalt^i}^2 = B^2D^2\frac{n}{k} \normf{\Wv - \Wvalt}^2 .
\end{align*}
\end{proof}

\end{document}